\newtheorem{theorem}{Theorem}
\newcommand{\xmark}{\ding{55}}
\icmltitlerunning{Very Fast Streaming Submodular Function Maximization}
\begin{document}

\twocolumn[
    \icmltitle{
        Very Fast Streaming Submodular Function Maximization
    }
    
    \begin{icmlauthorlist}
    \icmlauthor{Sebastian Buschjäger}{tu}
    \icmlauthor{Philipp Honysz}{tu}
    \icmlauthor{Lukas Pfahler}{tu}
    \icmlauthor{Katharina Morik}{tu}
    \end{icmlauthorlist}
    
    \icmlaffiliation{tu}{Artificial Intelligence Group, TU Dortmund University, Germany}
    
    \icmlcorrespondingauthor{Sebastian Buschjäger}{sebastian.buschjaeger@tu-dortmund.de}
    
    % You may provide any keywords that you
    % find helpful for describing your paper; these are used to populate
    % the "keywords" metadata in the PDF but will not be shown in the document
    \icmlkeywords{Submodular, Streaming, Online, Data Summarization}
    
    \vskip 0.3in
]

\printAffiliationsAndNotice{}

\begin{abstract}
Data summarization has become a valuable tool in understanding even terabytes of data. Due to their compelling theoretical properties, submodular functions have been the focus of summarization algorithms. Submodular function maximization is a well-studied problem with a variety of algorithms available. These algorithms usually offer worst-case guarantees to the expense of higher computation and memory requirements. However, many practical applications do not fall under this mathematical worst-case but are usually much more well-behaved. We propose a new submodular function maximization algorithm called ThreeSieves that ignores the worst-case and thus uses fewer resources. Our algorithm selects the most informative items from a data-stream on the fly and maintains a provable performance in most cases on a fixed memory budget. In an extensive evaluation, we compare our method against 6 state-of-the-art algorithms on 8 different datasets including data with and without concept drift. We show that our algorithm outperforms the current state-of-the-art in the majority of cases and, at the same time, uses fewer resources. 
We make our code publicly available at \url{https://github.com/sbuschjaeger/SubmodularStreamingMaximization}.
\end{abstract}

% ARXIV CODE LINK
% EMAIL

% no keywords

% For peer review papers, you can put extra information on the cover
% page as needed:
% \ifCLASSOPTIONpeerreview
% \begin{center} \bfseries EDICS Category: 3-BBND \end{center}
% \fi
%
% For peerreview papers, this IEEEtran command inserts a page break and
% creates the second title. It will be ignored for other modes.
% \IEEEpeerreviewmaketitle

\section{Introduction}

In recent years, submodular optimization has found its way into the toolbox of machine learning and data mining.  Submodular functions reward adding a new element to a smaller set more than adding the same element to a larger set. This makes them ideal for solving data summarization tasks \cite{mirzasoleiman/etal/2016}, active learning \cite{wei/etal/2015}, user recommendation \cite{ashkan/etal/2015}, and many other related tasks. In these tasks, the amount of data is often huge and generated in real-time. Consequently, a line of research studies streaming algorithms for maximizing a submodular function.

In this paper, we consider the problem of maximizing a submodular function over a data stream and focus on the task of data summarization. More formally, we consider the problem of selecting $K$ representative elements from a ground set $V$ into a summary set $S \subseteq V$. To do so, we maximize a non-negative, monotone submodular set function $f \colon 2^V \to \mathbb{R_+}$ which assigns a utility score to each subset:
\begin{equation}
  \label{eq:setmax}
  S^* = \underset{S \subseteq V, |S| = K}{\arg\max} f(S)
\end{equation}
For the empty set, we assume zero utility $f(\emptyset) = 0$. We denote the maximum of $f$ with $OPT = f(S^*)$. A set function can be associated with a marginal gain which represents the increase of $f(S)$ when adding an element $e \in V$ to $S$:
$$
\Delta_f(e | S) = f(S \cup \{e\}) - f(S)
$$
We call $f$ submodular iff for all $A \subseteq B \subseteq V$ and $e \in V \setminus B$ it holds that 
$$
\Delta_f(e | A) \ge \Delta_f(e | B)
$$
The function $f$ is called monotone, iff for all $e \in V$ and for all $S \subseteq V$ it holds that $\Delta_f(e | S) \ge 0$.

The maximization of a submodular set function is NP-hard \cite{Feige/1998} and therefore, a natural approach is to find an approximate solution. 
%Nemhauser et al. presented in \cite{nemhauser/1978} a simple $(1-(1/\exp (1))) \approx 63\%$ greedy approximation (denoted as Greedy in this paper) algorithm for solving (\ref{eq:setmax}) which runs in linear time and requires a fixed memory budget. Greedy offers a constant approximation guarantee and only requires $\mathcal O(K)$ memory. The disadvantage is that it requires $K$ iterations over the entire dataset, which is costly if the ground set is very large. Moreover, for streaming data it is impossible. 
%Several streaming algorithms have been proposed for maximizing Eq. \ref{eq:setmax}.
%which read each item exactly once (when $V$ is stored on disk) or process it once on arrival (a `true' streaming setting). 
Table \ref{tab:Alg-Comparison} gives an overview of streaming algorithms which have been proposed for solving Eq. \ref{eq:setmax}. To this date, the best performing online algorithms offer an $\mathcal O(\frac{1}{2} - \varepsilon)$ approximation ratio where $\varepsilon$ also influences the resource consumption. Even moderate choices for $\varepsilon$ quickly result in unmanageable resource consumption. Feldman et al. showed \cite{feldman/etal/2020} that this approximation ratio is the best possible for streaming algorithms and that any algorithm with a better worst-case approximation guarantee essentially stores all the elements of the stream (up to a polynomial factor in $K$). 

We ask whether we can design an algorithm that -- despite the negative result from Feldman et al. -- offers a \emph{better} approximation ratio using \emph{fewer} resources. Existing algorithms are designed for the mathematical \emph{worst-case} and thereby have a worst-case approximation guarantee. We note, that this worse-cast is often a pathological case in their mathematical analysis whereas practical applications are usually much more well-behaved. Thus, we propose to \emph{ignore} these pathological cases and derive an algorithm with a \emph{better} approximation guarantee in \emph{most} cases. %, while using \emph{fewer} function queries and less memory. 
Our proposed ThreeSieves algorithm estimates the probability of finding a more informative data item on the fly and only adds those items to the summary which are likely to not be `out-valued' in the future. The resulting algorithm offers a \emph{non-deterministic} approximation ratio of $(1-\varepsilon)(1-1/\exp(1)) > \frac{1}{2}-\varepsilon$ in high probability $(1-\alpha)^K$, where $\alpha$ is the desired user certainty. It performs $\mathcal O(1)$ function queries per item and requires $\mathcal O(K)$ memory. Note, that this does not contradict the upper bound of $\frac{1}{2} - \varepsilon$ since our algorithm offers a better approximation quality \emph{in high probability}, but not deterministically for \emph{all} cases.
Our contributions are the following:
\begin{itemize}
  \item The novel ThreeSieves algorithm has an approximation guarantee of $(1-\varepsilon)(1-1/\exp(1))$ in high probability. The fixed memory budget is independent of $\varepsilon$ storing at most $K$ elements and the number of function queries is just one per element.
  \item For the first time we apply submodular function maximization algorithms to data containing concept drift. We show that our ThreeSieves algorithm offers competitive performance in this setting despite its weaker theoretical guarantee. 
  \item We compare our algorithm against 6 state of the art algorithms on 8 datasets with and without concept drift. To the best of our knowledge, this is the first extensive evaluation of state-of-the-art submodular function maximization algorithms in a streaming setting. We show that ThreeSieves outperforms the current state-of-the-art in many cases while being up to $1000$ times faster using a fraction of its memory. 
%   \item With $7,675$ evaluated hyperparameter configurations, we perform -- to the best of our knowledge -- the first extensive evaluation of state-of-the-art submodular function maximization algorithms in a streaming setting with and without concept drift. We show that ThreeSieves outperforms the current state-of-the-art in many cases while being up to $1000$ times faster using a fraction of its memory. 
  %\item Our code is the first library for non-negative submodular function maximization in a streaming setting. The code is available with this submission and will be publicly available after publication.
  %\item For the first time, data summarization techniques are applied to high energy astrophysics data. We present the resulting summaries to a domain expert who gives meaningful annotations to the selected items thus showing the practical usefulness of our approach. 
\end{itemize}
The paper is organized as follows. Section \ref{sec:related-work} surveys related work, whereas in section \ref{sec:three-sieves} we present our main contribution, the ThreeSieves algorithm. Section \ref{sec:experiments-three-sieves} experimentally evaluates ThreeSieves. Section \ref{sec:conclusion} concludes the paper.

\begin{table*}[]
\caption{Algorithms for non-negative, monotone submodular maximization with cardinality constraint $K$. ThreeSieves offers the smallest memory consumption and the smallest number of queries per element in a streaming-setting. \label{tab:Alg-Comparison}}
\centering
\resizebox{\textwidth}{!}{
\begin{tabular}{@{}lcllcc@{}}
\toprule
\textbf{Algorithm} & \textbf{\begin{tabular}[c]{@{}c@{}}Approximation \\ Ratio\end{tabular}}                  & \textbf{Memory}                      & \textbf{\begin{tabular}[c]{@{}l@{}}Queries \\ per Element\end{tabular}} & \textbf{Stream} & \textbf{Ref.}                 \\ \midrule
Greedy             & $1-1/\exp(1)$                                                                                  & $\mathcal O(K)$                      & $\mathcal O(1)$                                                         & \xmark             & \cite{nemhauser/1978}          \\ 
StreamGreedy & $1/2 - \varepsilon$  & $\mathcal O \left( K \right)$ & $\mathcal O(K)$ &\xmark & \cite{gomes/krause/2010}  \\
%Random & $1/4$ (in expectation) & $\mathcal O \left( K \right)$ & $\mathcal O(1)$ &\checkmark & \cite{feige/etal/2011}  \\
PreemptionStreaming & $1/4$ & $\mathcal O \left( K \right)$ & $\mathcal O(K)$ &\checkmark & \cite{Buchbinder/etal/2015}  \\
IndependentSetImprovement & $1/4$ & $\mathcal O \left( K \right)$ & $\mathcal O(1)$ &\checkmark & \cite{Chakrabarti/Kale/2014} \\
Sieve-Streaming    & $1/2 - \varepsilon$                                                                      & $\mathcal O(K \log K / \varepsilon)$ & $\mathcal O( \log K / \varepsilon)$                                     & \checkmark         & \cite{Badanidiyuru/etal/2014} \\ 
Sieve-Streaming++  & $1/2 - \varepsilon$                                                                      & $\mathcal O( K / \varepsilon)$       & $\mathcal O( \log K / \varepsilon)$                                     & \checkmark         & \cite{Kazemi/etal/2019}        \\ 
Salsa              & $1/2 - \varepsilon$                                                                      & $\mathcal O(K \log K / \varepsilon)$ & $\mathcal O(\log K / \varepsilon)$                                      & (\checkmark)         & \cite{Norouzi-Fard/etal/2018}  \\ 
QuickStream              & $1/(4c) - \varepsilon$                                                                      & $\mathcal O(c K \log K \log \left ( 1 / \varepsilon \right) )$ & $\mathcal O( \lceil 1 /c \rceil + c )$                                      & \checkmark         & \cite{kuhnle/2021}  \\ 
ThreeSieves        & \begin{tabular}[c]{@{}c@{}}$(1-\varepsilon)(1-1/\exp(1))$ \\ with prob. $(1-\alpha)^K$\end{tabular} & $\mathcal O(K)$                      & $\mathcal O(1)$                                                         & \checkmark         & this paper                    \\ \bottomrule
\end{tabular}
}
\end{table*}

\section{Related Work}
\label{sec:related-work}
 
For a general introduction to submodular function maximization, we refer interested readers to \cite{krause/Golovin/2014} and for a more thorough introduction into the topic of streaming submodular function maximization to \cite{chekuri/etal/2015}. Most relevant to this publication are non-negative, monotone submodular streaming algorithms with cardinality constraints. There exist several algorithms for this problem setting which we survey here. The theoretical properties of each algorithm are summarized in Table \ref{tab:Alg-Comparison}. A detailed formal description including the pseudo-code of each algorithm is given in the appendix. \\ 
While not a streaming algorithm, the Greedy algorithm \cite{nemhauser/1978} forms the basis of many algorithms. It iterates $K$ times over the entire dataset and greedily selects that element with the largest marginal gain $\Delta_f(e | S)$ in each iteration. It offers a $(1-(1/\exp (1))) \approx 63\%$ approximation and stores $K$ elements. StreamGreedy \cite{gomes/krause/2010} is its adaption to streaming data. It replaces an element in the current summary if it improves the current solution by at-least $\nu$. It offers an $\frac{1}{2}-\varepsilon$ approximation with $\mathcal O(K)$ memory, where $\varepsilon$ depends on the submodular function and some user-specified parameters. The optimal approximation factor is only achieved if multiple passes over the data are allowed. Otherwise, the performance of StreamGreedy degrades arbitrarily with $K$ (see Appendix of \cite{Badanidiyuru/etal/2014} for an example). We therefore consider StreamGreedy not to be a proper streaming algorithm. \\
Similar to StreamGreedy, PremptionStreaming \cite{Buchbinder/etal/2015} compares each marginal gain against a threshold $\nu(\mathcal S)$. This time, the threshold dynamically changes depending on the current summary $\mathcal S$ which improves the overall performance. It uses constant memory and offers an approximation guarantee of $1/4$. It was later shown that this algorithm is outperformed by SieveStreaming++ (see below) and was thus not further considered our experiments. 
%Feige et al. show in \cite{feige/etal/2011} that for any non-negative submodular function a uniformly chosen random set is a $1/4$ approximation in expectation. A uniform random set over a data-stream can be obtained via Reservoir Sampling \cite{vitter/1985}.  
Chakrabarti and Kale propose in \cite{Chakrabarti/Kale/2014} a streaming algorithm also with approximation guarantee of $1/4$. Their algorithm stores the marginal gain of each element upon its arrival and uses this `weight' to measure the importance of each item. We call this algorithm IndependentSetImprovement. \\
Norouzi-Fard et al. propose in \cite{Norouzi-Fard/etal/2018} a meta-algorithm for submodular function maximization called Salsa which uses different algorithms for maximization as sub-procedures. The authors argue, that there are different types of data-streams and for each stream type, a different thresholding-rule is appropriate. The authors use this intuition to design a $r$-pass algorithm that iterates $r$ times over the entire dataset and adapts the thresholds between each run. They show that their approach is a $(r/(r+1))^r - \varepsilon$ approximation algorithm. For a streaming setting, i.e. $r = 1$, this algorithm recovers the $1/2 - \varepsilon$ approximation bound. However note, that some of the thresholding-rules require additional information about the data-stream such as its length or density. Since this might be unknown in a real-world use-case this algorithm might not be applicable in all scenarios. \\
The first proper streaming algorithm with $1/2 - \varepsilon$ approximation guarantee was proposed by Badanidiyuru et al. in \cite{Badanidiyuru/etal/2014} and is called SieveStreaming. SieveStreaming tries to estimate the potential gain of a data item before observing it. Assuming one knows the maximum function value $OPT$ beforehand and $|S| < K$, an element $e$ is added to the summary $S$ if the following holds:
\begin{equation}
\label{eq:delta}
\Delta_f(e | S) \ge \frac{OPT/2 - f(S)}{K - |S|}
\end{equation}
Since $OPT$ is unknown beforehand one has to estimate it before running the algorithm. Assuming one knows the maximum function value of a singleton set $m = max_{e \in V}f(\{e\})$ beforehand, then the optimal function value for a set with $K$ items can be estimated by submodularity as $m \le OPT \le K \cdot m$.
% \begin{equation}
% \label{eq:OPT-range}
% m \le OPT \le K \cdot m
% \end{equation}
%We can use this range to sample different threshold values from the interval $[m, Km]$ such that one of these thresholds will be close to $OPT$. 
The authors propose to manage different summaries in parallel, each using one threshold from the set $O = \{(1+\varepsilon)^i \mid i \in \mathbb{Z}, m \le (1+\varepsilon)^i \le K \cdot m\}$, so that for at least one $v \in O$ it holds: $(1-\varepsilon)OPT \le v \le OPT$. In a sense, this approach sieves out elements with marginal gains below the given threshold - hence the authors name their approach SieveStreaming. Note, that this algorithm requires the knowledge of $m = max_{e \in V}f(\{e\})$ before running the algorithm. The authors also present an algorithm to estimate $m$ on the fly which does not alter the theoretical performance of SieveStreaming. 
Recently, Kazemi et al. proposed in \cite{Kazemi/etal/2019} an extension of the SieveStreaming called SieveStreaming++. 
%SieveStreaming relies on the accurate estimation of the interval $[m, Km]$ to sample thresholds accordingly. 
%As new items arrive from the stream many of the smaller thresholds will accept most items and therefore quickly fill-up their respective sieves. 
The authors point out, that the currently best performing sieve $S_v = \arg\max_v \{f(S_v)\}$ offers a better lower bound for the function value and they propose to use $[\max_v\{f(S_v)\}, K\cdot m]$ as the interval for sampling thresholds. This results in a more dynamic algorithm, in which sieves are removed once they are outperformed by other sieves and new sieves are introduced to make use of the better estimation of $OPT$. SieveStreaming++ does not improve the approximation guarantee of 
SieveStreaming, but only requires $\mathcal O(K/\varepsilon)$ memory instead of $\mathcal O(K \log K/\varepsilon)$.
%(++) both manage $\mathcal O(\log K/\varepsilon)$ sieves and make $\mathcal O(\log K/\varepsilon)$ function-queries per item. 
%Since SieveStreaming++ only manages one sieve with $K$ items it requires $\mathcal O(K / \varepsilon)$ memory whereas SieveStreaming requires $\mathcal O(K log K / \varepsilon)$ memory.
Last, Kuhnle proposed the QuickStream algorithm in \cite{kuhnle/2021} which works under the assumption that a single function evaluation is very expensive. QuickStream buffers up to $c$ elements and only evaluates $f$ every $c$ elements. If the function value is increased by the $c$ elements, thy all are added to the solution. Additionally, older examples are removed if there are more than $K$ items in the solution. QuickStream performs well if the costs of a function evaluation $f$ is very costly and if it is ideally independent from the size of the current solution $S$. This is unfortunately not the case in our experiments (see below). Moreover, QuickStream has a guarantee of $1/(4c) - \varepsilon$ that is outperformed by SieveStreaming(++) with similar resource consumption. Thus, we did not consider QuickStream in our experiments.

% \textbf{SieveStreaming++:} Kazemi et al. proposed in \cite{Kazemi/etal/2019} an extension of the SieveStreaming algorithm called SieveStreaming++. As outlined previously, SieveStreaming relies on the accurate estimation of the interval $[m, Km]$ to sample thresholds accordingly. 
% The authors point out, that the currently best performing sieve $S_v = \arg\max_v \{f(S_v)\}$ offers a better lower bound for the function value and propose to use $[\max_v\{f(S_v)\}, K\cdot m]$ as the interval for sampling thresholds. This results in a more dynamic algorithm, in which sieves are removed once they are outperformed by other sieves and new sieves are introduced to make use of the better estimation of $OPT$. 
% SieveStreaming++ manages $\mathcal O(\log K/\varepsilon)$ sieves in parallel and therefore makes $\mathcal O(\log K/\varepsilon)$ function-queries per item. Note that even though the algorithm manages $\mathcal O(\log K/\varepsilon)$ sieves, it only requires $\mathcal O(K / \varepsilon)$ memory because there is only one full sieve with $K$ elements in each given moment and the remaining ones each have less than $K$ elements.

\section{The Three Sieves Algorithm}
\label{sec:three-sieves}

We recognize, that SieveStreaming and its extensions offer a worst-case guarantee on their performance and indeed they can be consider optimal providing an approximation guarantee of $\frac{1}{2} - \varepsilon$ under polynomial memory constraints \cite{feldman/etal/2020}. However, we also note that this worst case often includes pathological cases, whereas practical applications are usually much more well-behaved. One common practical assumption is, that the data is generated by the same source and thus follows the same distribution (e.g. in a given time frame). In this paper, we want to study these better behaving cases more carefully and present an algorithm which improves the approximation guarantee, while reducing memory and runtime costs in these cases. More formally, we will now assume that the items in the given sample (batch processing) or in the data stream (stream processing) are independent and identically distributed (iid). Note, that we do \textit{not} assume any specific distribution. For batch processing this means, that all items in the data should come from the same (but unknown) distribution and that items should not influence each other. From a data-streams perspective this assumptions means, that the data source will produce items from the same distribution which does not change over time. Hence, we specifically \textit{ignore} concept drift and assume that an appropriate concept drift detection mechanism is in place, so that summaries are e.g. re-selected periodically. We will study streams with drift in more detail in our experimental evaluation. 
%Please note, that in this case we do \textit{not} assume that all possible samples come from the same distribution, but we merely assume that the one sample we have given is consistent in a sense, that all items come from \textit{the same} distribution. This is true for \textit{all} data samples, where data items are independent from each other, as we could simply define the overall distribution as a mixture of simpler distributions. 
We now use this assumption to derive an algorithm with $(1-\varepsilon)(1-1/\exp(1))$ approximation guarantee in high probability:

SieveStreaming and its extension, both, manage $\mathcal O(\log K / \varepsilon)$ sieves in parallel, which quickly becomes unmanageable even for moderate choices of $K$ and $\varepsilon$. We note the following behavior of both algorithms: Many sieves in SieveStreaming have \emph{too small a novelty-threshold} and quickly fill-up with uninteresting events. SieveStreaming++ exploits this insight by removing small thresholds early and thus by focusing on the most promising sieves in the stream. On the other hand, both algorithms manage sieves with \emph{too large a novelty-threshold}, so that they never include any item. Thus, there are only a few thresholds that produce valuable summaries. We exploit this insight with the following approach: Instead of using many sieves with different thresholds we use only a single summary and carefully calibrate the threshold. To do so, we start with a large threshold that rejects most items, and then we gradually reduce this threshold until it accepts some - hopefully the most informative - items. 
The set $O = \{(1+\varepsilon)^i \mid i \in \mathbb{Z}, m \le (1+\varepsilon)^i \le K \cdot m\}$ offers a somewhat crude but sufficient approximation of $OPT$ (c.f. \cite{Badanidiyuru/etal/2014}). We start with the largest threshold in $O$ and decide for each item if we want to add it to the summary or not. If we do not add any of $T$ items (which will be discussed later) to $S$ we may lower the threshold to the next smallest value in $O$ and repeat the process until $S$ is full. 

The key question now becomes: How to choose $T$ appropriately? If $T$ is too small, we will quickly lower the threshold and fill up the summary before any interesting item arrive that would have exceeded the original threshold. If $T$ is too large, we may reject interesting items from the stream. Certainly, we cannot determine with absolute certainty when to lower a threshold without knowing the rest of the data stream or knowing the ground set entirely, but we can do so with high probability. More formally, we aim at estimating the probability $p(e | f, S, v)$ of finding an item $e$ which exceeds the novelty threshold $v$ for a given summary $S$ and function $f$. Once $p$ drops below a user-defined certainty margin $\tau$
$$
p(e | f, S, v) \le \tau
$$
we can safely lower the threshold. This probability must be estimated on the fly.
%Now, we have transformed the original problem of choosing the right threshold to that of choosing the right length of $T$ and arrive at the problem of estimating the probability of making the right choice. 
%This probability must be estimated on the fly. %Per construction we only observe negative outcomes for it: 
Most of the time, we reject $e$ so that $S$ and $f(S)$ are unchanged and we keep estimating $p(e | f, S, v)$ based on the negative outcome. If, however, $e$ exceeds the current novelty threshold we add it to $S$ and $f(S)$ changes. In this case, we do not have any estimates for the new summary and must start the estimation of $p(e | f, S, v)$ from scratch. 
Thus, with a growing number of rejected items $p(e | f, S, v)$ tends to become close to $0$ and the key question is how many observations do we need to determine -- with sufficient evidence -- that $p(e | f, S, v)$ will be $0$. 

The computation of confidence intervals for estimated probabilities is a well-known problem in statistics. For example, the confidence interval of binominal distributions can be approximated with normal distributions, Wilson score intervals, or Jeffreys interval. Unfortunately, these methods usually fail for probabilities near $0$  \cite{brown/etal/2001}. However, there exists a more direct way of computing a confidence interval for heavily one-sided binominal distribution with probabilities near zero and iid data \cite{Jovanovic/Levy/1997}. The probability of not adding one item in $T$ trials is: 
$$
\alpha = \left(1-p(e | f, S, v) \right)^T \Leftrightarrow \ln\left(\alpha\right) = T \ln \left(1-p(e | f, S, v) \right)
$$

A first order Taylor Approximation of $\ln(1-p(e | f, S, v) )$ reveals that \\ $\ln \left(1-p(e | f, S, v) \right) \approx -p(e | f, S, v) $
and thus $\ln\left(\alpha\right) \approx T (-p(e | f, S, v) )$ leading to:
\begin{equation}
    \label{eq:RuleOfThree}
  \frac{-\ln\left(\alpha\right)}{T} \approx p(e | f, S, v) \le \tau
\end{equation}

Therefore, the confidence interval of $p(e | f, S, v) $ after observing $T$ events is $\left[0, \frac{-\ln\left(\alpha\right)}{T}\right]$. The $95\%$ confidence interval of $p(e | f, S, v) $ is $\left[0, -\frac{\ln(0.05)}{T}\right]$ which is approximately $[0, 3/T]$ leading to the term ``Rule of Three'' for this estimate \cite{Jovanovic/Levy/1997}. For example, if we did not add any of $T = 1000$ items to the summary, then the probability of adding an item to the summary in the future is below $0.003$ given a $\alpha = 0.95$ confidence interval. We can use the Rule of Three to quantify the certainty that there is a very low probability for finding a novel item in the data stream after observing $T$ items. Note that we can either set $\alpha$, $\tau$ and use Eq. \ref{eq:RuleOfThree} to compute the appropriate $T$ value. Alternatively, we may directly specify $T$ as a user parameter instead of $\alpha$ and $\tau$, thereby effectively removing one hyperparameter. We call our algorithm ThreeSieves and it is depicted in Algorithm \ref{fig:ThreeSieves}. Its theoretical properties are presented in Theorem \ref{th:ThreeSieve}.  
 
\begin{algorithm}
	\begin{algorithmic}[1]
    \STATE $O \gets \{(1+\varepsilon)^i \mid i \in \mathbb{Z}, m \le (1+\varepsilon)^i \le K \cdot m\}$
    \STATE $v \gets \max(O);~O \gets O\setminus\{\max(O)\}$
    \STATE $S \gets \emptyset; ~t \gets 0$
    \FOR{next item $e$}
        \IF{$\Delta_f(e | S) \ge \frac{v/2 - f(S)}{K - |S|}$ and $|S| < K$ }
          \STATE{$S \gets S \cup \{e\};~t \gets 0$}
        \ELSE
          \STATE{$t \gets t + 1$}
          % \If{$\frac{-\ln(\alpha)}{T} < \tau$}
          \IF{$t \ge T$}
            \STATE $v \gets \max(O);~O \gets O\setminus\{\max(O)\}; t\gets 0$
          \ENDIF
        \ENDIF
    \ENDFOR	
    \STATE \textbf{return} $S$
	\end{algorithmic}
	\caption{ThreeSieves algorithm.}
	\label{fig:ThreeSieves}
\end{algorithm}

\begin{theorem}
  \label{th:ThreeSieve}
  ThreeSieves has the following properties:
  \begin{itemize}
    \item Given a fixed groundset $V$ or an infinite data-stream in which each item is independent and identically distributed (iid) it outputs a set $S$ such that $|S| \le K$ and with probability $(1 - \alpha)^K$ it holds for a non-negative, monotone submodular function $f$: $f(S) \ge (1- \varepsilon) (1 - 1/\exp(1))OPT$ 
    \item It does $1$ pass over the data (streaming-capable) and stores at most $\mathcal O\left(K\right)$ elements
  \end{itemize}
\end{theorem}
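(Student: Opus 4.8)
The plan is to dispatch the structural claims first and then spend the real effort on the probabilistic approximation bound. For the second bullet I would read the properties straight off Algorithm~\ref{fig:ThreeSieves}: the whole procedure is a single \texttt{for}-loop over the arriving items (line~4), so it touches each item once and makes one pass; and the only data it retains is the summary $S$, whose size is capped at $K$ by the guard $|S| < K$ in line~5, while the threshold list $O$, the active threshold $v$, and the counter $t$ are merely scalars. Hence at most $\mathcal O(K)$ elements are stored.

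For the approximation guarantee I would show that, under the iid (or fixed ground set) assumption, ThreeSieves \emph{simulates Greedy} \cite{nemhauser/1978} for as long as it never lowers a threshold too early. The crucial algebraic fact is that, for a fixed summary $S$, an item $e$ passes the test in line~5 exactly when $v \le v_e := 2\left(f(S) + (K-|S|)\,\Delta_f(e\mid S)\right)$, and $v_e$ is strictly increasing in the marginal gain $\Delta_f(e\mid S)$. Since the algorithm starts at $\max(O)$ and only ever decreases $v$, the first item it can admit at the current summary is the one of \emph{largest} marginal gain — precisely Greedy's choice. Up to the geometric $(1+\varepsilon)$-spacing of $O$, the admitted gain is a $(1-\varepsilon)$-fraction of $\max_e \Delta_f(e\mid S)$, and feeding this together with the submodularity bound $\max_e \Delta_f(e\mid S) \ge (OPT - f(S))/K$ into the standard Greedy recursion over the $K$ insertions yields $f(S) \ge (1-\varepsilon)(1-1/\exp(1))\,OPT$.

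To obtain the high-probability factor I would then bound the chance that a threshold is \emph{ever} lowered prematurely. At a fixed summary, every threshold strictly above the max-gain item's value $v_e$ admits nothing, so passing through those thresholds is always correct; the only delicate decision is the one taken right at the max-gain item's threshold, and by the iid assumption the probability of seeing $T$ consecutive rejections there despite a genuine acceptance probability exceeding $\tau$ is at most $\alpha$ — exactly the Rule of Three calibration of Eq.~\ref{eq:RuleOfThree}. Because the counter $t$ is reset after each acceptance (line~6) and the summary grows at most $K$ times, conditioning successively on the current summary makes these $K$ critical decisions behave like independent $(1-\alpha)$-reliable trials, so all of them are correct with probability at least $(1-\alpha)^K$; on that event the Greedy simulation above is exact and the claimed bound holds.

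The hard part will be the two steps I deliberately left loose. The first is converting the monotonicity of $v_e$ into a clean $(1-\varepsilon)$ loss in the \emph{gain}: since $v_e$ blends $f(S)$ with $\Delta_f(e\mid S)$, I would have to verify that the factor-$(1+\varepsilon)$ spacing of $O$ really only erodes the marginal gain, and hence each greedy step, by $(1-\varepsilon)$ rather than by some messier quantity. The second is the degenerate case in which the summary never fills because even $\min(O)$ admits nothing; here I would need to show that the stopping condition $\max_e \Delta_f(e\mid S) < \left(\min(O)/2 - f(S)\right)/(K-|S|)$ already forces $f(S)$ above $(1-\varepsilon)(1-1/\exp(1))\,OPT$, so that terminating with $|S| < K$ does not break the recursion. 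The Taylor step underlying the Rule of Three and the successive-conditioning argument for $(1-\alpha)^K$ are, by contrast, routine once the iid assumption is granted.
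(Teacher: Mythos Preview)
Your proposal follows essentially the same route as the paper's own proof: both argue that ThreeSieves simulates Greedy by descending through the geometric threshold grid until the max-gain item at the current summary is admitted, both invoke the Rule of Three to bound the chance of each of the $K$ critical threshold-lowering decisions being premature, and both chain these into the $(1-\alpha)^K$ factor before appealing to Greedy's $(1-1/\exp(1))$ guarantee. The two issues you flag as the \emph{hard parts}---translating the $(1+\varepsilon)$ spacing of $O$ into a clean $(1-\varepsilon)$ loss on the marginal gain, and the $|S|<K$ termination case---are exactly where the paper's proof also leaves things loose (it handles the first by citing \cite{Badanidiyuru/etal/2014}, \S5.2, and does not treat the under-full summary at all), so your plan is, if anything, more candid about where the remaining work lies.
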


\begin{proof} 
The detailed proof can be found in the appendix. The proof idea is as follows:
%ThreeSieves does one pass over the data and inspects each data item once. It manages only one summary $S$ that stores at most $K$ elements. Note, that the set $O$ does not need to be materialized in memory before running the algorithm, but we can compute the appropriate thresholds from $O$ on the fly once they are required in line $10$. Thus, ThreeSieves uses $\mathcal O(K)$ memory.  
The Greedy Algorithm selects that element with the largest marginal gain in each round. Suppose we know the marginal gains $v_i = \Delta(S|e_i)$ for each element $e_i\in S$ selected by Greedy in each round. Then we can simulate the Greedy algorithm by stacking $K$ copies of the dataset consecutively and by comparing the gain $\Delta_f(e|S)$ of each element $e\in V$ against the respective marginal gain $v_i$. Let $O$ be a set of estimated thresholds with $v^*_1,\dots,v^*_K \in O$. Let $v^*_1$ denote the first threshold used by ThreeSieves before any item has been added to $S$. Then by the statistical test of ThreeSieves it holds with probability $1-\alpha$ that
$$
P(v_1 \not= v^*_1) \le \frac{-\ln(\alpha)}{T} \Leftrightarrow P(v_1 = v^*_1) > 1 - \frac{-\ln(\alpha)}{T}
$$
Consequently, it holds with probability $(1-\alpha)^K$ 
$$
P\left(v_1 = v^*_1,\dots,v_K = v^*_K \right) > \left(1-\frac{-\ln(\alpha)}{T}\right)^K
$$
By the construction of $O$ it holds that $(1-\varepsilon) v^*_i \le v_i \le v^*_i$ (c.f. \cite{Badanidiyuru/etal/2014}). 
%Now, let $e_K$ be the element that is selected by ThreeSieves after $K-1$ items have already been selected. Then, the function value $f(S_K)$ is a function of marginal gains:
%$$
%f(S_K) = f(S_{K-1} \cup \{e_K\}) = f(S_{K-1}) - \Delta(e_k|S_{K-1})
%$$
Let $e_K$ be the element that is selected by ThreeSieves after $K-1$ items have already been selected. Let $S_0 = \emptyset$ and recall that by definition $f(\emptyset) = 0$, then it holds with probability $(1-\alpha)^K$:
\begin{align*}
f(S_K)  &= f(\emptyset) + \sum_{i=1}^K \Delta(e_i|S_{K-1}) \ge \sum_{i=1}^K \left(1-\varepsilon\right) v^*_{i} \\
        %&= \left(1-\varepsilon\right) \sum_{i=1}^K v^*_{i} \\
        &= \left(1-\varepsilon\right) f_{G}(S_K) \ge \left(1-\varepsilon\right) \left(1-1/\exp(1)\right) OPT 
        %&= \left(1 - 1/\exp(1) - \varepsilon + \varepsilon/\exp(1) \right) \\
        %&> \left(\frac{1}{2} - \varepsilon \right) OPT 
\end{align*}
where $f_{G}$ denotes the solution of the Greedy algorithm. 
\end{proof}

Similar to SieveStreaming, ThreeSieves tries different thresholds until it finds one that fits best for the current summary $S$, the data $V$, and the function $f$. In contrast, however, ThreeSieves is optimized towards minimal memory consumption by maintaining one threshold and one summary at a time. If more memory is available, one may improve the performance of ThreeSieves by running multiple instances of ThreeSieves in parallel on different sets of thresholds. So far, we assumed that we know the maximum singleton value $m = max_{e\in V}f(\{e\})$ beforehand. If this value is unknown before running the algorithm we can estimate it on-the-fly without changing the theoretical guarantees of ThreeSieves. As soon as a new item arrives with a new $m_{new} > m_{old}$ we remove the current summary and use the new upper bound $K \cdot m_{new}$ as the starting threshold. It is easy to see that this does not affect the theoretical performance of ThreeSieves: Assume that a new item arrives with a new maximum single value $m_{new}$. Then, all items in the current summary have a smaller singleton value $m_{old} < m_{new}$. The current summary has been selected based on the assumption that $m_{old}$ was the largest possible value, which was invalidated as soon as $m_{new}$ arrived. Thus, the probability estimate that the first item in the summary would be `out-valued' later in the stream was wrong since we just observed that it is being out-valued. To re-validate the original assumption we delete the current summary entirely and re-start the summary selection. 

\section{Experimental Evaluation}
\label{sec:experiments-three-sieves}

In this section, we experimentally evaluate ThreeSieves and compare it against SieveStreaming(++), IndepdendentSetImprovement, Slasa, and Greedy. As an additional baseline we also consider a random selection of items via Reservoir Sampling \cite{vitter/1985}. We denote this algorithm as Random.
We focus on two different application scenarios: In the first experiment, we have given a batch of data and are tasked to compute a comprehensive summary. In this setting, algorithms are allowed to perform multiple passes over the data to the expense of a longer runtime, e.g. as Greedy does. In the second experiment we shift our focus towards the summary selection on streams with concept drift. Here, each item is only seen once and the algorithms do not have any other information about the data-stream.
%In \textit{all} experiments, including summary selection over a data-stream, we compare the relative maximization performance of all algorithms to the solution of Greedy. 
%In addition we report the runtime and memory consumption 
%In addition to the relative performance, we also report the runtime in seconds and the total number of elements stored by each algorithm after its execution. 
All experiments have been run on an Intel Core i7-6700 CPU machine with 8 cores and 32 GB main memory running Ubuntu 16.04. The code for our experiments is available with this submission. It will be publicly available after publication. 
%at \url{https://github.com/sbuschjaeger/SubmodularStreamingMaximization}.

\subsection{Batch experiments}
\label{sec:batch-experiments}

In the batch experiments each algorithm is tasked to select a summary with exactly $K$ elements. Since most algorithms can reject items they may select a summary with less than $K$ elements. To ensure a summary of size $K$, we re-iterate over the entire data-set as often as required until $K$ elements have been selected, but at most $K$ times. We compare the relative maximization performance of all algorithms to the solution of Greedy. For example, a relative performance of $100\%$ means that the algorithm achieved the same performance as Greedy did, whereas a relative performance of $50\%$ means that the algorithm only achieved half the function value of Greedy. We also measure the runtime and memory consumption of each algorithm. The runtime measurements include all re-runs, so that many re-runs over the data-set result in larger runtimes. We evaluate four key questions: First, is ThreeSieves competitive against the other algorithms or will the probabilistic guarantee hurt its practical performance? Second, if ThreeSieves is competitive, how does it related to a Random selection of summaries? Third, how large is the resource consumption of ThreeSieves in comparison? Fourth, how does ThreeSieves behave for different $T$ and different $\varepsilon$?

In total, we evaluate $3895$ hyperparameter configurations on the datasets shown in the top group in Table \ref{tab:datasets}. 
%The FACT dataset is used in two variants containing high and low-level features. 
We extract summaries of varying sizes $K \in \{5,10,\dots,100\}$ maximizing the log-determinant $f(S) = \frac{1}{2}\log\det(\mathcal I + a \Sigma_S)$.
% \begin{equation}
%   \label{eq:logdet}
%   f(S) = \frac{1}{2}\log\det(\mathcal I + a \Sigma_S)
% \end{equation}
Here, $\Sigma_S = [k(e_i,e_j)]_{ij}$ is a kernel matrix containing all similarity pairs of all points in $S$, $a\in\mathbb R_+$ is a scaling parameter and $\mathcal I$ is the identity matrix. In \cite{seeger/2004}, this function is shown to be submodular. Its function value does not depend on $V$, but only on the summary $S$, which makes it an ideal candidate for summarizing data in a streaming setting. In \cite{Buschjaeger/etal/2017}, it is proven that $m = max_{e \in V}f(\{e\}) = 1 + a K$ and that $OPT \le K\log(1+a)$ for kernels with $k(\cdot, \cdot) \le 1$. This property can be enforced for every positive definite kernel with normalization \cite{graf/borer/2001}. 
In our experiments we set $a=1$ and use the RBF kernel $k(e_i, e_j) = \exp\left(-\frac{1}{2l^2}\cdot ||e_i - e_j||^2_2\right)$ with $l = \frac{1}{2\sqrt{d}}$ where $d$ is the dimensionality of the data. We vary $\varepsilon \in \{0.001, 0.005, 0.01, 0.05, 0.1\}$ and $T\in \{50,250,500,1000,2500,5000\}$. 

\begin{table}
  \centering
  \footnotesize
  \caption{Data sets used for the experiments. 
  %The top group is used for the batch experiments, the bottom group is used for the streaming experiments.
  }
  \begin{tabular}{@{}lrrr@{}}
  \toprule
  Name         & Size & Dim & Reference                 \\ \midrule
  ForestCover & 286,048      & 10     & \cite{DalPozzolo/etal/2015}  \\
  Creditfraud  & 284,807      & 29     & \cite{liu/etal/2008}  \\
  FACT Highlevel         & 200,000      & 16     & \cite{Buschjaeger/etal/2020} \\ 
  FACT Lowlevel         & 200,000      & 256     & \cite{Buschjaeger/etal/2020}  \\ 
  KDDCup99 	& 60,632 &	41 & \cite{campos/etal/2016} \\ \midrule
  stream51 & 150,736     & 2048     & \cite{stream51}  \\
  abc  & 1,186,018      & 300     & \cite{abcnews}  \\
  examiner         & 3,089,781      & 300     & \cite{examiner} \\ \bottomrule
  \end{tabular}
  \label{tab:datasets}
\end{table}

We present two different sets of plots, one for varying $K$ and one for varying $\varepsilon$. Figure \ref{fig:All_over_K} depicts the relative performance, the runtime and the memory consumption over different $K$ for a fixed $\varepsilon = 0.001$. For presentational purposes, we selected $T = 500, 1000, 2500, 5000$ for ThreeSieves. In all experiments, we find that ThreeSieves with $T=5000$ and Salsa generally perform best with a very close performance to Greedy for $K \ge 20$. For smaller summaries with $K< 20$ all algorithms seem to underperform, with Salsa and SieveStreaming performing best. Using $T \le 1000$ for ThreeSieves seems to decrease the performance on some datasets, which can be explained by the probabilistic nature of the algorithm. We also observe a relative performance above $100$ where ThreeSieves performed \textit{better} than Greedy on Creditfraud and Fact Highlevel. Note, that only ThreeSieves showed this behavior, whereas the other algorithms never exceeded Greedy. Expectantly, Random selection shows the weakest performance. SieveStreaming and SieveStreaming++ show identical behavior. Looking at the runtime, please, note the logarithmic scale. Here, we see that ThreeSieves and Random are by far the fastest methods. Using $T = 1000$ offers some performance benefit, but is hardly justified by the decrease in maximization performance, whereas $T = 5000$ is only marginally slower but offers a much better maximization performance. SieveStreaming and SieveStreaming++ have very similar runtime, but are magnitudes slower than Random and ThreeSieves. Last, Salsa is the slowest method. Regarding the memory consumption, please, note again the logarithmic scale. Here, all versions of ThreeSieves use the fewest resources as our algorithm only stores a single summary in all configurations. These curves are identical with Random so that only four instead of 7 curves are to be seen. SieveStreaming and their siblings use roughly two magnitudes more memory since they keep track of multiple sieves in parallel. Salsa uses the most memory, since it keeps track of even more sieves in parallel. 
 
%\todo[inline]{Maybe move memory consumption in Fig. 1/2 to appendix??}
\begin{figure*} %[h!]
  \centering
  \includegraphics[width=1.0\textwidth]{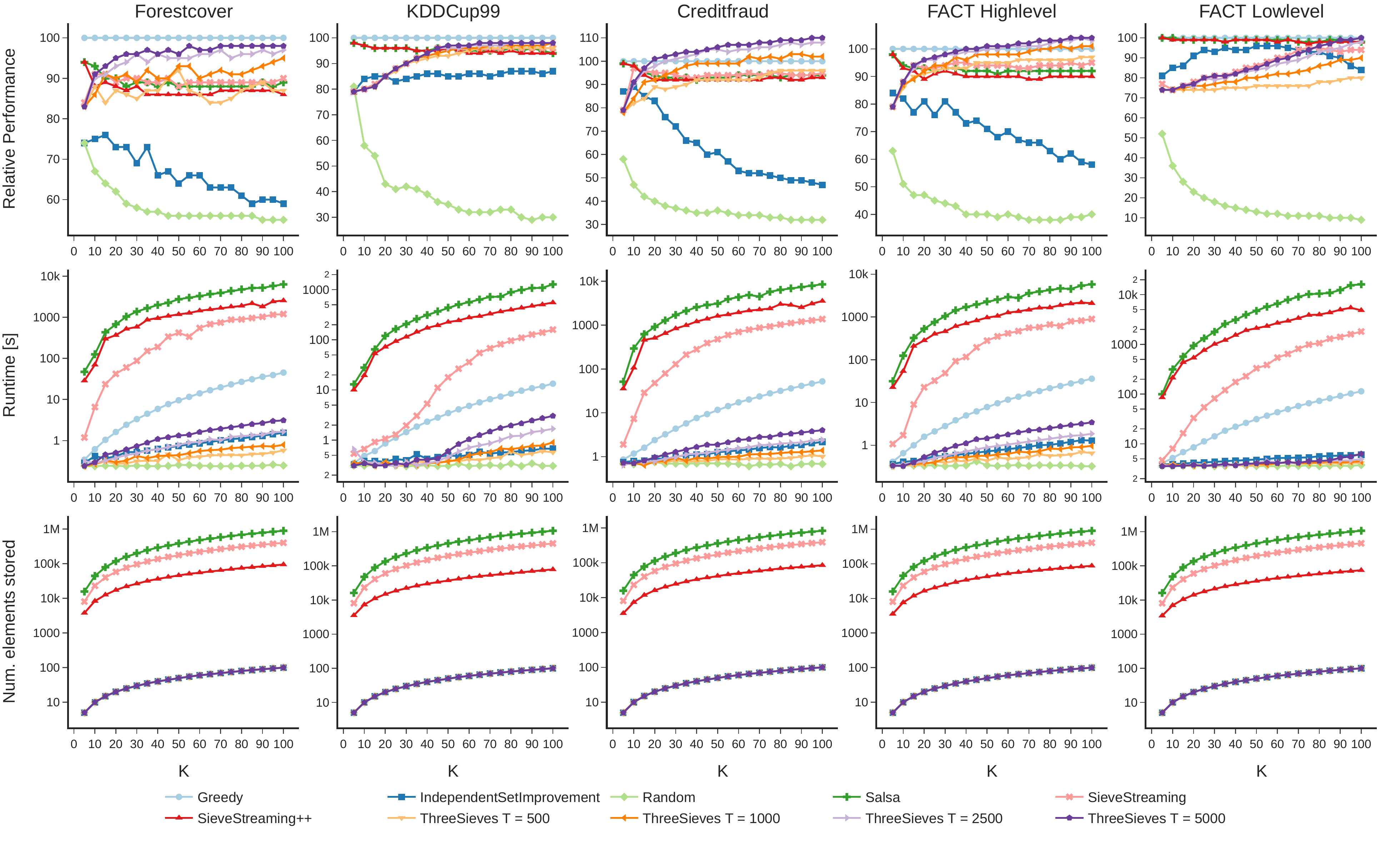}
  \caption[Algorithm comparison over $\varepsilon$.]{Comparison between IndependentSetImprovement, SieveStreaming,SieveStreaming++,Salsa, Random, and ThreeSieves for different $\varepsilon$ values and fixed $K = 50$. The first row shows the relative performance to Greedy (larger is better), the second row shows the total runtime in seconds (logarithmic scale, smaller is better) and the third row shows the maximum memory consumption (logarithmic scale, smaller is better). Each column represents one dataset. \label{fig:All_over_eps} }
  
  %\vspace{0.2cm}
% \end{figure*}
% \begin{figure*}
%   \centering
  \includegraphics[width=1.0\textwidth]{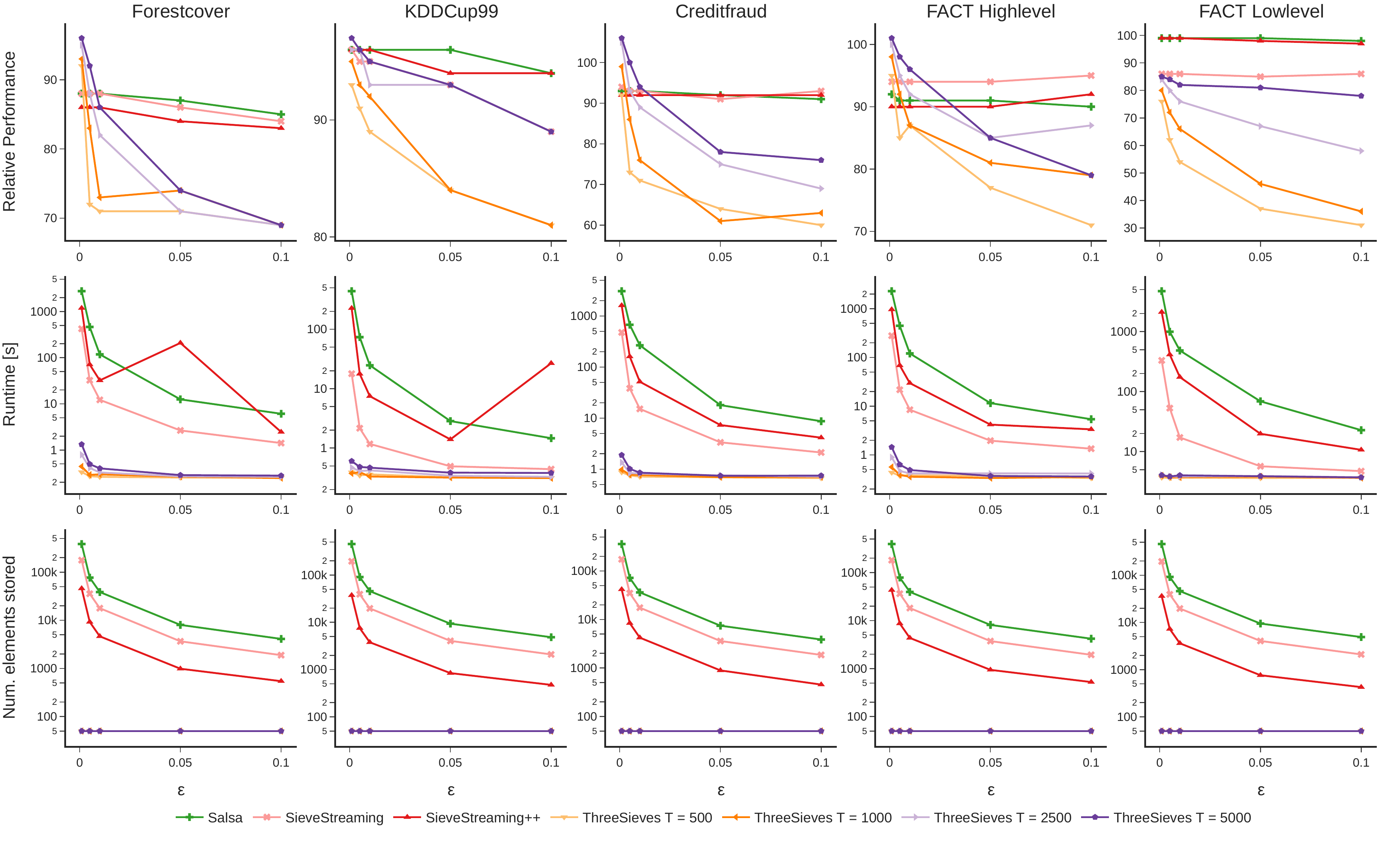}
  \caption[Algorithm comparison over K.]{Comparison between IndependentSetImprovement, SieveStreaming,SieveStreaming++,Salsa, Random, and ThreeSieves for different $K$ values and fixed $\varepsilon = 0.001$. The first row shows the relative performance to Greedy (larger is better), the second row shows the total runtime in seconds (logarithmic scale, smaller is better) and the third row shows the maximum memory consumption (logarithmic scale, smaller is better). Each column represents one dataset.  \label{fig:All_over_K} }
\end{figure*}

Now we look at the behavior of the algorithms for different approximation ratios. Figure \ref{fig:All_over_eps} depicts the relative performance, the runtime, and the memory consumption over different $\varepsilon$ for a fixed $K = 50$. We again selected $T = 500, 1000, 2500, 5000$ for ThreeSieves. Note, that we excluded Random, Greedy and IndependentSetImprovement for these plots since their performance does not vary with $\varepsilon$. Looking at the relative performance we see a slightly different picture than before: For small $\varepsilon \le 0.05$ and larger $T$ we see that ThreeSieves and Salsa again perform best in all cases. For larger  $\varepsilon > 0.05$ the performance of the non-probabilistic algorithms remain relatively stable, but ThreeSieves performance starts to deteriorate. Again we note, that SieveStreaming and SieveStreaming++ show identical behavior. Looking at the runtime and memory consumption we see a similar picture as before: ThreeSieves is by far the fastest method using the fewest resources followed by SieveStreaming(++) and Salsa. Again note, that ThreeSieves requires the same amount of memory in all configurations and hence we find an overlap in plots.

% Last, we want to study the behavior of ThreeSieves for different $T$ values. Figure \ref{fig:FACT_performance} shows the performance of ThreeSieves for different $T$ given a fixed $\varepsilon = 0.001$ (first row) and a fixed $K = 50$ (second row). For presentational purposes, we focus on the FACT data since we found similar behavior on the other datasets. In the first row, we see a clear dependence on $T$. The larger $T$, the better the relative performance -- just as we would expect from a probabilistic algorithm. The second row depicts a more mixed behavior. For larger $\varepsilon > 0.05$, the performance clearly deteriorates. However, once $\varepsilon$ is sufficiently small, even algorithms with smaller $T$ around $500-1000$ seem to achieve an acceptable performance. 

\textbf{We conclude:} In summary, we conclude that ThreeSieves works best for small $\varepsilon$ and large $T$. The probabilistic nature of the algorithm does not decrease its maximization performance but actually helps it in some cases. In contrast to the other algorithms, the resource consumption and overall runtime of ThreeSieves does not suffer from decreasing $\varepsilon$ or increasing $T$. Its overall performance is better or comparable to the other algorithms while being much more memory efficient and overall faster.
 
% \begin{figure} 
%   \centering
%   \includegraphics[width=0.48\textwidth]{}
%   \caption[ThreeSieves comparison for different $T$.]{Comparison of ThreeSieves for different $T$. The first row depicts the relative performance with fixed $\varepsilon = 0.005$. The second row depicts the relative performance for a fixed $K = 50$. Larger is always better. }
%   \label{fig:FACT_performance} 
% \end{figure}

\subsection{Streaming experiments}
\label{sec:streaming-experiments}

Now we want to compare the algorithms in a true streaming setting including concept drift. Here, we present each item only once and must decide immediately if it should be added to the summary or not. Since Salsa requires additional information about the stream we excluded it for these experiments. We use two real-world data-sets and one artificial data-set depicted in the bottom group in Table \ref{tab:datasets}: The stream51 dataset \cite{stream51} contains image frames from a sequence of videos, where each video shows an object of one of 51 classes. Subsequent frames in the video stream are highly dependent. Over the duration of the stream, new classes are introduced. The dataset is constructed such that online streaming classification methods suffer from  `Catastrophic forgetting' \cite{stream51}. We utilize a pretrained InceptionV3 convolutional neural network that computes 2048-dimensional embeddings of the images.
The abc dataset contain news headlines from the Australian news source `ABC' gathered over 17 years (2003 - 2019) and the examiner dataset contains news headlines from the Australian news source `The Examiner' gathered over 6 years (2010 - 2015). Due to this long time-span we assume that both datasets contain a natural concept drift occurring due to different topics in the news. 
We use pretrained Glove embeddings to extract 300-dimensional, real-valued feature vectors for all headlines and present them in order of appearance starting with the oldest headline. We ask the following questions: First, will the iid assumption of ThreeSieves hurt its practical performance on data-streams with concept drift? Second, how will the other algorithms with a worst-case guarantee perform in this situation?

In total, we evaluate $3780$ hyperparameters on these three datasets. Again, we extract summaries of varying sizes $K \in \{5,10,\dots,100\}$ maximizing the log-determinant with $a=1$. We use the RBF kernel with $l = \frac{1}{\sqrt{d}}$ where $d$ is the dimensionality of the data. We vary $\varepsilon \in \{0.01, 0.1\}$ and $T\in \{500,1000,2500,5000\}$. Again, we report the relative performance of the algorithms compared to Greedy (executed in a batch-fashion). %Details on the memory consumption, runtime and different $l$ are given in the appendix. 
%Note, that for this experiment we excluded Salsa since it is not applicable in a true streaming setting.
% \begin{table}
%   \centering
%   \footnotesize
%   \begin{tabular}{@{}lrrr@{}}
%   \toprule
%   Name         & Size & Dim & Reference                 \\ \midrule
%   stream51 & 150,736     & 2048     & \cite{stream51}  \\
%   abc  & 1,186,018      & 300     & \cite{abcnews}  \\
%   examiner         & 3,089,781      & 300     & \cite{examiner} \\ \bottomrule
%   \end{tabular}
%   \caption{Data sets used for the streaming experiments.}
%   \label{tab:streaming-datasets}
% \end{table}
Figure \ref{fig:stream_over_k} shows the relative performance of the streaming algorithms for different $K$ with fixed $\varepsilon = 0.1$ (first row) and fixed $\varepsilon = 0.01$ (second row) on the three datasets. On the stream51 dataset we see very chaotic behavior for $\varepsilon = 0.1$. Here, SieveStreaming++ generally seems to be best with a performance around $90 - 95 \%$. ThreeSieves's performance suffers for smaller $T \le 1000$ not exceeding $85\%$. For other configurations with larger $T$ ThreeSieves has a comparable performance to SieveStreaming and IndependentSetImprovement all achieving $85 - 92\%$. For $\varepsilon = 0.01$ the behavior of the algorithms stabilizes. Here we find that ThreeSieves with $T = 5000$ shows a similar and sometimes even better performance compared to SieveStreaming(++) beyond $95\%$. An interesting case occurs for $T = 5000$ and $K = 100$ in which the function value suddenly drops. Here, ThreeSieves rejected more items than were available in the stream and thus returned a summary with less than $K = 100$ items. Somewhere in the middle, we find IndependentSetImprovement reaching $90 \%$ performance in the best case. Expectantly, Random selection is the worst in all cases. In general, we find a similar behavior on the other two datasets. For $\varepsilon = 0.1$, SieveStreaming(++) seem to be the best option followed by ThreeSieves with larger $T$ or IndependentSetImprovement. For larger $K$, IndependentSetImprovement is not as good as ThreeSieves and its performance approaches Random quite rapidly. For $\varepsilon = 0.01$ the same general behavior can be observed. SieveStreaming(++) again holds well under concept drift followed by ThreeSieves with larger $T$ followed by IndependentSetImprovement and Random. We conjecture that ThreeSieves's performance could be further improved for larger $T$ and that there seems to be a dependence between $T$ and the type of concept drift occurring in the data.\\
\noindent \textbf{We conclude:} ThreeSieves holds surprisingly well under concept drift, especially for larger $T$. In many cases its maximization performance is comparable with SieveStreaming(++) while being more resource efficient. For smaller $T$ the performance of ThreeSieves clearly suffers, but remains well over the performance of Random or IndependentSetImprovement. For larger $T$ and smaller $\varepsilon$, ThreeSieves becomes more competitive to SieveStreaming(++) while using fewer resources.. We conclude that ThreeSieves is also applicable for streaming data with concept drift even though its theoretical guarantee does not explicitly hold in this context.  
 
\begin{figure} %[h!]
  \centering
  %width=1.0\textwidth
  \includegraphics[width=0.5\textwidth, keepaspectratio]{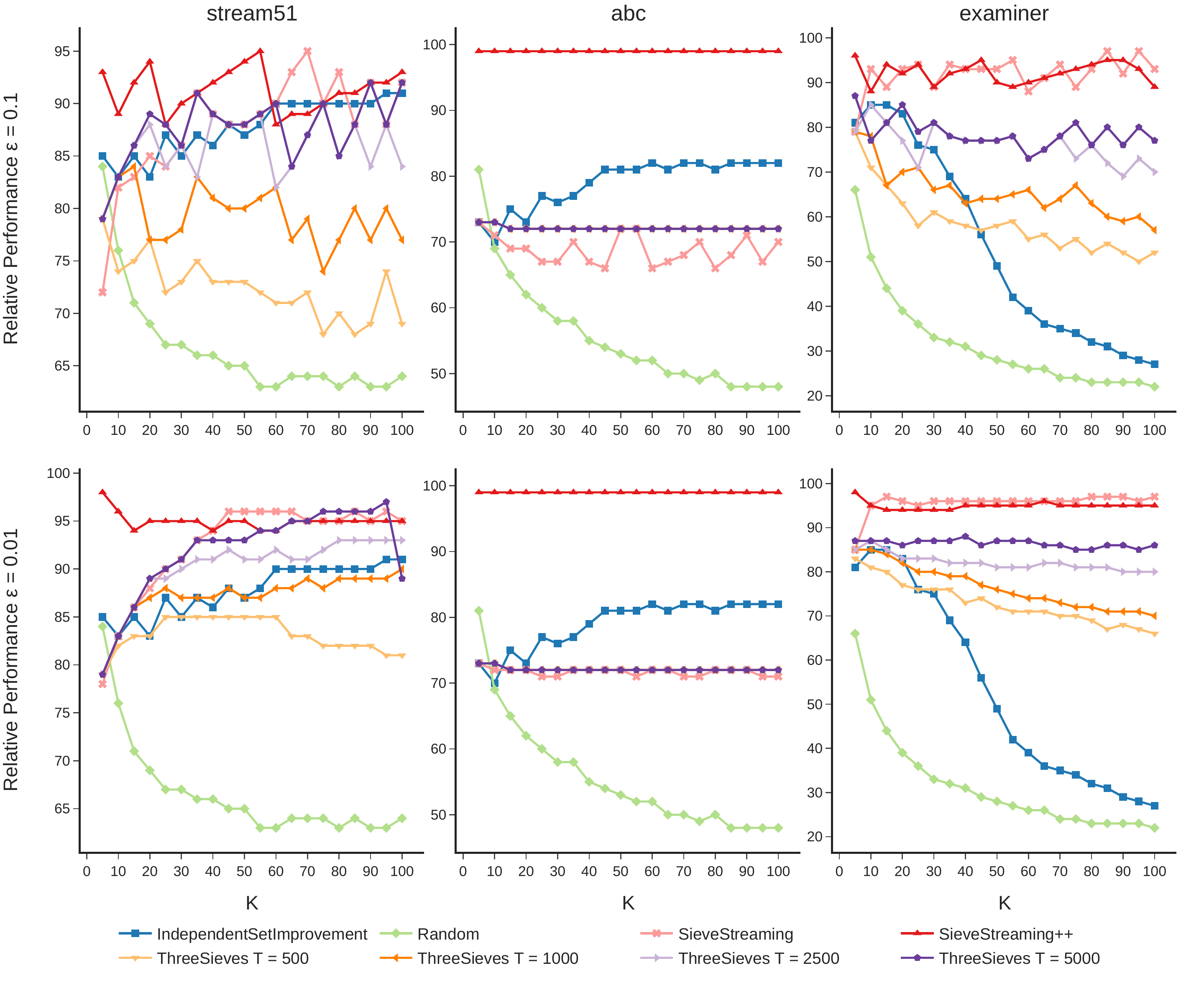}
  \caption[Algorithm comparison over K.]{Comparison between IndependentSetImprovement, SieveStreaming, SieveStreaming++, Random and ThreeSieves for different $K$ values and fixed $\varepsilon = 0.1$ (first row) and $\varepsilon = 0.01$ (second row). Each column represents one dataset.  \label{fig:stream_over_k} }
\end{figure}

% \begin{figure*} %[h!]
%   \centering
%   \includegraphics[width=1.0\textwidth]{}
%   \caption[Algorithm comparison over K.]{Comparison between IndependentSetImprovement, SieveStreaming, SieveStreaming++, Random and ThreeSieves for different $K$ values and fixed $\varepsilon = 0.1$. The first row shows the relative performance to Greedy (larger is better), the second row shows the total runtime in seconds (logarithmic scale, smaller is better) and the third row shows the maximum memory consumption (logarithmic scale, smaller is better). Each column represents one dataset.  \label{fig:stream_over_k} }
%   %\vspace{0.2cm}
% % \end{figure*}
 
% % \begin{figure*}
% %   \centering
%   \includegraphics[width=1.0\textwidth]{}
%   \caption[Algorithm comparison over $\varepsilon$.]{Comparison between IndependentSetImprovement, SieveStreaming, SieveStreaming++, Random and ThreeSieves for different $K$ values and fixed $\varepsilon = 0.01$. The first row shows the relative performance to Greedy (larger is better), the second row shows the total runtime in seconds (logarithmic scale, smaller is better) and the third row shows the maximum memory consumption (logarithmic scale, smaller is better). Each column represents one dataset. \label{fig:stream_over_k} }
% \end{figure*}

\section{Conclusion}
\label{sec:conclusion}

Data summarization is an emerging topic for understanding and analyzing large amounts of data. In this paper, we studied the problem of on-the-fly data summarization where one must decide immediately whether to add an item to the summary, or not. While some algorithms for this problem already exist, we recognize that these are optimized towards the worst-case and thereby require more resources. We argue that practical applications are usually much more well-behaved than the commonly analyzed worst-cases. We proposed the ThreeSieves algorithm for non-negative monotone submodular streaming function maximization and showed, that -- under moderate assumptions -- it has an approximation ratio of $(1-\varepsilon)(1-1/\exp(1))$ in high probability. It runs on a fixed memory budget and performs a constant number of function evaluations per item. 
%This is particularly important for applications where the summary is supposed to be extracted directly at the data source and there is only a restricted computing device is available. 
We compared ThreeSieves against 6 state of the art algorithms on 8 different datasets with and without concept drift. For data without concept drift, ThreeSieves outperforms the other algorithms in terms of maximization performance while using two magnitudes less memory and being up to $1000$ times faster. On datasets with concept drift, ThreeSieves outperforms the other algorithms in some cases and offers similar performance in the other cases while being much more resource efficient. This allows for applications, where based on the summary, some action has to be performed. Hence, the novel ThreeSieves algorithm opens up opportunities beyond the human inspection of data summaries, which we want to explore in the future. 

%\pagebreak

%In addition to the quantitative results of experiments, a real-world study of scientific data in astrophysics clearly showed qualitatively excellent results. The data summary of the FACT data is challenging, both, in terms of the data volume and frequency. ThreeSieves succeeded in delivering summaries that have been investigated by domain experts who easily gave meaningful interpretations of each selected event. This highlights the usefulness of our approach. 

% \section*{Acknowledgements}
% \label{sec:acknowledgements}

% Part of the work on this paper has been supported by Deutsche Forschungsgemeinschaft (DFG) within the Collaborative Research Center SFB 876 "Providing Information by Resource-Constrained Analysis", DFG project number 124020371, SFB project A1, \url{http://sfb876.tu-dortmund.de}. Part of the work on this research has been funded by the Federal Ministry of Education and Research of Germany as part of the competence center for machine learning ML2R (01|18038A), \url{https://www.ml2r.de/}.

% We want to thank Sagar Kale for his helpful comments and fruitful discussion.

\section*{Acknowledgements}
\label{sec:acknowledgements}

Part of the work on this paper has been supported by Deutsche Forschungsgemeinschaft (DFG) within the Collaborative Research Center SFB 876 "Providing Information by Resource-Constrained Analysis", DFG project number 124020371, SFB project A1, \url{http://sfb876.tu-dortmund.de}. Part of the work on this research has been funded by the Federal Ministry of Education and Research of Germany as part of the competence center for machine learning ML2R (01|18038A), \url{https://www.ml2r.de/}.

We want to thank Sagar Kale for his helpful comments and fruitful discussion.

\bibliographystyle{icml2019}
% argument is your BibTeX string definitions and bibliography database(s)
\bibliography{literature}

\onecolumn
\section*{APPENDIX}
This appendix accompanies the paper `Very Fast Streaming Submodular Function Maximization'. It provides more background information which is not given in the paper due to space reasons. Sections are meant as a drop-in replacement for the original sections in the paper containing most of the original formulations as well as extra explanations and code descriptions. Note that we added a new section on concrete examples of submodular functions for streaming algorithms, which partly appeared in the beginning of the experimental evaluation section of the original paper. Also we highlight a real-world use-case of data summarization applied to high energy astrophysics data. We extract data summaries over real-world data from a cherenkov telescope and present the resulting summaries to a domain expert who gives meaningful annotations to the selected items. 

\section{Introduction}
No changes. 

\section{Related Work}
\label{sec:related-work-appendix}

\begin{table}[h]
\centering
\resizebox{\textwidth}{!}{
\begin{tabular}{@{}lcllcc@{}}
\toprule
\textbf{Algorithm} & \textbf{\begin{tabular}[c]{@{}c@{}}Approximation \\ Ratio\end{tabular}}                  & \textbf{Memory}                      & \textbf{\begin{tabular}[c]{@{}l@{}}Queries \\ per Element\end{tabular}} & \textbf{Stream} & \textbf{Ref.}                 \\ \midrule
Greedy             & $1-1/\exp(1)$                                                                                  & $\mathcal O(K)$                      & $\mathcal O(1)$                                                         & \xmark             & \cite{nemhauser/1978}          \\ 
StreamGreedy & $1/2 - \varepsilon$  & $\mathcal O \left( K \right)$ & $\mathcal O(K)$ &\xmark & \cite{gomes/krause/2010}  \\
%Random & $1/4$ (in expectation) & $\mathcal O \left( K \right)$ & $\mathcal O(1)$ &\checkmark & \cite{feige/etal/2011}  \\
PreemptionStreaming & $1/4$ & $\mathcal O \left( K \right)$ & $\mathcal O(K)$ &\checkmark & \cite{Buchbinder/etal/2015}  \\
IndependentSetImprovement & $1/4$ & $\mathcal O \left( K \right)$ & $\mathcal O(1)$ &\checkmark & \cite{Chakrabarti/Kale/2014} \\
Sieve-Streaming    & $1/2 - \varepsilon$                                                                      & $\mathcal O(K \log K / \varepsilon)$ & $\mathcal O( \log K / \varepsilon)$                                     & \checkmark         & \cite{Badanidiyuru/etal/2014} \\ 
Sieve-Streaming++  & $1/2 - \varepsilon$                                                                      & $\mathcal O( K / \varepsilon)$       & $\mathcal O( \log K / \varepsilon)$                                     & \checkmark         & \cite{Kazemi/etal/2019}        \\ 
Salsa              & $1/2 - \varepsilon$                                                                      & $\mathcal O(K \log K / \varepsilon)$ & $\mathcal O(\log K / \varepsilon)$                                      & (\checkmark)         & \cite{Norouzi-Fard/etal/2018}  \\ 
QuickStream              & $1/(4c) - \varepsilon$                                                                      & $\mathcal O(c K \log K \log \left ( 1 / \varepsilon \right) )$ & $\mathcal O( \lceil 1 /c \rceil + c )$                                      & \checkmark         & \cite{kuhnle/2021}  \\ 
ThreeSieves        & \begin{tabular}[c]{@{}c@{}}$(1-\varepsilon)(1-1/\exp(1))$ \\ with prob. $(1-\alpha)^K$\end{tabular} & $\mathcal O(K)$                      & $\mathcal O(1)$                                                         & \checkmark         & this paper                    \\ \bottomrule
\end{tabular}
}
\caption{Algorithms for non-negative, monotone submodular maximization with cardinality constraint $K$. ThreeSieves offers the smallest memory consumption and the smallest number of queries per element in a streaming-setting. \label{tab:Alg-Comparison-appendix}}
\end{table}

For a general introduction to submodular function maximization, we refer interested readers to Krause and Golovin's survey \cite{krause/Golovin/2014} and for a more thorough introduction into the topic of streaming submodular function maximization to \cite{chekuri/etal/2015}. Most relevant to this publication are non-negative, monotone submodular streaming algorithms with cardinality constraints. To the best of our knowledge, there exist seven different algorithms which we quickly survey here. For convenience, we also include the Greedy algorithm in this overview.

\subsection{Greedy}
As the name suggests, the Greedy algorithm works in a simple greedy fashion: It starts with an empty summary $S$. Then it rates each element in $V$ according to its marginal gain $\Delta_f(e | S)$ and picks that element $e$ with the largest gain. This process is repeated $K$ times until the summary is full. A formal algorithm description is given in Figure \ref{fig:Greedy}. Greedy does not have  streaming capabilities.

\begin{algorithm}
\begin{algorithmic}[1]
%\Function{Greedy}{$V,f,K$}
  \STATE $S \gets \emptyset$
  \FOR{$1,\dots,K$}
    \STATE $e=\arg\max\{\Delta_f(e | S) | e\in V\}$
    \STATE $S \gets S\cup \{e\}$
  \ENDFOR
  \STATE \textbf{return} $S$
%\EndFunction
\end{algorithmic}
\caption{Greedy algorithm.}
\label{fig:Greedy}
\end{algorithm}

\begin{theorem}
  \label{th:Greedy-appendix}
  Greedy has the following properties \cite{nemhauser/1978}:
  \begin{itemize}
    \item It outputs a set $S$ such that $|S| = K$ and $f(S) \ge (1-(1/\exp(1)))OPT$
    \item It does $K$ passes over the data and stores at most $\mathcal O(K)$ elements
  \end{itemize}
\end{theorem}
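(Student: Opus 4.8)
The plan is to establish the classical Nemhauser--Wolsey--Fisher guarantee by a per-round progress argument. I would write $S_0 = \emptyset, S_1, \dots, S_K$ for the sequence of partial summaries produced by the loop, so that $S_i$ holds the $i$ elements chosen in the first $i$ rounds and $S_K$ is the returned set. Since each round adds exactly one element and the loop runs $K$ times, it is immediate that $|S_K| = K$, that the algorithm scans $V$ exactly $K$ times (once per round to evaluate all marginal gains $\Delta_f(e | S_i)$), and that it never stores more than $K$ elements; this settles the second bullet. The substance is the approximation ratio, and the key quantity to track is the residual $\delta_i = OPT - f(S_i)$.

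The heart of the argument is a single greedy-step inequality: in every round the chosen element has marginal gain at least $\delta_i / K$. To prove it, I would fix an optimal set $S^*$ with $|S^*| = K$ and $f(S^*) = OPT$, and enumerate $S^* = \{o_1, \dots, o_K\}$. By monotonicity $OPT = f(S^*) \le f(S_i \cup S^*)$, and telescoping the right-hand side along the $o_j$ gives $f(S_i \cup S^*) = f(S_i) + \sum_{j=1}^K \Delta_f(o_j | S_i \cup \{o_1,\dots,o_{j-1}\})$. Submodularity bounds each summand by $\Delta_f(o_j | S_i)$, so $\delta_i = OPT - f(S_i) \le \sum_{o \in S^*} \Delta_f(o | S_i) \le K \max_{e} \Delta_f(e | S_i)$. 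Because the algorithm selects the maximiser, the gain of the element added in round $i+1$ equals this maximum, hence is at least $\delta_i / K$.

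Feeding this back, $f(S_{i+1}) - f(S_i) \ge \delta_i / K$ rearranges to $\delta_{i+1} \le (1 - 1/K)\,\delta_i$, and unrolling from $\delta_0 = OPT$ (using $f(\emptyset) = 0$) yields $\delta_K \le (1 - 1/K)^K\, OPT$. The elementary inequality $(1 - 1/K)^K \le 1/\exp(1)$ then gives $OPT - f(S_K) \le OPT/\exp(1)$, i.e. $f(S_K) \ge (1 - 1/\exp(1))\,OPT$, which is the first bullet.

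I expect the main obstacle to be the greedy-step inequality, specifically making the decomposition of $f(S_i \cup S^*)$ precise and invoking submodularity in the correct direction. The definition supplies $\Delta_f(e | A) \ge \Delta_f(e | B)$ for $A \subseteq B$, so I must check that the conditioning set $S_i \cup \{o_1,\dots,o_{j-1}\}$ genuinely contains $S_i$ and that $o_j \notin S_i \cup \{o_1,\dots,o_{j-1}\}$, or else handle the degenerate case where $o_j$ already lies in that set (there the corresponding marginal gain is $0$ by monotonicity and the bound only becomes slacker). Everything downstream — the linear recurrence for $\delta_i$ and the bound $(1-1/K)^K \le 1/\exp(1)$ — is routine once this inequality is in hand.
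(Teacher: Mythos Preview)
Your argument is the standard Nemhauser--Wolsey--Fisher proof and is correct. Note, however, that the paper does not actually supply a proof of this theorem: it is stated in the appendix as a classical result with a citation to \cite{nemhauser/1978}, so there is no proof in the paper to compare against. Your write-up reconstructs exactly the textbook argument that citation points to.
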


\subsection{Random} 
When runtime is a major concern a rather extreme idea is to randomly sample a solution instead of computing it.
Feige et al. study the behavior of random algorithms for monontone submodular function maximization more closely in \cite{feige/etal/2011}. They show, that for any nonnegative, unconstrained submodular function maximization problem a uniformly random set is a $1/4$ approximation in expectation. While such a strong guarantee is not known for cardinality constraints, it still can be interesting to consider the empirical performance of random sets as a simple baseline. A random sample can be obtained in the following fashion: We start by unconditionally accepting the first $K$ elements and use Reservoir Sampling to get a uniform sample over the data-stream \cite{vitter/1985}. Reservoir Sampling randomly replaces an item in the current summary with decreasing probability by sampling the replacement index $j$ form a uniform distribution with growing support. A formal description of the algorithm is given in Figure \ref{fig:Random-appendix}.

\begin{algorithm}
	\begin{algorithmic}[1]
	    \STATE $S \gets \emptyset; i \gets 0$
		\FOR{next item $e$}
    		\IF{$|S| \le K$}
    		    \STATE{$S \gets S \cup \{e\}$}
    		\ELSE
    		    \STATE{$j \gets \text{uniform\_int}(1,i)$}
    		    \IF{$j \le K$}
    		        \STATE{$e_j \gets S[j-1]$}
    		        \STATE{$S \gets S \setminus \{ e_j \} \cup \{e\}$}
    		    \ENDIF
    		\ENDIF
    		\STATE{$i \gets i + 1$}
		\ENDFOR
		\STATE \textbf{return} $S$
	\end{algorithmic}
	\caption{Random algorithm.}
	\label{fig:Random-appendix}
\end{algorithm}

\begin{theorem}
  \label{th:Random-appendix}
  Random has the following properties:
  \begin{itemize}
    \item It outputs a set $S$ such that $|S| \le K$ but offers no guarantee on the quality of the $f(S)$
    \item It does $1$ pass over the data and stores at most $\mathcal O\left(K \right)$ elements
  \end{itemize}
\end{theorem}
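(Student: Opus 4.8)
The plan is to verify each claim directly from the structure of the Random algorithm (Figure~\ref{fig:Random-appendix}), exploiting the fact that it is a pure reservoir-sampling scheme that never consults $f$. I would split the argument into three parts: the cardinality bound, the absence of a quality guarantee, and the streaming cost.

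First, for the cardinality bound I would argue by induction on the number of processed items that $|S| \le K$ at every step. Initially $S = \emptyset$. While the summary is not yet full the algorithm only appends items, each append raising $|S|$ by exactly one, and this ``fill'' phase halts once $|S|$ reaches $K$. From then on every iteration either does nothing or deletes one element $e_j$ and inserts one element $e$, leaving $|S|$ unchanged. Hence $|S|$ is non-decreasing and capped at $K$, so $|S| \le K$ at termination; if the stream carries fewer than $K$ items the fill phase never completes and $|S|$ equals the stream length, which is again at most $K$.

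Second, for the claim that Random \emph{offers no guarantee} on $f(S)$, the key observation is that the returned set depends only on the stream order and the internal random draws (a uniform index $j$ from $\{1,\dots,i\}$), and is statistically \emph{independent} of the objective $f$. I would make this precise with a single adversarial instance: fix $|V| \ge 2K$ and let $f$ be a non-negative, monotone, modular (hence submodular) function that concentrates all of its weight on one distinguished $K$-subset $S^\ast$ and assigns value $0$ to every element outside $S^\ast$. Standard reservoir sampling returns each $K$-subset of the stream with positive probability, and since $|V| \ge 2K$ there exist $K$-subsets disjoint from $S^\ast$; with positive probability the returned $S$ is one of these, giving $f(S) = 0$ while $OPT = f(S^\ast) > 0$. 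Thus the attainable ratio $f(S)/OPT$ cannot be bounded below by any positive constant, which rules out a deterministic approximation guarantee.

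Finally, the streaming cost follows by inspection. The procedure is a single \texttt{for} loop visiting each item exactly once and never revisiting earlier items, so it makes one pass. At any moment it stores only the summary $S$ (at most $K$ elements, by the cardinality bound) together with the $\mathcal O(1)$ counters $i$ and $j$, giving $\mathcal O(K)$ memory. I do not anticipate a genuine obstacle in this theorem; the only step requiring slight care is phrasing the ``no guarantee'' claim rigorously, which the single $f$-oblivious modular instance above settles.
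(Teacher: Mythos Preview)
Your argument is correct in every part. The paper, however, offers no proof at all for this theorem: it is stated immediately after the pseudocode and the text then moves on to the next algorithm, treating the claims as evident from inspection of Figure~\ref{fig:Random-appendix} and the standard properties of reservoir sampling. So your verification is strictly more detailed than what the paper provides; your induction for the cardinality bound and your $f$-oblivious modular instance for the ``no guarantee'' clause are clean ways to make the informal claims rigorous, and nothing in them conflicts with the paper's presentation.
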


\subsection{IndependentSetImprovement} 

Chakrabarti and Kale propose in \cite{Chakrabarti/Kale/2014} a simple streaming algorithm for submodular function maximization in the context of stream graph matching. Here,  nodes and edges of the graph are revealed one by one as a data stream. Their algorithm focuses on matroid constraints opposed to cardinality constraints, but is also applicable for cardinality constraints as discussed in the appendix of \cite{Chakrabarti/Kale/2014}. The main idea of this algorithm is to store the marginal gain of each element upon its arrival and use this `weight' to measure the importance of each item. To do so, it first accepts all $K$ elements unconditionally. Then, once a new item arrives, it replaces that element in the summary with the smallest weight if the current items weight is at-least twice as large. Note, that this algorithm only stores the weight \textit{at the moment} of insertion, but never updates any of the weights if items in the summary are replaced. Hence, we call this algorithm IndependentSetImprovement. Figure \ref{fig:IndependentSetImprovement-appendix} depicts the pseudocode for this algorithm and Theorem \ref{th:IndependentSetImprovement-appendix} shows its theoretical properties. Note, that the set $W$ (line 9) can be implemented efficiently with a priority queue. Although the approximation ratio of $1/4$ seems weak - note that this approximation ratio has been proven in the much more general framework of matroid constraints. 

\begin{algorithm}
	\begin{algorithmic}[1]
	    \STATE $S \gets \emptyset$
	    \STATE $W \gets \emptyset$
		\FOR{next item $e$}
	        \STATE $w_e \gets \Delta_f(e|S)$
    		\IF{$|S| \le K$}
    		    \STATE{$S \gets S \cup \{e\}$}
    		    \STATE{$W \gets W \cup \{w_e\}$}
    		\ELSE
    		    \STATE{$w_m \gets \min W$}
    		    \IF{$w_e > 2w_m $}
    		        \STATE{$S \gets S \setminus \{ m \} \cup \{e\}$}
        		    \STATE{$W \gets W \setminus \{ w_m \} \cup \{w_e\}$}
        		\ENDIF
    		\ENDIF
		\ENDFOR
		\STATE \textbf{return} $S$
	\end{algorithmic}
	\caption{IndependentSetImprovement algorithm.}
	\label{fig:IndependentSetImprovement-appendix}
\end{algorithm}
 
\begin{theorem}
  \label{th:IndependentSetImprovement-appendix}
  IndependentSetImprovement has the following properties \cite{Chakrabarti/Kale/2014}:
  \begin{itemize}
    \item It outputs a set $S$ such that $|S| \le K$ and $f(S) \ge 1/4 OPT$
    \item It does 1 pass over the data and stores at most $\mathcal O\left(K \right)$ elements
  \end{itemize}
\end{theorem}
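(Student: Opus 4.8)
The plan is to read off the structural claims directly from the pseudocode and then spend all the effort on the approximation factor. The bound $|S|\le K$ holds because the algorithm only appends a new element while the summary is not yet full, and every subsequent modification is a one-for-one replacement, so $|S|$ never exceeds $K$; the single loop over arriving items gives the one-pass property; and the only state maintained is $S$ together with the priority queue $W$, each of size at most $K$, giving $\mathcal O(K)$ memory. For the guarantee I would first dispose of a degenerate case: if the stream contains fewer than $K$ items then $S=V$ and $f(S)=f(V)\ge OPT$ by monotonicity, so I may assume the summary fills up and $|S|=K$ at termination.

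The core is to relate $f(S)$ to the frozen weights $w_e$ recorded at insertion. First I would prove a weight lower bound. Ordering the surviving elements $s_1,\dots,s_K$ by insertion time, each predecessor $s_1,\dots,s_{j-1}$ is still present when $s_j$ is inserted, so the summary $C_j$ at that moment satisfies $\{s_1,\dots,s_{j-1}\}\subseteq C_j$; submodularity then gives $w_{s_j}=\Delta_f(s_j\mid C_j)\le \Delta_f(s_j\mid\{s_1,\dots,s_{j-1}\})$, and telescoping yields
$$
f(S)\ \ge\ \sum_{s\in S} w_s .
$$
Second, I would control the weight lost to eviction: whenever an element $p$ evicts the current minimum $m$ we have $w_p>2\,w_m$, so following the chain of evictions terminating at any fixed survivor $s$ the discarded weights decay geometrically ($<w_s/2,\,<w_s/4,\dots$) and sum to less than $w_s$. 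Since each inserted element is evicted by a unique successor and these chains form disjoint paths ending at survivors, the total frozen weight of all inserted elements is at most $2\sum_{s\in S}w_s$.

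Finally I would charge the optimum against these weights. Starting from $OPT=f(O)\le f(O\cup S)\le f(S)+\sum_{o\in O\setminus S}\Delta_f(o\mid S)$, it remains to bound the residual gains of the optimal elements absent from $S$. Each such $o$ was either never inserted---in which case the acceptance test failed and $\Delta_f(o\mid C_{t_o})\le 2\,w_{\min}(t_o)$ at its arrival---or it was inserted and later evicted, in which case its frozen weight is dominated, again up to the factor $2$, by the element that replaced it. Routing these charges along the eviction chains of the previous step to distinct survivors and combining the two factors of $2$ (one from the acceptance threshold, one from the weight-versus-value gap) is what I expect to produce $OPT\le 4\,f(S)$, i.e.\ $f(S)\ge \tfrac14 OPT$.

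The main obstacle is precisely this last charging step. The difficulty is that the inequality $\Delta_f(o\mid C_{t_o})\le 2\,w_{\min}(t_o)$ concerns the summary at $o$'s \emph{arrival}, whereas the decomposition of $OPT$ needs the residual gain $\Delta_f(o\mid S)$ against the \emph{final} summary, and $C_{t_o}$ and $S$ are not nested; moreover the blocking minimum-weight element at $o$'s arrival may itself be evicted before termination. Making the charge well defined and injective therefore requires tracking elements through the eviction chains rather than comparing to $S$ directly, and it is the interaction of these chains with submodularity that forces the looser $1/4$ instead of the $1/3$ that a naive one-to-one charge would suggest. As the text indicates, the clean way to organize this bookkeeping is the matroid-exchange argument of \cite{Chakrabarti/Kale/2014}, specialized here to the uniform matroid of rank $K$.
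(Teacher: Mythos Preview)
The paper does not give its own proof of this theorem; it simply states the result with a citation to \cite{Chakrabarti/Kale/2014}. So there is nothing in the paper to compare your argument against, and your outline is in fact the standard Chakrabarti--Kale argument specialised to the uniform matroid.

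Your structural observations and the two main lemmas are correct: $f(S)\ge\sum_{s\in S}w_s$ follows exactly as you wrote, and the geometric eviction chains give $\sum_{e\in I}w_e\le 2\sum_{s\in S}w_s$, where $I$ is the set of all elements ever inserted.

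The ``obstacle'' you describe in the last paragraph is not an actual difficulty once you decompose $OPT$ against $I$ rather than against $S$. Since every summary $C_{t_o}$ is a subset of $I$, submodularity gives $\Delta_f(o\mid I)\le\Delta_f(o\mid C_{t_o})\le 2\,w_{\min}(t_o)$ directly for every rejected $o\in O\setminus I$; no tracking through eviction chains or injective charging is needed. Then
\[
OPT \;\le\; f(I)+\sum_{o\in O\setminus I}\Delta_f(o\mid I),
\]
and each piece is at most $2f(S)$: for the first, $f(I)\le\sum_{e\in I}w_e\le 2\sum_{s\in S}w_s\le 2f(S)$ (the inequality $f(I)\le\sum_{e\in I}w_e$ follows by ordering $I$ by insertion and using $C_i\subseteq\{e_1,\dots,e_{i-1}\}$); for the second, $w_{\min}(t)$ is non-decreasing in $t$, so $w_{\min}(t_o)\le\min_{s\in S}w_s$ and $\sum_{o\in O\setminus I}2\,w_{\min}(t_o)\le 2K\min_{s\in S}w_s\le 2\sum_{s\in S}w_s\le 2f(S)$. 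Adding gives $OPT\le 4f(S)$. So your sketch is sound; the only refinement needed is to replace $S$ by $I$ in the final decomposition, which removes the nesting problem you were worried about.
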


\subsection{StreamGreedy} 
Gomes and Krause present in \cite{gomes/krause/2010} one of the first works for submodular function maximization called StreamGreedy. Their approach offers an $\frac{1}{2}-\varepsilon$ approximation in $\mathcal O(K)$ memory, where $\varepsilon$ depends on the submodular function and some user-specified parameters. StreamGreedy unconditionally accepts the first $K$ elements into the summary $S$. After that, it swaps newly arrived elements $e$ with any element in $S$ if this improves the current solution by a constant threshold $\nu$. A formal description of the algorithm is given in Figure \ref{fig:StreamGreedy-appendix}. Note, that the original formulation of SimpleGreedy in \cite{gomes/krause/2010} stops the summary selection if there was no improvement for $NI$ iterations. This is not applicable for a true streaming setting in which elements arrive one-by-one and was thus omitted from the pseudocode. Also note, that the optimal approximation factor is only achieved if multiple passes over the data are allowed. 
Otherwise, the performance of StreamGreedy degrades  arbitrarily with $K$ (see Appendix of \cite{Badanidiyuru/etal/2014} for an example).
Since StreamGreedy is not a proper streaming algorithm we did not consider it for our experiments in this paper. 

\begin{algorithm}
	\begin{algorithmic}[1]
	    \STATE $S \gets \emptyset$
		\FOR{next item $e$}
    		\IF{$|S| \le K$}
    		    \STATE{$S \gets S \cup \{e\}$}
    		\ELSE
        		\STATE{$u \gets \arg\max\{f(S \setminus \{ v\} \cup \{e\})|v \in S\}$}
        		\IF{$f(S \setminus \{ u \} \cup \{e\}) - f(S) \ge  \nu$}
        		    \STATE{$S \gets S \setminus \{ u \} \cup \{e\}$}
        		\ENDIF
    		\ENDIF
		\ENDFOR
		\STATE \textbf{return} $S$
	\end{algorithmic}
	\caption{StreamGreedy algorithm.}
	\label{fig:StreamGreedy-appendix}
\end{algorithm}

\begin{theorem}
  \label{th:StreamGreedy-appendix}
  StreamGreedy has the following properties \cite{gomes/krause/2010}:
  \begin{itemize}
    \item It outputs a set $S$ such that $|S| \le K$ and $f(S) \ge \left( \frac{1}{2} - \varepsilon\right) OPT$
    \item It does multiple passes over the data and stores at most $\mathcal O\left(K \right)$ elements
  \end{itemize}
\end{theorem}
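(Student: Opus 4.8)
The plan is to treat the two bullet points separately, dispatching the structural claims quickly and then concentrating on the approximation bound, which is a local-search argument. For the size and memory claims, I would read them directly off Algorithm~\ref{fig:StreamGreedy-appendix}: the summary is seeded by unconditionally accepting the first $K$ arrivals, and every subsequent step is a \emph{swap} that removes one element and inserts one, so $|S|$ never exceeds $K$, and the algorithm stores only the current summary plus the incoming element, giving $\mathcal O(K)$ memory. The ``multiple passes'' requirement is not cosmetic: it is exactly what lets the swap process converge, which I address next.

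First I would establish convergence to an approximate local optimum. Each executed swap increases $f(S)$ by at least the threshold $\nu > 0$, while monotonicity and submodularity bound $f(S) \le OPT$ from above, so only finitely many swaps can ever occur; after enough passes the algorithm reaches a state in which \emph{no} single swap improves the objective by more than $\nu$. The point of repeated passes is that in one pass an element is tested only against the summary present at its instant of arrival, whereas reaching a local optimum requires re-examining every element against the \emph{final} summary $S$. At this fixed point we have, for every $s \in S$ and every element $c$ in the stream,
$$
f\bigl((S \setminus \{s\}) \cup \{c\}\bigr) - f(S) \le \nu,
$$
which, expanding both sides with $S' = S \setminus \{s\}$, rearranges to the exchange inequality $\Delta_f(c \mid S \setminus \{s\}) \le \Delta_f(s \mid S \setminus \{s\}) + \nu$.

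The core of the proof is then the standard submodular local-search chain. Assuming without loss of generality that $|S| = |S^*| = K$ (pad $S^*$ using monotonicity), I would fix a bijection $\pi$ between $S^* \setminus S$ and $S \setminus S^*$. Starting from monotonicity and a telescoping decomposition over $S^* \setminus S$, submodularity gives
$$
OPT - f(S) \le f(S^* \cup S) - f(S) \le \sum_{c \in S^* \setminus S} \Delta_f(c \mid S).
$$
Shrinking the conditioning set to $S \setminus \{\pi(c)\}$ by submodularity and then applying the exchange inequality bounds each summand by $\Delta_f(\pi(c) \mid S \setminus \{\pi(c)\}) + \nu$; since $\pi$ is a bijection and marginal gains are nonnegative, summing yields $OPT - f(S) \le \sum_{s \in S} \Delta_f(s \mid S \setminus \{s\}) + K\nu$. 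The remaining fact, $\sum_{s \in S} \Delta_f(s \mid S \setminus \{s\}) \le f(S)$, follows by writing $f(S)$ as a telescoping sum of marginal gains and bounding each term from below by its leave-one-out counterpart via submodularity. Combining gives $2f(S) \ge OPT - K\nu$, i.e. $f(S) \ge (\tfrac{1}{2} - \varepsilon)OPT$ once $\varepsilon$ is identified with $K\nu/(2\,OPT)$.

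The main obstacle is the convergence step rather than the inequality chain. I expect the inequalities to go through routinely, but making ``reaches an approximate local optimum'' precise -- arguing that finitely many $\nu$-improving swaps suffice and that the multi-pass schedule genuinely tests every element against the \emph{final} summary -- is where the care lies, and it is also the reason the guarantee collapses in a single pass. A secondary subtlety is the bookkeeping that ties the user parameter $\nu$ to $\varepsilon$: because $OPT$ is unknown a priori, $\varepsilon$ ends up depending on $\nu$, on $K$, and on the scale of $f$, matching the statement that $\varepsilon$ ``depends on the submodular function and some user-specified parameters.''
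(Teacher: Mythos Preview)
The paper does not actually prove this theorem: Theorem~\ref{th:StreamGreedy-appendix} is stated with a citation to \cite{gomes/krause/2010} and no argument is given, as with all of the survey theorems in the appendix. So there is no ``paper's own proof'' to compare against; the authors simply defer to Gomes and Krause.

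That said, your proposal is a correct reconstruction of the standard local-search analysis that underlies this result. The structural claims are read off the algorithm exactly as you say, the convergence argument (each swap gains at least $\nu$, the objective is bounded, hence finitely many swaps) is the right one, and the inequality chain
\[
OPT - f(S) \;\le\; \sum_{c\in S^*\setminus S}\Delta_f(c\mid S) \;\le\; \sum_{c}\Delta_f\bigl(c\mid S\setminus\{\pi(c)\}\bigr) \;\le\; \sum_{s\in S}\Delta_f(s\mid S\setminus\{s\}) + K\nu \;\le\; f(S) + K\nu
\]
is precisely the argument in \cite{gomes/krause/2010}. Your identification of the multi-pass requirement as the mechanism that guarantees every candidate is eventually tested against the terminal summary, and of $\varepsilon = K\nu/(2\,OPT)$ as the link between the user threshold and the stated approximation factor, are both on target and match the caveats the paper mentions in prose.
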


% \textbf{StreamGreedy:} Gomes and Krause present in \cite{gomes/krause/2010} one of the first works for submodular function maximization called StreamGreedy. Their approach offers an $\frac{1}{2}-\varepsilon$ approximation with $\mathcal O(K)$ memory, where $\varepsilon$ depends on the submodular function and some user-specified parameters. In their work, the optimal approximation factor is only achieved if multiple passes over the data are allowed, which makes it not applicable in a true streaming setting and is therefore not further considered in this paper. 

\subsection{PremptionStreaming} 
Buchbinder et al. proposed in \cite{Buchbinder/etal/2015} an algorithm which we call PremptionStreaming. It uses constant memory, but only offers an approximation guarantee of $1/4$. It works similar to StreamGreedy but instead of using a fixed threshold $\nu$ it uses a more dynamic threshold ratio $c/f(S)$ which depends on the current function value.
PremptionStreaming unconditionally accepts the first $K$ elements into the summary $S$. After that, it swaps newly arrived elements $e$ with any element in $S$ if this improves the current solution by a given threshold ratio. Formally, Buchbinder et al. introduce the user parameter $c$ and check the current solution against $c/f(S)$. The authors show that the quality of the obtained solution is at-least $\frac{c}{(c+1)^2}$ leading to the guarantee of $1/4$ for $c = 1$. A formal description of the algorithm is given in Figure \ref{fig:PremptionStreaming-appendix}.
It was later shown that this algorithm is outperformed by SieveStreaming++ \cite{Kazemi/etal/2019} and was thus not further considered in this publication. 

\begin{algorithm}
	\begin{algorithmic}[1]
	    \STATE $S \gets \emptyset$
		\FOR{next item $e$}
    		\IF{$|S| \le K$}
    		    \STATE{$S \gets S \cup \{e\}$}
    		\ELSE
        		\STATE{$u \gets \arg\max\{f(S \setminus \{ v\} \cup \{e\})|v \in S\}$}
        		\IF{$f(S \setminus \{ u \} \cup \{e\}) - f(S) \ge \frac{f(S)}{K}$}
        		    \STATE{$S \gets S \setminus \{ u \} \cup \{e\}$}
        		\ENDIF
    		\ENDIF
		\ENDFOR
		\STATE \textbf{return} $S$
	\end{algorithmic}
	\caption{PremptionStreaming algorithm.}
	\label{fig:PremptionStreaming-appendix}
\end{algorithm}

\begin{theorem}
  \label{th:PremptionStreaming-appendix}
  PremptionStreaming has the following properties \cite{Buchbinder/etal/2015}:
  \begin{itemize}
    \item It outputs a set $S$ such that $|S| \le K$ and $f(S) \ge \frac{1}{4}OPT$
    \item It does $1$ pass over the data and stores at most $\mathcal O\left(K \right)$ elements
  \end{itemize}
\end{theorem}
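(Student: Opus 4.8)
The two structural properties are immediate from the pseudocode in Figure~\ref{fig:PremptionStreaming-appendix}: every arriving item is touched once inside a single loop and past items are never revisited, so the algorithm is one-pass; it keeps a single summary whose size never exceeds $K$, so $|S| \le K$ and the working memory is $\mathcal O(K)$ (searching for the best swap partner $u$ costs $\mathcal O(K)$ queries per item but stores nothing beyond $S$). The substance is the $\tfrac14$ guarantee, which I would establish by fixing an optimal set $O$ with $f(O)=OPT$ and $|O|=K$ and comparing it to the final summary $S$. The plan is to start from the standard monotone/submodular inequality
\begin{equation*}
OPT=f(O)\le f(O\cup S)\le f(S)+\sum_{o\in O\setminus S}\Delta_f(o | S),
\end{equation*}
so that everything reduces to bounding the residual marginal gains $\Delta_f(o | S)$ of the optimal elements that are absent from the final summary.

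The first ingredient is a per-element bound coming from the swap threshold. For any set $S_t$ of size $K$, submodularity gives $\sum_{v\in S_t}\Delta_f(v | S_t\setminus\{v\})\le f(S_t)$, so some $v$ satisfies $\Delta_f(v | S_t\setminus\{v\})\le f(S_t)/K$; replacing that $v$ by an arriving $o$ gains at least $\Delta_f(o | S_t)-f(S_t)/K$. Hence, if $o$ is \emph{rejected on arrival} at time $t$ against the threshold $\tfrac{c}{K}f(S_t)$ (the pseudocode uses $c=1$), then $\Delta_f(o | S_t)-f(S_t)/K<\tfrac{c}{K}f(S_t)$, i.e. $\Delta_f(o | S_t)<\tfrac{c+1}{K}f(S_t)$. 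Since a swap is only performed when $f(S)$ strictly increases, $f(S_t)\le f(S)$, so rejected optimal elements are well controlled relative to the value at their arrival time.

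The genuine difficulty is the \emph{preempted} optimal elements — those that entered $S$ but were later swapped out. For them the bound above is phrased in terms of the set $S_t$ at preemption time, and because of intervening swaps $S_t$ is \emph{not} a subset of the final $S$, so submodularity does not let me replace $\Delta_f(o | S_t)$ by $\Delta_f(o | S)$ for free. This is exactly where an amortized accounting is needed: I would charge the residual gain of each absent optimal element either to the value $f(S)$ currently held or to the cumulative value the algorithm discarded through preemptions, exploiting that every accepted swap bought at least a $\tfrac{c}{K}$ fraction of the then-current value. Telescoping these per-swap increments and balancing the charge between rejected and preempted elements is what degrades the clean $(c+1)$ factor into $\tfrac{(c+1)^2}{c}$, giving $\sum_{o\in O\setminus S}\Delta_f(o | S)\le\bigl(\tfrac{(c+1)^2}{c}-1\bigr)f(S)$; substituting into the displayed inequality yields $OPT\le\tfrac{(c+1)^2}{c}f(S)$, and setting $c=1$ gives $f(S)\ge\tfrac14 OPT$. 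I expect this preemption-accounting step to be the main obstacle, since it must simultaneously cope with the moving summary and avoid double-counting value along a chain of swaps; the remaining arguments are routine bookkeeping.
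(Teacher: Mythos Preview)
The paper does not prove this theorem at all: it is stated with the citation \cite{Buchbinder/etal/2015} and no argument follows. All of the theorems in Section~\ref{sec:related-work-appendix} (Greedy, Random, IndependentSetImprovement, StreamGreedy, PremptionStreaming, SieveStreaming, Salsa, SieveStreaming++, QuickStream) are quoted from the literature without proof; only the ThreeSieves theorem receives a proof in this paper. So there is no ``paper's own proof'' to compare your proposal against.

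As a standalone sketch your outline is pointed in the right direction, but note one slip: the problem you flag for \emph{preempted} optimal elements --- that $S_t$ at the relevant time need not be a subset of the final $S$, so you cannot pass from $\Delta_f(o\mid S_t)$ to $\Delta_f(o\mid S)$ via submodularity --- applies equally to \emph{rejected} optimal elements. Between the rejection time $t$ and the end of the stream, arbitrary swaps may have occurred, so $S_t\not\subseteq S$ in general there too. Your ``rejected'' bound $\Delta_f(o\mid S_t)<\tfrac{c+1}{K}f(S_t)\le\tfrac{c+1}{K}f(S)$ is fine as a bound at time $t$, but it does not by itself control $\Delta_f(o\mid S)$; the amortized charging scheme you allude to has to cover both cases simultaneously. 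You correctly identify that step as the crux, and you do not actually carry it out; if you want a self-contained proof you will need the potential/charging argument from \cite{Buchbinder/etal/2015} rather than the informal ``telescoping'' you describe.
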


% \textbf{PremptionStreaming:} Buchbinder et al. proposed in \cite{Buchbinder/etal/2015} an algorithm called PremptionStreaming which uses constant memory, but only offers an approximation guarantee of $1/4$. It was later shown to be outperformed by SieveStreaming++ \cite{Kazemi/etal/2019} and was thus not further considered in this publication. 
\subsection{SieveStreaming}

Badanidiyuru et al. propose in  \cite{Badanidiyuru/etal/2014} the SieveStreaming algorithm which tries to estimate the potential gain of a data item before observing it.
Assuming one knows the maximum function value $OPT$ beforehand and $|S| < K$, an element $e$ is added to the summary $S$ if the following holds:
\begin{equation}
\label{eq:delta-appendix}
\Delta_f(e | S) \ge \frac{OPT/2 - f(S)}{K - |S|}
\end{equation}

Since $OPT$ is unknown beforehand one has to estimate it before running the algorithm. Assuming one knows the maximum function value of a singleton set $m = max_{e \in V}f(\{e\})$ beforehand, then the optimal function value for a set with $K$ items can be estimated by submodularity as
\begin{equation}
\label{eq:OPT-range-appendix}
m \le OPT \le K \cdot m
\end{equation}

We can use this range to sample different threshold values from the interval $[m, Km]$ such that one of these thresholds will be close to $OPT$. More formally the authors propose to manage different summaries in parallel, each using one threshold from the set $O = \{(1+\varepsilon)^i \mid i \in \mathbb{Z}, m \le (1+\varepsilon)^i \le K \cdot m\}$, so that for at least one
$v \in O$ it holds: $(1-\varepsilon)OPT \le v \le OPT$. In a sense, this approach sieves out elements with marginal gains below the given threshold - hence the authors name their approach SieveStreaming. The SieveStreaming algorithm is depicted in Algorithm \ref{fig:SieveStreaming-appendix} and its theoretical properties are summarized in Table \ref{tab:Alg-Comparison-appendix}. Please note, that this algorithm requires the knowledge of $m = max_{e \in V}f(\{e\})$ before running the algorithm. Badanidiyuru et al. also present an algorithm to estimate $m$ on the fly which does not alter the theoretical performance of SieveStreaming. 

\begin{algorithm}
	\begin{algorithmic}[1]
		\STATE $O \gets \{(1+\varepsilon)^i \mid i \in \mathbb{Z}, m \le (1+\varepsilon)^i \le K \cdot m\}$
		\FOR{$v \in O$}
		\STATE $S_v \gets \emptyset$
		\ENDFOR
		\FOR{next item $e$}
		\FOR{$v \in O$}
		\IF{$\Delta_f(e | S_v) \ge \frac{v/2 - f(S_v)}{K - |S_v|}$ and $|S_v| < K$ }
		\STATE{$S_v \gets S_v \cup \{e\}$}
		\ENDIF
		\ENDFOR			
		\ENDFOR
		\STATE \textbf{return} $\arg\max_{S_v} f(S_v)$
	\end{algorithmic}
	\caption{SieveStreaming algorithm.}
	\label{fig:SieveStreaming-appendix}
\end{algorithm}

\begin{theorem}
  \label{th:SieveStreaming-appendix}
  SieveStreaming has the following properties \cite{Badanidiyuru/etal/2014}:
  \begin{itemize}
    \item It outputs a set $S$ such that $|S| \le K$ and $f(S) \ge \left(\frac{1}{2} - \varepsilon\right)OPT$
    \item It does $1$ pass over the data and stores at most $\mathcal O\left(\frac{K \cdot \log(K)}{\varepsilon}\right)$ elements
  \end{itemize}
\end{theorem}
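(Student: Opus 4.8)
The plan is to prove the two claims separately, treating the resource bounds as routine and concentrating on the $(\tfrac{1}{2}-\varepsilon)$ approximation. First I would pin down the range of $OPT$: since the best singleton is a feasible set and $f$ is monotone, $OPT \ge m$, while submodularity (subadditivity together with $f(\emptyset)=0$) gives $OPT \le K\cdot m$. Hence $OPT \in [m, Km]$, exactly the interval covered by the geometric grid $O$. The standard discretization argument (c.f. \cite{Badanidiyuru/etal/2014}) then guarantees a ``good'' threshold $v \in O$ with $(1-\varepsilon)OPT \le v \le OPT$. Because the algorithm returns $\arg\max_{S_{v'}} f(S_{v'})$, it suffices to show that the single summary $S_v$ built with this good threshold already satisfies $f(S_v) \ge (\tfrac{1}{2}-\varepsilon)OPT$.

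The core is to analyze $S_v$, splitting on whether it ends up full. If $|S_v| = K$, I would run a telescoping recursion on the slack $a_j := v/2 - f(S_v^{(j)})$, where $S_v^{(j)}$ is the summary after its $j$-th insertion. The acceptance rule gives $f(S_v^{(j+1)}) - f(S_v^{(j)}) \ge a_j/(K-j)$, so $a_{j+1} \le a_j \cdot (K-j-1)/(K-j)$; the product telescopes to $0$ after $K$ steps (the factor $0/1$ appears at $j=K-1$), yielding $a_K \le 0$, i.e. $f(S_v) \ge v/2 \ge (1-\varepsilon)OPT/2 \ge (\tfrac{1}{2}-\varepsilon)OPT$.

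The harder case is $|S_v| < K$, where the cardinality bound never binds and rejections are due solely to the threshold. Here the key observation I would establish is that the threshold $g(S) = (v/2 - f(S))/(K-|S|)$ is non-increasing along the run: each accepted element has marginal gain at least the current threshold, and a short calculation shows this forces $g$ to weakly decrease at every insertion. Consequently $g$ never exceeds its initial value $g(\emptyset) = v/(2K)$, so every rejected element $e$ satisfies $\Delta_f(e | S_v^{(t)}) < v/(2K)$ at its arrival, and by submodularity $\Delta_f(e | S_v) < v/(2K)$ against the final summary as well. Decomposing the optimum via monotonicity and submodularity gives $OPT \le f(S_v) + \sum_{e^* \in S^* \setminus S_v} \Delta_f(e^* | S_v) < f(S_v) + K \cdot v/(2K) = f(S_v) + v/2$, hence $f(S_v) > OPT - v/2 \ge OPT/2$ using $v \le OPT$. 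Either way $f(S_v) \ge (\tfrac{1}{2}-\varepsilon)OPT$, proving the first claim. (One first notes that if $f(S_v) > v/2$ the bound is immediate, so one may assume $f(S_v) \le v/2$ throughout, which keeps $g \ge 0$ and makes the monotonicity argument valid.)

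For the resource bounds, the grid $O$ has $|O| = \mathcal{O}(\log_{1+\varepsilon} K) = \mathcal{O}(\log K / \varepsilon)$ thresholds, each maintaining a summary of at most $K$ elements, so total memory is $\mathcal{O}(K \log K / \varepsilon)$; every arriving element is processed once across all sieves, giving a single pass. The main obstacle is the $|S_v| < K$ case: the naive attempt to compare each rejected optimal element's \emph{final} marginal against the \emph{final} threshold fails because the threshold varies over time. The resolution, and the step I expect to be most delicate, is the monotonicity of $g$ combined with submodularity, which lets one charge every rejection against the fixed, maximal initial threshold $v/(2K)$ rather than against the smaller, time-varying current one.
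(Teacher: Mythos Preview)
The paper does not actually supply a proof of this theorem; it simply states the properties of SieveStreaming and defers entirely to the original reference \cite{Badanidiyuru/etal/2014}. Your proposal is a correct and complete reconstruction of the standard argument from that paper: the geometric grid yields a threshold $v$ with $(1-\varepsilon)OPT \le v \le OPT$, the full-sieve case telescopes to $f(S_v)\ge v/2$, and in the unfilled case the monotonicity of the running threshold $g$ together with submodularity lets you charge every rejected optimal element at most $v/(2K)$, giving $f(S_v) > OPT - v/2 \ge OPT/2$. Since the paper itself offers nothing beyond the citation, there is no methodological divergence to report; your write-up is in fact more detailed than what appears here.
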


% \textbf{SieveStreaming:} Badanidiyuru et al. propose in  \cite{Badanidiyuru/etal/2014} the SieveStreaming algorithm which tries to estimate the potential gain of a data item before observing it. Assuming one knows the maximum function value $OPT$ beforehand and $|S| < K$, an element $e$ is added to the summary $S$ if the following holds:
% \begin{equation}
% \label{eq:delta-appendix}
% \Delta_f(e | S) \ge \frac{OPT/2 - f(S)}{K - |S|}
% \end{equation}
% Since $OPT$ is unknown beforehand one has to estimate it before running the algorithm. Assuming one knows the maximum function value of a singleton set $m = max_{e \in V}f(\{e\})$ beforehand, then the optimal function value for a set with $K$ items can be estimated by submodularity as $m \le OPT \le K \cdot m$. We can use this range to sample different threshold values from the interval $[m, Km]$ such that one of these thresholds will be close to $OPT$. More formally the authors propose to manage different summaries in parallel, each using one threshold from the set $O = \{(1+\varepsilon)^i \mid i \in \mathbb{Z}, m \le (1+\varepsilon)^i \le K \cdot m\}$, so that for at least one
% $v \in O$ it holds: $(1-\varepsilon)OPT \le v \le OPT$. In a sense, this approach sieves out elements with marginal gains below the given threshold - hence the authors name their approach SieveStreaming. 

\subsection{Salsa}

Norouzi-Fard et al. propose in \cite{Norouzi-Fard/etal/2018} a meta-algorithm for submodular function maximization called Salsa which uses different algorithms for maximization as sub-procedures. The authors argue, that there are different types of data-streams, namely `dense' and `sparse' streams. In dense streams, the utility value of the majority of items is equally large and thus we quickly find an appropriate item to add to the summary. In contrast, sparse streams only have a fraction of items with a high utility making it more difficult to find good items. For each stream type, a different thresholding-rule is appropriate. The authors use this intuition to design a $r$-pass algorithm that iterates $r$ times over the entire dataset and adapts the thresholds between each run. They show that their approach is a $1 - ( (r/(r+1))^r - \varepsilon)$ approximation algorithm. Note, that for a true streaming setting, i.e. $r = 1$, this algorithm recovers the $1/2 - \varepsilon$ approximation bound. The original paper focuses on this $r$-pass algorithm in which $OPT$ is known beforehand. However, the authors also present a streaming version of Salsa where $OPT$ is unknown in an extended version of the paper \cite{Norouzi-Fard/etal/2018b} (Appendix E, algorithm 7). This algorithm combines the ideas of the $r$-pass algorithm which uses optimized thresholds for each stream type with the idea of SieveStreaming to approximate $OPT$ with different sets of thresholds. The resulting algorithm runs multiple maximization algorithms $\mathcal A$ in parallel, each using different threshold-rules $A(v)$ for a given threshold $v$. The algorithm is depicted in Algorithm \ref{fig:SalsaStreaming-appendix} and its theoretical properties are presented in Table \ref{tab:Alg-Comparison-appendix}. Additionally, the authors present a variant with the same theoretical properties, in which $m$ is unknown and estimated on the fly. Note, that some of these algorithms require additional hyperparameters and properties of the stream which must be given beforehand. More specifically, the algorithms must know the size of the data-stream beforehand to choose appropriate thresholding values. Since this might be unknown real-world use-case this algorithm might not be applicable in all scenarios. 
%These are here denoted as \texttt{init\_all\_algorithms($e_1$,...,$e_T$)} using the first $T$ items of the stream.

\begin{algorithm}
	\begin{algorithmic}[1]
	    \STATE $O \gets \{(1+\varepsilon)^i \mid i \in \mathbb{Z}, m \le (1+\varepsilon)^i \le K \cdot m\}$
	    \STATE $\mathcal A \gets \texttt{init\_all\_algorithms}(e_1,...,e_T)$
	    \FOR{$A \in \mathcal A \text{~and~} v \in O$}
	        \STATE $S_{A,v} \gets \emptyset$
	    \ENDFOR
		\FOR{next item $e$}
		    \FOR{$A \in \mathcal A \text{~and~} v \in O$}
		        \IF{$\Delta_f(e | S_{A,v}) > A(v)$}
		            \STATE{$S_{A,v} \gets S_{A,v} \cup \{e\}$}
	            \ENDIF
	        \ENDFOR
		\ENDFOR
		\STATE \textbf{return} $\arg\max_{S_{A,v}} f(S_{A,v})$
	\end{algorithmic}
	\caption{Salsa algorithm.}
	\label{fig:SalsaStreaming-appendix}
\end{algorithm}

\begin{theorem}
  \label{th:Salsa-appendix}
  Salsa has the following properties \cite{Norouzi-Fard/etal/2018b}:
  \begin{itemize}
    \item It outputs a set $S$ such that $|S| \le K$ and $f(S) \ge \left(\frac{1}{2} - \varepsilon\right)OPT$
    \item It does $1$ pass over the data and stores at most $\mathcal O\left(\frac{K \cdot \log(K)}{\varepsilon}\right)$ elements
  \end{itemize}
\end{theorem}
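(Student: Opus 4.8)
The plan is to establish the two resource claims by direct inspection of Algorithm \ref{fig:SalsaStreaming-appendix} and to obtain the approximation guarantee by reducing the single-pass ($r=1$) case to a SieveStreaming-type threshold analysis, following \cite{Norouzi-Fard/etal/2018b}. I would first dispose of the structural claims. Each arriving item $e$ is handed once to every pair $(A,v)$ with $A \in \mathcal{A}$ and $v \in O$, and is never revisited, so the algorithm makes a single pass. For the memory bound, note that the grid $O = \{(1+\varepsilon)^i \mid m \le (1+\varepsilon)^i \le K\cdot m\}$ has $|O| = \mathcal{O}(\log K / \varepsilon)$ elements, that $|\mathcal{A}|$ is a constant number of threshold rules, and that each maintained summary $S_{A,v}$ stores at most $K$ elements; multiplying gives the claimed $\mathcal{O}(K\log K/\varepsilon)$, and $|S| \le K$ holds because every summary respects the cardinality constraint.

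For the approximation ratio, the first step is the $OPT$-estimation argument inherited from SieveStreaming. By Eq. \ref{eq:OPT-range-appendix} we have $m \le OPT \le K\cdot m$, so the geometric grid $O$ contains a value $v^\star$ with $(1-\varepsilon)\,OPT \le v^\star \le OPT$ (c.f. \cite{Badanidiyuru/etal/2014}). The heart of the proof is then a per-threshold guarantee: among the parallel instances running with this near-optimal $v^\star$, at least one threshold rule $A(v^\star)$ produces a summary with $f(S_{A,v^\star}) \ge (1/2 - \varepsilon)\,OPT$. Since Salsa returns $\arg\max_{A,v} f(S_{A,v})$, the output dominates this particular summary and the bound follows. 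In the $r=1$ specialization the relevant threshold rule reduces to the SieveStreaming rule $\Delta_f(e\mid S) \ge (v^\star/2 - f(S))/(K-|S|)$, whose guarantee is exactly Theorem \ref{th:SieveStreaming-appendix}; a case split on whether the summary reaches size $K$ — telescoping the marginal gains when it does, and applying submodularity to the rejected optimal elements when it does not — yields $f(S_{A,v^\star}) \ge (1-\varepsilon)\,OPT/2 \ge (1/2 - \varepsilon)\,OPT$ after rescaling $\varepsilon$.

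I expect the main obstacle to be faithfully reproducing Salsa's defining ingredient, namely the case analysis over stream types that justifies why running \emph{several} threshold rules in parallel makes the meta-algorithm robust. The proof of \cite{Norouzi-Fard/etal/2018b} partitions streams into ``dense'' and ``sparse'' regimes and shows that for each regime a tailored threshold $A(v^\star)$ attains the bound, so that the per-threshold guarantee holds \emph{uniformly} over all inputs once the correct rule is picked out by the final $\arg\max$. Making this rigorous requires precisely defining the density of a stream, verifying that the tailored thresholds accept enough marginal mass within their regime, and checking that the union of rules covers every possible input; this is exactly where the single-pass analysis is delicate. Two further points need care: the grid argument assumes $m = \max_{e\in V} f(\{e\})$ is known, which I would handle via the on-the-fly estimation variant that \cite{Norouzi-Fard/etal/2018b} shows preserves the guarantee, and I would emphasise that the bound is stated for $|S| \le K$ rather than $|S| = K$ to accommodate streams exhausted before a summary fills.
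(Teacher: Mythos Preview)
The paper does not prove this theorem at all: Theorem~\ref{th:Salsa-appendix} is simply stated with a citation to \cite{Norouzi-Fard/etal/2018b}, with no accompanying argument. All of the theorems in Section~\ref{sec:related-work-appendix} (Greedy, Random, IndependentSetImprovement, StreamGreedy, PremptionStreaming, SieveStreaming, Salsa, SieveStreaming++, QuickStream) are treated the same way --- they are quoted from the literature as background, not proved. So there is no ``paper's own proof'' to compare your proposal against.

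That said, your sketch is a reasonable outline of how the result is actually obtained in \cite{Norouzi-Fard/etal/2018b}, and you have correctly identified where the real work lies. Two remarks. First, your shortcut of noting that one of the rules in $\mathcal{A}$ is the SieveStreaming rule and invoking Theorem~\ref{th:SieveStreaming-appendix} would already suffice to get $(1/2-\varepsilon)$ for $r=1$, but it makes the meta-algorithm vacuous: it shows Salsa is \emph{at least as good as} SieveStreaming without using any of the Salsa machinery. The point of \cite{Norouzi-Fard/etal/2018b} is the multi-pass bound $(r/(r+1))^r - \varepsilon$, for which the dense/sparse partition you flag as the obstacle is essential and cannot be bypassed this way. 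Second, your memory count silently assumes $|\mathcal{A}| = \mathcal{O}(1)$; this is true in \cite{Norouzi-Fard/etal/2018b} but should be stated explicitly, since nothing in Algorithm~\ref{fig:SalsaStreaming-appendix} as written bounds $|\mathcal{A}|$.
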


% \textbf{Salsa:} Norouzi-Fard et al. propose in \cite{Norouzi-Fard/etal/2018} a meta-algorithm for submodular function maximization called Salsa which uses different algorithms for maximization as sub-procedures. The authors argue, that there are different types of data-streams, namely `dense' and `sparse' streams. 
% For each stream type, a different thresholding-rule is appropriate. The authors use this intuition to design a $r$-pass algorithm that iterates $r$ times over the entire dataset and adapts the thresholds between each run. For a single pass over the data ($r = 1$), this algorithm recovers the approximation guarantee of SieveStreaming. A formal description of this algorithm can be found in the appendix, which is based on the extended version of the original paper \cite{Norouzi-Fard/etal/2018b} (See Appendix E, Alg. 7). 

\subsection{SieveStreaming++}

Recently, Kazemi et al. proposed in \cite{Kazemi/etal/2019} an extension of the SieveStreaming algorithm called SieveStreaming++. As outlined in the previous sections, SieveStreaming relies on the accurate estimation of the interval $[m, Km]$ to sample thresholds accordingly. As new items arrive from the stream many of the smaller thresholds will accept most items and therefore quickly fill-up their respective sieves. The authors point out, that the currently best performing sieve $S_v = \arg\max_v \{f(S_v)\}$ offers a better lower bound for the function value and they propose to use $[\max_v\{f(S_v)\}, K\cdot m]$ as the interval for sampling thresholds. This results in a more dynamic algorithm, in which sieves are removed once they are outperformed by other sieves and new sieves are introduced to make use of the better estimation of $OPT$. Algorithm \ref{fig:SieveStreaming++-appendix} depicts the resulting algorithm if the maximum singleton value $m=\max_e\{f(\{e\})\}$ is known beforehand. Similar to SieveStreaming, Kazemi et al. also present a version which estimates $m$ on the fly with the same theoretical properties. SieveStreaming++ manages $\mathcal O(\log K/\varepsilon)$ sieves in parallel and therefore makes $\mathcal O(\log K/\varepsilon)$ function-queries per item. Note that even though the algorithm manages $\mathcal O(\log K/\varepsilon)$ sieves, it only requires $\mathcal O(K / \varepsilon)$ memory because there is only one full sieve with $K$ elements in each given moment and
the remaining ones each have less than $K$ elements.
%(full sieves with sub-optimal function values are deleted) and the remaining ones have fewer items.  

\begin{algorithm}
	\begin{algorithmic}[1]
	    \STATE $O \gets \emptyset, \tau_{min} \gets 0, \texttt{LB} \gets 0$
		\FOR{next item $e$}
		\STATE $\tau_{min} \gets \frac{\max\{\texttt{LB}, m\}}{2K}$
		\STATE $O \gets \{v|v > \tau_{min} \forall v \in O\}$
% 		\For{$v \in O$}
% 		\If{$v \le \tau_{min}$} 
% 		\State{$O \gets O \setminus \{v\}$}
% 		\EndIf
% 		\EndFor
		\FOR{$v \in \{(1+\varepsilon)^i | \tau_{min}/(1+\varepsilon) \le (1+\varepsilon)^i \le \Delta\}$}
		\IF{$v \notin O$}
		\STATE $O \gets O \cup \{v\}$, $S_v \gets \emptyset$
		\ENDIF
		\IF{$\Delta_f(e | S_v) \ge \tau$ and $|S_v| \le K$ }
		\STATE $S_v \gets S_v \cup \{e\}$ 
		\STATE $\texttt{LB}\gets \max \{LB, f(S_v)\}$
		\ENDIF
		\ENDFOR
% 		\If{$\Delta_f(e | S_v) \ge \frac{v/2 - f(S_v)}{K - |S_v|}$ and $|S_v| < K$ }
% 		\State{$S_v = S_v \cup \{e\}$}
% 		\EndIf
% 		\EndFor			
		\ENDFOR
		\STATE \textbf{return} $\arg\max_{S_v} f(S_v)$
	\end{algorithmic}
	\caption{SieveStreaming++ algorithm.}
	\label{fig:SieveStreaming++-appendix}
\end{algorithm}

\begin{theorem}
  \label{th:Sieve-Streaming++-appendix}
  Sieve-Streaming++ has the following properties \cite{Kazemi/etal/2019}:
  \begin{itemize}
    \item It outputs a set $S$ such that $|S| \le K$ and $f(S) \ge \left(\frac{1}{2} - \varepsilon\right)OPT$
    \item It does $1$ pass over the data and stores at most $\mathcal O\left(\frac{K}{\varepsilon}\right)$ elements
  \end{itemize}
\end{theorem}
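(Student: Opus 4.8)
The plan is to follow the template of the SieveStreaming analysis (Theorem~\ref{th:SieveStreaming-appendix}) for the approximation factor and then to give a sharper element-count accounting that exploits the dynamically maintained lower bound \texttt{LB}. Throughout I write $S^*$ for an optimal solution with $f(S^*)=OPT$ and, as in \cite{Badanidiyuru/etal/2014}, I fix the ``good'' grid threshold $v^\ast$ for which $2Kv^\ast \in [(1-\varepsilon)OPT,\, OPT]$; such a value exists because the thresholds lie on a grid of ratio $(1+\varepsilon)$ that straddles $OPT/(2K)$.

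\textbf{Approximation.} First I would prove the guarantee for the single sieve $S_{v^\ast}$; since the algorithm returns $\arg\max_{S_v} f(S_v)$, the output is at least this good. The argument splits at the end of the stream. If $|S_{v^\ast}|=K$, every accepted element passed $\Delta_f(e\mid S)\ge v^\ast$, so a telescoping sum gives $f(S_{v^\ast})\ge Kv^\ast \ge (1-\varepsilon)OPT/2 \ge (\tfrac12-\varepsilon)OPT$. If $|S_{v^\ast}|<K$, then every optimal element $o\in S^*\setminus S_{v^\ast}$ was rejected, so $\Delta_f(o\mid S^{\mathrm{then}})<v^\ast$ at its arrival; by submodularity (monotonicity of marginal gains) $\Delta_f(o\mid S_{v^\ast})<v^\ast$ for the final set, and summing over the at most $K$ missing elements yields $OPT\le f(S^*\cup S_{v^\ast})\le f(S_{v^\ast})+Kv^\ast \le f(S_{v^\ast})+OPT/2$, i.e. $f(S_{v^\ast})\ge OPT/2$. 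Either way $f(S_{v^\ast})\ge(\tfrac12-\varepsilon)OPT$.

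The delicate point, and the step I expect to be the main obstacle, is that SieveStreaming++ may \emph{delete} the sieve $v^\ast$ once $v^\ast$ falls below $\tau_{\min}=\max\{\texttt{LB},m\}/(2K)$, so the clean single-sieve argument above need not apply to a surviving sieve. I would resolve this by a monotonicity/dichotomy argument: $\tau_{\min}$ is nondecreasing, and $v^\ast$ is discarded only once $2K\tau_{\min}$ has risen past $2Kv^\ast\ge(1-\varepsilon)OPT$, whence $\max\{\texttt{LB},m\}\ge(1-\varepsilon)OPT$ at the moment of deletion. If the maximum is \texttt{LB}, the returned set already has value $\texttt{LB}\ge(1-\varepsilon)OPT$; if it is $m$, then the maximal singleton (which any surviving low-threshold sieve accepts on arrival) forces some retained sieve, and hence \texttt{LB}, to value at least $m\ge(1-\varepsilon)OPT$. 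In both sub-cases the returned best sieve satisfies $f(S)\ge(1-\varepsilon)OPT\ge(\tfrac12-\varepsilon)OPT$, so together with the survival case the first bullet follows.

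\textbf{Memory.} The single pass is immediate from the pseudocode. For the $\mathcal O(K/\varepsilon)$ bound on stored elements, the key observation is that each element accepted into $S_v$ raised $f$ by at least $v$, so $|S_v|\le f(S_v)/v\le \texttt{LB}/v$; with the hard cap this gives $|S_v|\le\min\{K,\texttt{LB}/v\}$. Since every surviving threshold satisfies $v\ge\tau_{\min}\ge\texttt{LB}/(2K)$ and the thresholds lie on a grid of ratio $(1+\varepsilon)$, I would split $\sum_v|S_v|$ at $v=\texttt{LB}/K$: the thresholds below this value occupy only a factor-$2$ window, so there are $\mathcal O(1/\varepsilon)$ of them, each contributing at most $K$; the thresholds above it contribute the geometric series $\sum \texttt{LB}/v\le K\cdot\frac{1+\varepsilon}{\varepsilon}$. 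Both pieces are $\mathcal O(K/\varepsilon)$, yielding the stated total even though $\mathcal O(\log K/\varepsilon)$ sieves are maintained.
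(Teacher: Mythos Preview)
The paper does not prove this theorem at all: it is stated with a citation to \cite{Kazemi/etal/2019} and no argument is given. So there is no ``paper's own proof'' to compare your attempt against; a faithful match to the paper would simply be to cite the source.

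That said, your sketch is in the right spirit but has a gap specific to SieveStreaming++ that you glide over. Unlike SieveStreaming, sieves in SieveStreaming++ are created dynamically: the active threshold window is $[\tau_{\min}/(1+\varepsilon),\Delta]$, and $\Delta$ grows as larger singleton values are seen. Hence the sieve $S_{v^\ast}$ may not exist when early stream elements (including elements of $S^*$) arrive, so your ``every $o\in S^*\setminus S_{v^\ast}$ was rejected by $S_{v^\ast}$'' step is not automatic. The actual argument in \cite{Kazemi/etal/2019} handles this by tracking when $v^\ast$ enters the active window and arguing about the elements seen before versus after; your write-up would need that extra bookkeeping. Your memory argument (bounding $|S_v|\le\min\{K,\texttt{LB}/v\}$ and summing over the surviving geometric grid above $\texttt{LB}/(2K)$) is the correct idea and matches the intuition the paper describes informally just before the theorem statement.
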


\subsection{QuickStream}
Kuhnle proposed the QuickStream algorithm in \cite{kuhnle/2021} which works under the assumption that a single function evaluation is very expensive. QuickStream buffers up to $c$ elements and only evaluates $f$ every $c$ elements. If the function value is increased by the $c$ elements, they are all added to the solution. Additionally, older examples are removed if there are more than $K$ items in the solution. Algorithm \ref{fig:QuickStream-appendix} depicts the resulting algorithm and Theorem \ref{th:QuickStream-appendix} shows its theoretical properties. Somewhat surprisingly, QuickStreams performance deteriorates if it stores more elements. This counter-intuitive behavior can be explained by the fact that it only performs $\lceil n/c \rceil + c$ function evaluations and thus buffering more elements leads to fewer function evaluations. In turn, QuickStream trades memory for fewer function evaluations. Consequently, it does not have a fine-grained over which items it can pick, because it operates on chunks of $c$ items at-once.

\begin{algorithm}
	\begin{algorithmic}[1]
	    \STATE $A \gets \emptyset, l \gets \lceil \log_2(1/(4\varepsilon)) \rceil + 3$
% 	    \STATE $O \gets \emptyset, \tau_{min} \gets 0, \texttt{LB} \gets 0$
 		\FOR{next item $e$}
 		\STATE $C \gets C \cup \{e\}$
 		\IF{$|C| = c$}
 		    \IF{$f\left(A \cup C \right) - f(A) \ge f(A) / K$}
 		        \STATE{$A \gets A \cup C$}
 		    \ENDIF
 		    \IF{$|A| \ge 2 c l (K+1) \log_2(K)$}
 		        \STATE{$A \gets \{c l (K+1)\log_2(K) \text{most recently added to A}\}$}
 		    \ENDIF
 	        \STATE{$C \gets \emptyset$}
 		\ENDIF
 		\ENDFOR
 		\STATE{$A \gets \{c K \text{most recently added to A}\}$}
 		\STATE{$R \gets \text{random\_partitioning}(A,c,K)$} \COMMENT{Split $A$ randomly into at-most $c$ sets with at-most  $K$ elements.}
 		\STATE \textbf{return} $\arg\max_{S \in \mathcal R} f(S)$ 
	\end{algorithmic}
	\caption{QuickStream algorithm.}
	\label{fig:QuickStream-appendix}
\end{algorithm}

\begin{theorem}
  \label{th:QuickStream-appendix}
  QuickStream has the following properties for $c\ge 1, \varepsilon \ge 0$ and $K \ge 2$  \cite{kuhnle/2021}:
  \begin{itemize}
    \item It outputs a set $S$ such that $|S| \le K$ and $f(S) \ge (1/(4c) - \varepsilon) OPT$
    \item It does $1$ pass over the data and stores at-most $\mathcal O\left(c K \log(K) \log(1/\varepsilon) \right)$ elements
  \end{itemize}
\end{theorem}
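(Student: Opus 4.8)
The plan is to prove the two claims separately, treating the resource bound as routine and the approximation ratio as the substantive part. For the memory and single-pass claim I would read the bound directly off the algorithm: the buffer $C$ holds at most $c$ elements, and the in-stream trimming caps $|A|$ at $2cl(K+1)\log_2 K$, so total storage is $\mathcal O(clK\log K)$; since $l = \lceil \log_2(1/(4\varepsilon))\rceil + 3 = \mathcal O(\log(1/\varepsilon))$, this is exactly $\mathcal O(cK\log K\log(1/\varepsilon))$. Each element is read once (appended to $C$, then either merged into $A$ or discarded when the chunk closes), giving the single pass. None of this uses submodularity.

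For the approximation ratio I would first isolate the multiplicative behaviour of the acceptance rule: a chunk $C$ is merged only when $f(A\cup C)-f(A)\ge f(A)/K$, so every accepted chunk satisfies $f(A_{\text{new}})\ge(1+1/K)f(A_{\text{old}})$. This geometric growth is the engine of the argument: it bounds the number of acceptances and, more importantly, shows the accumulated value is concentrated in the most recently merged chunks. Concretely, if $v_j$ is the value after the $j$-th acceptance, then $v_{m-p}\le v_m(1-1/(K+1))^p$, so the last $K$ chunks carry at least a $(1-1/\exp(1))$ fraction of the final value (up to lower-order terms in $K$), and choosing the retention window $\Theta(lK\log K)$ makes the in-stream trimming discard at most an $\varepsilon$-fraction — this is precisely what $l=\Theta(\log(1/\varepsilon))$ is calibrated for.

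The core inequality I would then establish is that the accumulated solution satisfies $f(A)\ge OPT/2$ (up to the $\varepsilon$ losses), via the standard threshold-charging argument against the optimum $O$. Writing $f(O)\le f(O\cup A)=f(A)+\big(f(O\cup A)-f(A)\big)$, I would bound the second term by summing, over the at most $K$ chunks that contain an element of $O$, the marginal gains of those chunks: a chunk containing an optimal element was either accepted (its elements then represented in $A$) or rejected, in which case its marginal gain was below $f(A)/K$ at decision time, and the at most $K$ such contributions sum to at most $f(A)$, giving $f(O)\le 2f(A)$. To convert this into a feasible output, the algorithm keeps the last $cK$ elements (value at least $(1-1/\exp(1))\cdot OPT/2$ by the concentration step) and partitions them into at most $c$ blocks of size $K$; since a monotone submodular $f$ with $f(\emptyset)=0$ is subadditive, $f$ of the union is at most $c$ times the best block, so the returned set $S$ satisfies $f(S)\ge f(A^*)/c$. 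Assembling the constants and absorbing the trimming loss yields $f(S)\ge(1/(4c)-\varepsilon)OPT$.

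The main obstacle is that the clean charging step silently assumes the decision-time solution is a subset of the final one, so that forward submodularity gives $f(A_{\text{final}}\cup\{o\})-f(A_{\text{final}})\le f(A_{\text{time}}\cup\{o\})-f(A_{\text{time}})$; trimming breaks this, because elements present when a chunk was rejected may later be evicted, and the marginal gain of an optimal element can only increase as elements are removed. The crux is therefore the concentration lemma: one must show that every evicted element carried a negligible share of the value, so that charging against the trimmed $A_{\text{final}}$ rather than $A_{\text{time}}$ costs only an $\varepsilon$-fraction of $OPT$. This is exactly why the retention window is $\Theta(lK\log K)$ rather than $\Theta(K)$, and why the guaranteed constant degrades from the $1/2$ of the idealised no-trimming analysis to $1/4$ (before the $1/c$ partition loss) plus the additive $\varepsilon$ slack.
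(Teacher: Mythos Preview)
The paper does not contain a proof of this theorem. The statement is quoted as a known result with the citation \cite{kuhnle/2021} attached, and the paper defers entirely to that reference; there is no \texttt{proof} environment following the theorem, and the surrounding text merely summarises QuickStream's behaviour before noting that it was not included in the experiments. Consequently there is nothing in the paper to compare your proposal against.

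On the substance of your outline itself: the resource/single-pass part is fine and is indeed read directly off Algorithm~\ref{fig:QuickStream-appendix}. For the ratio, your high-level structure (geometric growth $f(A_{\text{new}})\ge(1+1/K)f(A_{\text{old}})$, concentration of value in the most recent $\Theta(lK\log K)$ chunks, a threshold-charging bound against $OPT$, and a subadditivity loss of $1/c$ from the final partition) is the natural skeleton. You also correctly flag the genuine obstacle: rejected chunks are compared against the solution at decision time, not the final trimmed $A$, so the marginal of an optimal element can grow after eviction and the naive $f(O)\le 2f(A)$ step needs the concentration lemma to repair it. What remains a sketch rather than a proof is the precise accounting that turns ``degrades from $1/2$ to $1/4$'' into an inequality: you assert the constant but do not derive it, and the interaction between the $(1-1/e)$ concentration factor, the $\varepsilon$ trimming loss, and the $1/c$ partition loss is not assembled into the claimed $1/(4c)-\varepsilon$. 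To complete the argument you would need to make that bookkeeping explicit, which is presumably what the cited reference does.
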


% \textbf{SieveStreaming++:} Kazemi et al. proposed in \cite{Kazemi/etal/2019} an extension of the SieveStreaming algorithm called SieveStreaming++. As outlined previously, SieveStreaming relies on the accurate estimation of the interval $[m, Km]$ to sample thresholds accordingly. 
% The authors point out, that the currently best performing sieve $S_v = \arg\max_v \{f(S_v)\}$ offers a better lower bound for the function value and propose to use $[\max_v\{f(S_v)\}, K\cdot m]$ as the interval for sampling thresholds. This results in a more dynamic algorithm, in which sieves are removed once they are outperformed by other sieves and new sieves are introduced to make use of the better estimation of $OPT$. 
% SieveStreaming++ manages $\mathcal O(\log K/\varepsilon)$ sieves in parallel and therefore makes $\mathcal O(\log K/\varepsilon)$ function-queries per item. Note that even though the algorithm manages $\mathcal O(\log K/\varepsilon)$ sieves, it only requires $\mathcal O(K / \varepsilon)$ memory because there is only one full sieve with $K$ elements in each given moment and the remaining ones each have less than $K$ elements.
%\pagebreak

\subsection{Examples of Submodular Functions}
\label{sec:submodular-examples}
Until now, we have characterized the utility function as being non-negative, monotone, and submodular, but we left open, which particular function would be well suited for a data summary on the fly. For a data summary, a diverse sets of items is desired to fully capture the data stream. In other words, we want the observations in $S$ to be most dissimilar to each other. One way to write this more formally uses a kernel function $k(e_i, e_j)$ which expresses the similarity between two items $e_i,  e_j \in S$.
A common example for such a kernel function is the RBF kernel 
\begin{equation}
\label{eq:RBF-appendix}
k(e_i, e_j) = \exp\left(-\frac{1}{2l^2}\cdot ||e_i - e_j||^2_2\right)
\end{equation}
where $l\in\mathbb R$ is a scaling constant and $||\cdot||_2$ denotes the Euclidean norm. A kernel function gives rise to the kernel matrix $\Sigma_S = [k(e_i,e_j)]_{ij}$ containing all similarity pairs of all points in $S$. This matrix has the largest values on the diagonal as items are the most similar to themselves, whereas values on the off-diagonal indicate the similarity between distinct elements and thus are usually smaller. Since we seek a comprehensive summary, we are more interested in the pairs with values near $0$ on the off-diagonal of $\Sigma_S$. This intuition has been formally
handled in the context of the Informative Vector Machine (IVM) 
\cite{lawrence/etal/2003}. The IVM is a Gaussian Process (GP) \cite{Williams/Rasmussen/2006}, which greedily selects a subset of data points and keeps track of the GPs posterior distribution. Based on a diversity argument, the authors propose to iteratively select that point, which covers the training data best. This intuition can be formalized in maximizing the logarithmic determinant of the kernel matrix:
\begin{equation}
  \label{eq:logdet-appendix}
  f(S) = \frac{1}{2}\log\det(\mathcal I + a \Sigma_S)
\end{equation}
where $a\in\mathbb R_+$ is a scaling parameter for numerical robustness and $\mathcal I$ is the identity matrix.

In \cite{seeger/2004}, this function is shown to be submodular. Its function value does not depend on $V$, but only on the choice of the kernel function $k$ and the summary size $K$. This makes it an ideal candidate for summarizing data in a streaming setting. In \cite{Buschjaeger/etal/2017}, it is proven that $m = max_{e \in V}f(\{e\}) = 1 + a K$ and that $OPT \le K\log(1+a)$ for kernels with $k(\cdot, \cdot) \le 1$. This property can be enforced for every positive definite kernel with normalization \cite{graf/borer/2001}.  
A very similar function arises in the context of Determinantal Point Processes (DPP), which model a distribution over maximally diverse sets \cite{Kulesza/Taskar/2012}. Formally, a point process $\mathcal P(S)$ is a probability measure with $\mathcal P(S) \sim \det(\Sigma)$, where $\Sigma_S$ is the covariance matrix for $S$. It follows that a maximum a posteriori (MAP) estimation for $\mathcal P(S)$ requires the maximization of $\det \Sigma_S$. Taking the logarithm of this results in Equation \ref{eq:logdet-appendix} which is submodular. Thus, submodular maximization enables efficient MAP estimation for DPP. 

Last, we note that it is always possible to transform a submodular function depending on a large ground-set $V$ to a function that is evaluated only on some sample $W \subseteq V$. Badanidiyuru et al. give an approximation analysis in \cite{Badanidiyuru/etal/2014} of randomly sampling $W$ by using Hoeffding's inequality which shows that the approximation error between $f_V$ and $f_W$ reduces with $\mathcal O(|W|\log |W|)$. A more evolved discussion of this approach is given by Goemans et al. in \cite{goemans/etal/2009} which results in the same error rate by using an ellipsoid approximation for any submodular function $f$. Iyer et al. indicate that the approximation error can be further improved for certain classes of submodular functions \cite{iyer/etal/2013} by introducing the curvature $\kappa$ of $f$:
$$
\kappa = 1 - \min_{e\in V} \frac{\Delta_f(e|V\{e\})}{f(\{e\})}
$$
Intuitively the curvature of a submodular function measures its modularity, which is the degree of $f$ in which it can be approximated by the sum of simpler submodular functions: $f(S) \approx \sum_i f_i(S)$. The authors show that given a curvature $\kappa$ the approximation ratio of a simpler function $f_W$ depending only on $W$ instead of $V$ is $\mathcal O\left(\frac{\sqrt{|W|}\log |W|}{1+(\sqrt{|W|}\log(|W|)-1)(1-\kappa)}\right)$. Thus, every submodular function depending on the entire ground-set $V$ can be sufficiently approximated with a sample $W \subseteq V$. 

\section{The Three Sieves Algorithm}
\label{sec:three-sieves-appendix}

We recognize, that SieveStreaming and its extensions offer a worst-case guarantee on their performance and indeed they can be consider optimal providing an approximation guarantee of $\frac{1}{2} - \varepsilon$ under polynomial memory constraints \cite{feldman/etal/2020}. However, we also note that this worst case often includes pathological cases, whereas practical applications are usually much more well-behaved. One common practical assumption is, that the data is generated by the same source and thus follows the same distribution (e.g. in a given time frame). In this paper, we want to study these better behaving cases more carefully and present an algorithm which improves the approximation guarantee, while reducing memory and runtime costs in these cases. More formally, we will now assume that the items in the given sample (batch processing) or in the data stream (stream processing) are independent and identically distributed (iid). Note, that we do \textit{not} assume any specific distribution. For batch processing this means, that all items in the data should come from the same (but unknown) distribution and that items should not influence each other. From a data-streams perspective this assumptions means, that the data source will produce items from the same distribution which does not change over time. Hence, we specifically \textit{ignore} concept drift and assume that an appropriate concept drift detection mechanism is in place, so that summaries are e.g. re-selected periodically. We will study streams with drift in more detail in our experimental evaluation. 
%Please note, that in this case we do \textit{not} assume that all possible samples come from the same distribution, but we merely assume that the one sample we have given is consistent in a sense, that all items come from \textit{the same} distribution. This is true for \textit{all} data samples, where data items are independent from each other, as we could simply define the overall distribution as a mixture of simpler distributions. 
We now use this assumption to derive an algorithm with $(1-\varepsilon)(1-1/\exp(1))$ approximation guarantee in high probability:

SieveStreaming and its extension, both, manage $\mathcal O(\log K / \varepsilon)$ sieves in parallel, which quickly becomes unmanageable even for moderate choices of $K$ and $\varepsilon$. We note the following behavior of both algorithms: Many sieves in SieveStreaming have \emph{too small a novelty-threshold} and quickly fill-up with uninteresting events. SieveStreaming++ exploits this insight by removing small thresholds early and thus by focusing on the most promising sieves in the stream. On the other hand, both algorithms manage sieves with \emph{too large a novelty-threshold}, so that they never include any item. Thus, there are only a few thresholds that produce valuable summaries. We exploit this insight with the following approach: Instead of using many sieves with different thresholds we use only a single summary and carefully calibrate the threshold. To do so, we start with a large threshold that rejects most items, and then we gradually reduce this threshold until it accepts some - hopefully the most informative - items. 
The set $O = \{(1+\varepsilon)^i \mid i \in \mathbb{Z}, m \le (1+\varepsilon)^i \le K \cdot m\}$ offers a somewhat crude but sufficient approximation of $OPT$ (c.f. \cite{Badanidiyuru/etal/2014}). We start with the largest threshold in $O$ and decide for each item if we want to add it to the summary or not. If we do not add any of $T$ items (which will be discussed later) to $S$ we may lower the threshold to the next smallest value in $O$ and repeat the process until $S$ is full. 

The key question now becomes: How to choose $T$ appropriately? If $T$ is too small, we will quickly lower the threshold and fill up the summary before any interesting item arrive that would have exceeded the original threshold. If $T$ is too large, we may reject interesting items from the stream. Certainly, we cannot determine with absolute certainty when to lower a threshold without knowing the rest of the data stream or knowing the ground set entirely, but we can do so with high probability. More formally, we aim at estimating the probability $p(e | f, S, v)$ of finding an item $e$ which exceeds the novelty threshold $v$ for a given summary $S$ and function $f$. Once $p$ drops below a user-defined certainty margin $\tau$
$$
p(e | f, S, v) \le \tau
$$
we can safely lower the threshold. This probability must be estimated on the fly.
%Now, we have transformed the original problem of choosing the right threshold to that of choosing the right length of $T$ and arrive at the problem of estimating the probability of making the right choice. 
%This probability must be estimated on the fly. %Per construction we only observe negative outcomes for it: 
Most of the time, we reject $e$ so that $S$ and $f(S)$ are unchanged and we keep estimating $p(e | f, S, v)$ based on the negative outcome. If, however, $e$ exceeds the current novelty threshold we add it to $S$ and $f(S)$ changes. In this case, we do not have any estimates for the new summary and must start the estimation of $p(e | f, S, v)$ from scratch. 
Thus, with a growing number of rejected items $p(e | f, S, v)$ tends to become close to $0$ and the key question is how many observations do we need to determine -- with sufficient evidence -- that $p(e | f, S, v)$ will be $0$. 

The computation of confidence intervals for estimated probabilities is a well-known problem in statistics. For example, the confidence interval of binominal distributions can be approximated with normal distributions, Wilson score intervals, or Jeffreys interval. Unfortunately, these methods usually fail for probabilities near $0$  \cite{brown/etal/2001}. However, there exists a more direct way of computing a confidence interval for heavily one-sided binominal distribution with probabilities near zero and iid data \cite{Jovanovic/Levy/1997}. The probability of not adding one item in $T$ trials is: 
$$
\alpha = \left(1-p(e | f, S, v) \right)^T \Leftrightarrow \ln\left(\alpha\right) = T \ln \left(1-p(e | f, S, v) \right)
$$

A first order Taylor Approximation of $\ln(1-p(e | f, S, v) )$ reveals that \\ $\ln \left(1-p(e | f, S, v) \right) \approx -p(e | f, S, v) $
and thus $\ln\left(\alpha\right) \approx T (-p(e | f, S, v) )$ leading to:
\begin{equation}
    \label{eq:RuleOfThree-appendix}
  \frac{-\ln\left(\alpha\right)}{T} \approx p(e | f, S, v) \le \tau
\end{equation}

Therefore, the confidence interval of $p(e | f, S, v) $ after observing $T$ events is $\left[0, \frac{-\ln\left(\alpha\right)}{T}\right]$. The $95\%$ confidence interval of $p(e | f, S, v) $ is $\left[0, -\frac{\ln(0.05)}{T}\right]$ which is approximately $[0, 3/T]$ leading to the term ``Rule of Three'' for this estimate \cite{Jovanovic/Levy/1997}. For example, if we did not add any of $T = 1000$ items to the summary, then the probability of adding an item to the summary in the future is below $0.003$ given a $\alpha = 0.95$ confidence interval. We can use the Rule of Three to quantify the certainty that there is a very low probability for finding a novel item in the data stream after observing $T$ items. Note that we can either set $\alpha$, $\tau$ and use Eq. \ref{eq:RuleOfThree-appendix} to compute the appropriate $T$ value. Alternatively, we may directly specify $T$ as a user parameter instead of $\alpha$ and $\tau$, thereby effectively removing one hyperparameter. We call our algorithm ThreeSieves and it is depicted in Algorithm \ref{fig:ThreeSieves-appendix}. Its theoretical properties are presented in Theorem \ref{th:ThreeSieve-appendix}.  
 
\begin{algorithm}
	\begin{algorithmic}[1]
    \STATE $O \gets \{(1+\varepsilon)^i \mid i \in \mathbb{Z}, m \le (1+\varepsilon)^i \le K \cdot m\}$
    \STATE $v \gets \max(O);~O \gets O\setminus\{\max(O)\}$
    \STATE $S \gets \emptyset; ~t \gets 0$
    \FOR{next item $e$}
        \IF{$\Delta_f(e | S) \ge \frac{v/2 - f(S)}{K - |S|}$ and $|S| < K$ }
          \STATE{$S \gets S \cup \{e\};~t \gets 0$}
        \ELSE
          \STATE{$t \gets t + 1$}
          % \If{$\frac{-\ln(\alpha)}{T} < \tau$}
          \IF{$t \ge T$}
            \STATE $v \gets \max(O);~O \gets O\setminus\{\max(O)\}; t\gets 0$
          \ENDIF
        \ENDIF
    \ENDFOR	
    \STATE \textbf{return} $S$
	\end{algorithmic}
	\caption{ThreeSieves algorithm.}
	\label{fig:ThreeSieves-appendix}
\end{algorithm}

\begin{theorem}
  \label{th:ThreeSieve-appendix}
  ThreeSieves has the following properties:
  \begin{itemize}
    \item Given a fixed groundset $V$ or an infinite data-stream in which each item is independent and identically distributed (iid) it outputs a set $S$ such that $|S| \le K$ and with probability $(1 - \alpha)^K$ it holds for a non-negative, monotone submodular function $f$: $f(S) \ge (1- \varepsilon) (1 - 1/\exp(1))OPT$ 
    \item It does $1$ pass over the data (streaming-capable) and stores at most $\mathcal O\left(K\right)$ elements
  \end{itemize}
\end{theorem}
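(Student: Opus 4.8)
The plan is to prove the two bullet points separately, treating the resource bounds as routine and concentrating on the approximation guarantee. For the second bullet I would simply read off from Algorithm~\ref{fig:ThreeSieves-appendix} that the main loop visits each incoming item exactly once (one pass, streaming-capable) and that the only data actually retained is the single summary $S$, whose cardinality the acceptance test caps at $K$; the auxiliary set $O$ of candidate thresholds holds $\mathcal O(\log K/\varepsilon)$ scalars rather than ground-set elements, so the element count stays $\mathcal O(K)$.

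For the approximation guarantee I would reduce everything to the performance of Greedy. Let $g_1\ge g_2\ge\dots\ge g_K$ be the marginal gains Greedy collects (non-increasing by submodularity) and write $f_G=\sum_i g_i$, so that Theorem~\ref{th:Greedy-appendix} gives $f_G\ge(1-1/\exp(1))OPT$. The target is then to show that, with probability $(1-\alpha)^K$, the element ThreeSieves accepts in its $i$-th addition has marginal gain at least $(1-\varepsilon)g_i$; a telescoping expansion $f(S)=\sum_i\Delta_f(e_i\mid S_{i-1})$ then yields $f(S)\ge(1-\varepsilon)f_G\ge(1-\varepsilon)(1-1/\exp(1))OPT$. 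Two ingredients feed this bound. First, because $O$ is a $(1+\varepsilon)$-geometric grid, for each threshold level the algorithm sweeps there is a grid point matching the corresponding ideal value up to a factor $(1-\varepsilon)$, which is where the $(1-\varepsilon)$ loss enters. Second, the decreasing sweep of thresholds together with the wait-counter is what should let the accepted gains track the Greedy gains rather than merely an $OPT/2$ fraction: at each stage the algorithm holds the largest still-viable threshold, so under the iid assumption the first admitted item plays the role of Greedy's argmax.

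The probabilistic core is the Rule of Three (Eq.~\ref{eq:RuleOfThree-appendix}). I would argue one addition at a time: fix the current summary $S$ and the largest swept threshold at which Greedy's next element is admissible. Under the iid assumption each incoming item is an independent trial for whether it exceeds the current threshold, so a run of $T$ consecutive rejections is a one-sided binomial tail of probability $(1-p)^T$; equating this with the significance level $\alpha$ and using the first-order expansion $\ln(1-p)\approx-p$ (as in Eq.~\ref{eq:RuleOfThree-appendix}) certifies that, except on an event of probability $\alpha$, the algorithm does not drop the threshold below the Greedy level before admitting a comparable item. Because $f(S)$ and the counter are reset after every successful addition, the $K$ additions behave as independent trials, so the per-step guarantee $1-\alpha$ compounds to $(1-\alpha)^K$ over the entire run.

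The main obstacle I anticipate is the second ingredient above: rigorously showing that the swept acceptance rule $\Delta_f(e\mid S)\ge (v/2-f(S))/(K-|S|)$ actually recovers the Greedy marginal gains up to the factor $(1-\varepsilon)$, rather than only a $1/2$-fraction as in the worst-case SieveStreaming analysis. This is delicate because the grid $O\subseteq[m,Km]$ and the factor $v/2$ constrain how closely the effective threshold can approach $g_i$, so the argument must lean essentially on the iid structure and on the fact that ThreeSieves always operates at the largest still-viable threshold. A secondary point requiring care is justifying the cross-step independence that turns the per-step $1-\alpha$ into the product $(1-\alpha)^K$, since the acceptance probabilities at successive stages depend on the random summary built so far.
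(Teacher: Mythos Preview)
Your proposal follows essentially the same route as the paper: telescope $f(S)$ into the marginal gains of the $K$ accepted items, compare each to the corresponding Greedy gain via the $(1+\varepsilon)$-grid to absorb the $(1-\varepsilon)$ loss, use the Rule of Three to bound the per-addition failure probability by $\alpha$, and multiply over the $K$ additions to obtain $(1-\alpha)^K$. The two difficulties you flag---that the acceptance rule with $v/2$ must track Greedy's gains rather than only the SieveStreaming $1/2$-level, and the cross-step independence---are precisely the points the paper's own proof treats informally (it asserts $(1-\varepsilon)v_i^*\le v_i\le v_i^*$ by citing \cite{Badanidiyuru/etal/2014} and multiplies the per-step probabilities without further justification), so your outline is already at the paper's level of detail.
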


\begin{proof}
ThreeSieves does one pass over the data and inspects each data item once. It manages only one summary $S$ that stores at most $K$ elements. Note, that the set $O$ does not need to be materialized in memory before running the algorithm, but we can compute the appropriate thresholds from $O$ on the fly once they are required in line $10$. Thus, ThreeSieves uses $\mathcal O(K)$ memory. 
  
The Greedy Algorithm selects that element with the largest marginal gain in each round. Suppose we know the marginal gains $v_i = \Delta(S|e_i)$ for each element $e_i\in V$ selected into $S$ before running the Greedy algorithm. Then we can simulate the Greedy algorithm by stacking $K$ copies of the dataset consecutively and by comparing the gain $\Delta_f(e|S)$ of each element $e\in V$ against the respective marginal gain $v_i$. Let $v_1,\dots, v_K$ be the series of these gains. 

Now, consider ThreeSieves. Let $O$ be its set of thresholds. Assume that $v^*_1,\dots,v^*_K \in O$. Let $v^*_1$ denote the first threshold used by ThreeSieves before any item has been added to $S$. Then by the statistical test of ThreeSieves it holds with probability $1-\alpha$ that
$$
P(v_1 \not= v^*_1) \le \frac{-\ln(\alpha)}{T} \Leftrightarrow P(v_1 = v^*_1) > 1 - \frac{-\ln(\alpha)}{T}
$$
Consequently, it holds with probability $(1-\alpha)^K$ 
$$
P\left(v_1 = v^*_1,\dots,v_K = v^*_K \right) > \left(1-\frac{-\ln(\alpha)}{T}\right)^K
$$

So far we assumed that $O$ contains the thresholds that the Greedy algorithm would use during optimization. Knowing these thresholds is part of the problem we have to solve, making it a kind of a chicken and egg problem. First, ThreeSieves will always choose a smaller or equally large threshold compared to Greedy, because Greedy always picks the element with the largest gain. Second, by the construction of $O$ it holds that $(1-\varepsilon) v^*_i \le v_i \le v^*_i$, which follows from submodularity as well as the fact that we lower the thresholds in the comparison ``$\Delta_f(e | S) \ge \frac{v/2 - f(S)}{K - |S|}$'' (the formal proof for this statement can be found in \cite{Badanidiyuru/etal/2014}, section 5.2).

Now, let $e_K$ be the element that is selected by ThreeSieves after $K-1$ items have already been selected. Then, the function value $f(S_K)$ is a function of marginal gains:
$$
f(S_K) = f(S_{K-1} \cup \{e_K\}) = f(S_{K-1}) - \Delta(e_k|S_{K-1})
$$
Let $S_0 = \emptyset$ and recall that by definition $f(\emptyset) = 0$, then it holds with probability $(1-\alpha)^K$:
\begin{align*}
f(S_K)  &= f(\emptyset) + \sum_{i=1}^K \Delta(e_i|S_{K-1}) = \sum_{i=1}^K v_i
        \ge \sum_{i=1}^K \left(1-\varepsilon\right) v^*_{i} = \left(1-\varepsilon\right) \sum_{i=1}^K v^*_{i} = \left(1-\varepsilon\right) f_{G}(S_K) \\
        &\ge \left(1-\varepsilon\right) \left(1-1/\exp(1)\right) OPT 
        = \left(1 - 1/\exp(1) - \varepsilon + \varepsilon/\exp(1) \right) 
        > \left(\frac{1}{2} - \varepsilon \right) OPT 
\end{align*}
where $f_{G}$ denotes the solution of the Greedy algorithm. 
\end{proof}

Similar to SieveStreaming, ThreeSieves tries different thresholds until it finds one that fits best for the current summary $S$, the data $V$, and the function $f$. In contrast, however, ThreeSieves is optimized towards minimal memory consumption by maintaining one threshold and one summary at a time. If more memory is available, one may improve the performance of ThreeSieves by running multiple instances of ThreeSieves in parallel on different sets of thresholds. So far, we assumed that we know the maximum singleton value $m = max_{e\in V}f(\{e\})$ beforehand. If this value is unknown before running the algorithm we can estimate it on-the-fly without changing the theoretical guarantees of ThreeSieves. As soon as a new item arrives with a new $m_{new} > m_{old}$ we re-set the current summary and use the new upper bound $K \cdot m_{new}$ as the starting threshold. It is easy to see that this does not affect the theoretical performance of ThreeSieves: Assume that a new item arrives with a new maximum single value $m_{new}$. Then, all items in the current summary have a smaller singleton value $m_{old} < m_{new}$. The current summary has been selected based on the assumption that $m_{old}$ was the largest possible value, which was invalidated as soon as $m_{new}$ arrived. Thus, the probability estimate that the first item in the summary would be `out-valued' later in the stream was wrong since we just observed that it is being out-valued. To re-validate the original assumption we delete the current summary entirely and re-start the summary selection. 

\section{Experimental Evaluation}
\label{sec:experiments-three-sieves-appendix}
No changes. 

\section{Summary Selection in Astrophysics}
\label{sec:fact}
In this section, we want to highlight a real-world application of summarizing massive astrophysical data in the context of the FACT telescope \cite{Anderhub/etal/2013a}. 
Astrophysics studies celestial objects of several of hundred billion light-years away from the earth by observing the energy beams emitted by these sources using telescopes such as FACT (The First G-APD Cherenkov Telescope) \cite{anderhub/etal/2011}. 
Figure \ref{fig:shower} shows an air shower triggered by some cosmic ray beam, emitting Cherenkov light that is captured by the FACT telescope. 
Green indicates the telescopes surface, whereas blue indicates the amount of light hitting the sensors. Red indicates padding pixels that are used to form quadratic images from the telescope (discussed in more detail later).
The FACT telescope records roughly $60$ events per second, where each event amounts up to $3$ MB of raw data, resulting in a data rate of about $180$ MB/s \cite{Buschjaeger/etal/2020}. 
While physicists closely monitor the behavior of the telescope they cannot possibly review $60$ events per second in real-time. Thus, we propose to provide them with a data summary of e.g. last night's events so that she can review the summary. 
Given an interesting event $e_i$ in the summary, we then present similar events to the physicist by computing the similarity between $e_i$ and the remaining events $e \in V$. More formally, each event $e\in V$ is assigned to a reference point $e_i \in S$ with the largest similarity $k(e_i, e)$ in the summary. Upon request, we then present all events assigned to the reference point $e_i$ for further inspection. This way, the physicist can quickly get a general grasp of the last measurements without looking at all events. 
We want to highlight this approach on some real-world data more carefully. The goal of this experiment is to extract meaningful summaries from real-world data and present them to a domain expert who assesses the usefulness of the summary. 
Due to space reasons we only discuss one summary in this paper. Please, note, however, that the presented method is currently being deployed to automatically extract summaries and present them to domain experts in a dashboard for easy browsing and inspection.
For this experiment, we take the publicly available Crab Nebula observation data in which the FACT telescope was directed at the well known Crab Nebula \cite{Biland/etal/2014a}. The data\footnote{\url{https://fact-project.org/data/}} consist of 17.7 hours of total recording with $3,972,043$ recorded events. We focus on the data from 01-11-2013 which contains $676,331$ events. For the previous runtime experiments, we randomly sampled $200,000$ observations from this set, whereas in this experiment now we use all available observations. 

\begin{figure}[b!]
\centering
\begin{tikzpicture}[scale=0.8,transform shape]
\begin{scope}[scale=0.8]
\clip[] (-3.4,-2) rectangle (3.5,5.25);
% air shower producing cherenkov light

\begin{scope}[shift={(-2.5,-1.0)},scale=0.5]
\draw (0.3,-0.35) -- (0.0,-1);
\draw (0.5,-0.35) -- (0.8,-1);

\begin{scope}[rotate=-20]
\draw (0.15,-0.1) -- (0.4,0.55);
\draw (0.85,-0.1) -- (0.6,0.55);

\node (CAM) at (0.5,0.6) {};

\draw[fill=black!10] (0.4,0.45) -- (0.6,0.45) -- (0.6,0.7) -- (0.4,0.7) -- (0.4,0.45);
% \draw (0.4,0.45) -- (0.6,0.45) -- (0.6,0.7) -- (0.4,0.7) -- (0.4,0.45);

\fill[fill=black!10] (0,0) to[out=300,in=240] (1,0) -- (0,0);
\draw (0,0) to[out=300,in=240] (1,0) -- (0,0);
% \draw (0,0) -- (1,0);

\end{scope}
\end{scope}

\fill[fill=brown!80,draw=brown,opacity=0.25] (-5,-2) -- (5,-2) -- (5,-1.5) to[out=178,in=2] (-5,-1.5);

\fill[fill=blue!10,draw=blue!20,opacity=0.25] (-5,1.75) to[out=5,in=175] (5,1.75) -- (5,4) to[out=175,in=5] (-5,4);

\draw[thick] (-0.05,4.26) -- (0.25,5);
\node at (0.5,5) {$\gamma$};

\node at (2,3) {\color{black!60}{\footnotesize{{Atmosphere}}}};%\textsf

\node at (-2,3.25) {\color{black!60}{\footnotesize{{Air Shower}}}};%\textsf

\node at (0.35,0.2) {\color{black!60}{\footnotesize{{Cherenkov Light}}}};%\textsf

\node[anchor=west] at (-1.8,-1.1) {\color{black!60}{\footnotesize{{Telescope}}}};%\textsf

\begin{scope}[shift={(0.1,4.6)},scale=0.2,rotate=160]
\input{figures/shower}
\end{scope}

% \fill[fill=blue,opacity=0.05,rotate=-10] (-1.25,2) ellipse (0.5 and 0.25);
\fill[fill=blue,opacity=0.1,rotate=-10] (-1.375,1.5) ellipse (0.5 and 0.25);
\fill[fill=blue,opacity=0.15,rotate=-10] (-1.5,1) ellipse (0.5 and 0.25);
\fill[fill=blue,opacity=0.2,rotate=-10] (-1.625,0.5) ellipse (0.5 and 0.25);
\fill[fill=blue,opacity=0.5,rotate=-10] (-1.75,0) ellipse (0.5 and 0.25);
% \draw[blue,opacity=0.5] (-1.5,1.5) -- (-1.4,1.7);
\end{scope}

\node at (6,1.6) {
	\includegraphics[scale=0.05]{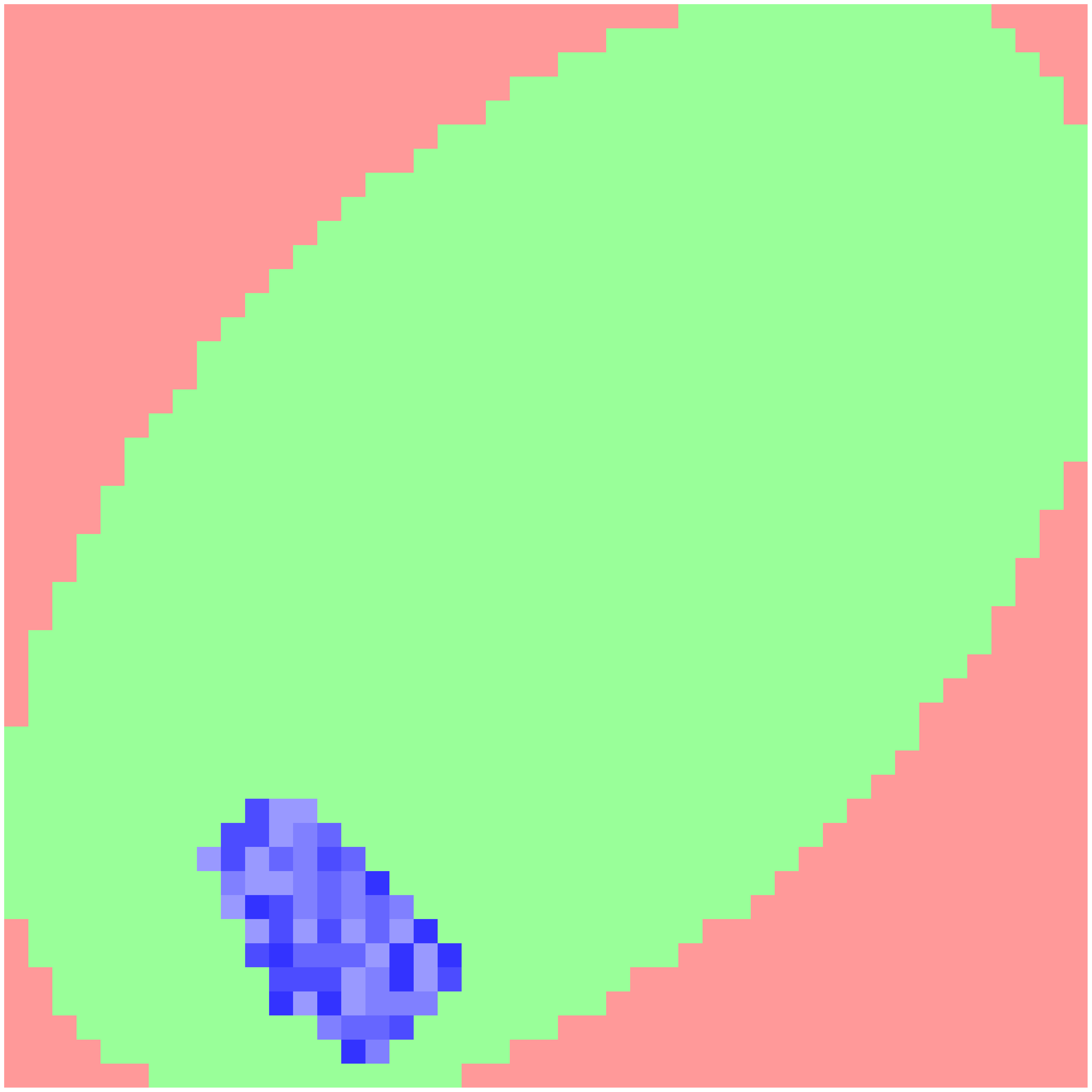}
};

\begin{scope}[shift={(0,-1)}]
\node at (6,0.35) {
	\includegraphics[scale=0.014]{figures/sensor_mapping.pdf}
};
\node at (6.75,0.35) {
	\includegraphics[scale=0.014]{figures/sensor_mapping.pdf}
};
\node at (7.5,0.35) {
	\includegraphics[scale=0.014]{figures/sensor_mapping.pdf}
};

\draw[thick,black!50] (5.65,0) rectangle (6.35,0.7);

\node at (5.25,0.35) {
	\includegraphics[scale=0.014]{figures/sensor_mapping.pdf}
};
\node at (4.5,0.35) {
	\includegraphics[scale=0.014]{figures/sensor_mapping.pdf}
};

\node at (3.75,0.35) {
	\includegraphics[scale=0.014]{figures/sensor_mapping.pdf}
};
\end{scope}

\draw[thick,black!50] (5.65,-0.3) -- (4.5,0.4);
\draw[thick,black!50] (6.35,-0.3) -- (7.5,0.4);

\draw[thick,black!50,->] (CAM) -- +(5,0);

\node at (6,-1.45) {\color{black!60}{\footnotesize{{Camera Samples (2000 MHz)}}}};%\textsf

\end{tikzpicture}
\caption{\label{fig:shower}An air shower produced by a particle beam hitting the atmosphere (left) and the FACT telescope which measures it (right side). Picture was taken from \cite{Buschjaeger/etal/2020} with courtesy of the authors.}
\end{figure}

For the FACT telescope data, a long pre-processing pipeline has been developed to calibrate and clean the data and extract meaningful high-level features \cite{bockermann/etal/2015} (previously denoted as FACT-Highlevel). While this approach offers some form of interpretability due to the high-level features, it has recently been outperformed by Deep Learning methods in terms of energy reconstruction error and gamma/hadron separation performance. Buschjäger et al. presented in \cite{Buschjaeger/etal/2020} a processing pipeline for the FACT data that trains and deploys Convolutional Neural Networks for gamma/hadron classification. We leverage their insights to train autoencoders on the raw data that embed it into a lower-dimensional space. Then, we perform the data summarization using the lower-dimensional features from the autoencoder. To the best of our knowledge, this is the first attempt to use autoencoders for this type of astrophysical data. 
The data is publicly available and our code is part of this submission. We will make it publicly available upon publication.

The authors kindly provided us with their processing pipeline which is explained in detail in \cite{Buschjaeger/etal/2020}. It works as follows: FACT can be viewed as a camera with $1440$ pixels arranged in hexagonal form. Once enough energy hits the telescope's surface to trigger recording, FACT produces a time series of $300$ nanoseconds for each pixel. We extract the number of photons arriving at each pixel in the time series by subtracting calibration measurements from the time series as often as possible until there is no signal left \cite{Mueller/etal/2017a}. The number of subtractions can be considered the number of photons that arrived during the time series. 
Figure \ref{fig:shower} shows FACT with an air shower and the resulting data. In hexagonal grids, each pixel has up to six neighbors instead of four as in regular Euclidean grids. To apply regular CNNs with Euclidean filters the data is transformed into $45 \times 45$ images in which the hexagonal grid is slightly rotated into the middle of the image. Figure \ref{fig:shower} (right side) depicts the sensor mapping. Here, the red color denotes unused pixels which are always `0`, whereas green (and blue) pixels are mapped to the corresponding sensors.  Buschjäger et al. report that previous works did not detect any performance difference between using rectangular and hexagonal filters for FACT. 

To embed the raw measurements of FACT into a lower-dimensional feature space we trained a VGG-like autoencoder architecture with an encoder using 2 convolutional layers with $16$ filters followed by 1 linear layer with $8096$ neurons and a final embedding layer. The decoder mirrors the encoder in reversed order. Each convolution layer was followed by a batch-norm layer. Between the last batch-norm layer and the linear layer max-pooling with size and stride 2 was used. As activation function we used ReLu. The network was trained to minimize the reconstruction loss in terms of the mean-squared error (MSE) over $50$ epochs with batch size $128$ using the Adam Optimizer in PyTorch \cite{Paszke/etal/2019}. We experimented with different embedding sizes $\{32,64,\dots,512\}$ and found $256$ to be a good trade-off between reconstruction error and overall run-time. The average absolute reconstruction error per pixel was $0.2$, i.e. on average $1$ photon every $5$ pixel was wrongly reconstructed. For summary extraction, we used ThreeSieves with $T = 5000$ and $\varepsilon = 0.005$. As above, we maximize the log-determinant with RBF kernel and with $l = \frac{1}{2\sqrt{0.5\cdot d}}$ and $d = 256$.

\begin{figure*}[b!]
\centering
\captionsetup[subfigure]{labelformat=empty}
    \subfloat[(1)]{
      \includegraphics[width=0.2\textwidth]{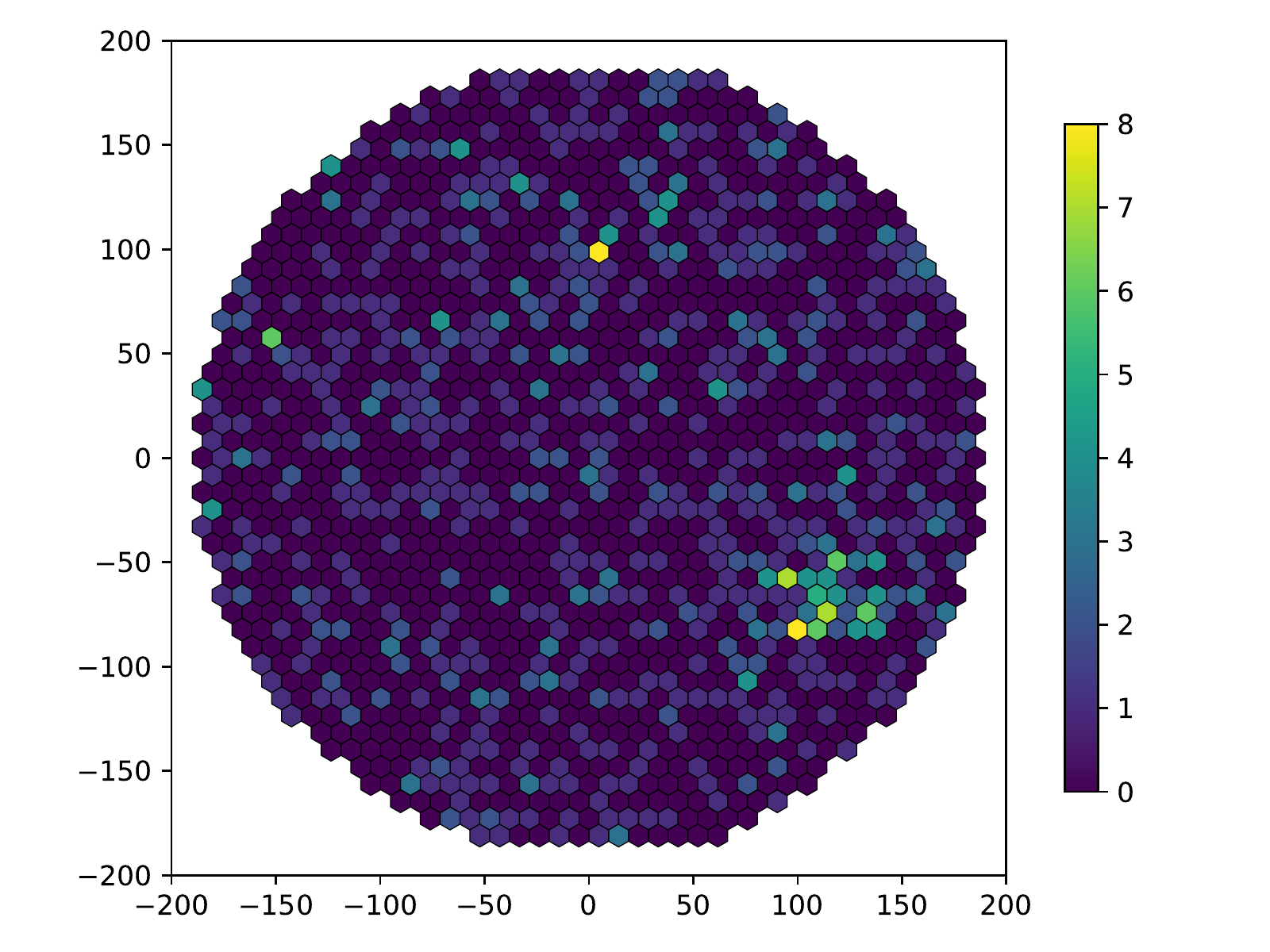}
    }
    \subfloat[(2)]{
      \includegraphics[width=0.2\textwidth]{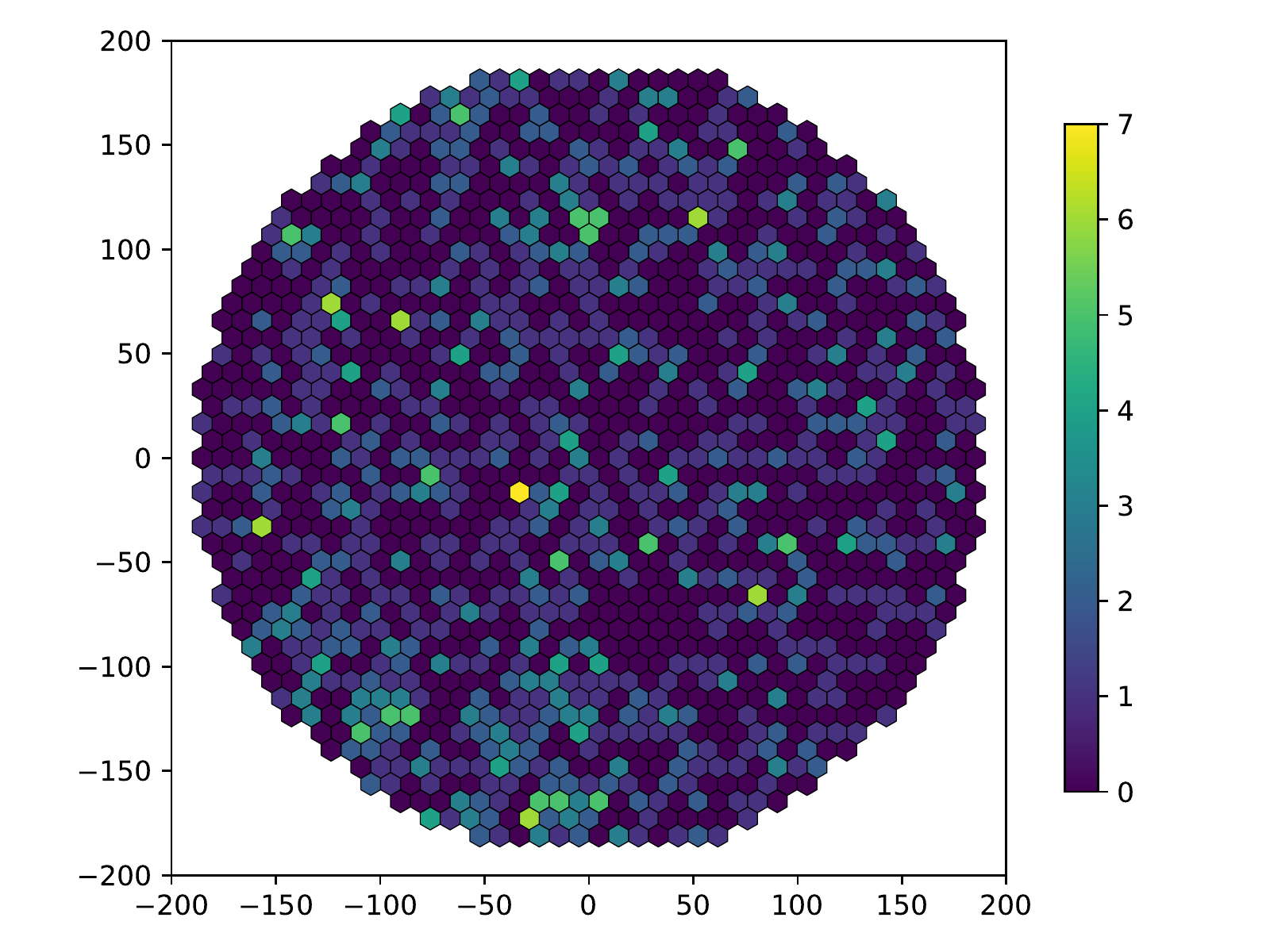}
    }
    \subfloat[(3)]{
      \includegraphics[width=0.2\textwidth]{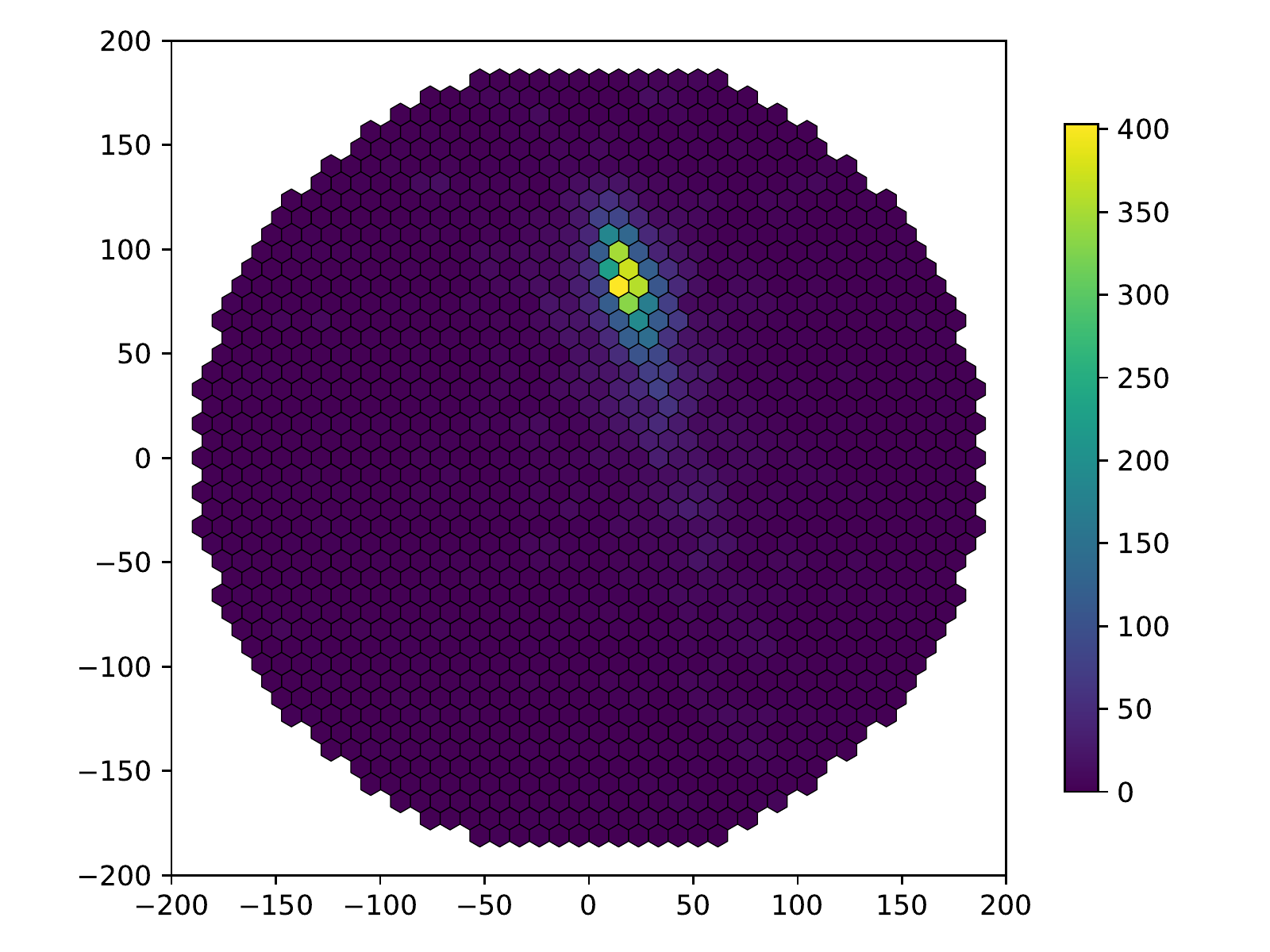}
    }
    \subfloat[(4)]{
      \includegraphics[width=0.2\textwidth]{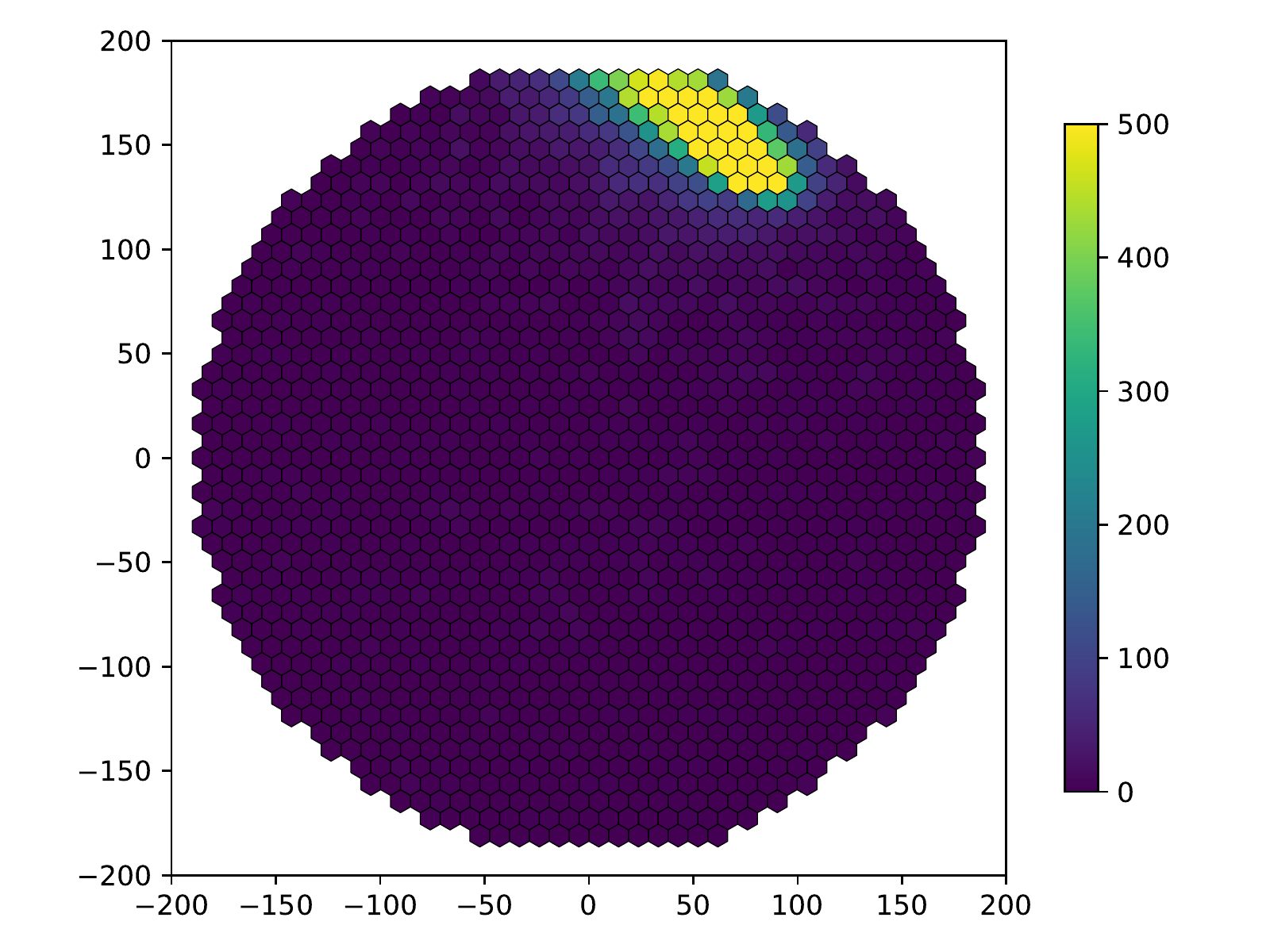}
    }
    \subfloat[(5)]{
      \includegraphics[width=0.2\textwidth]{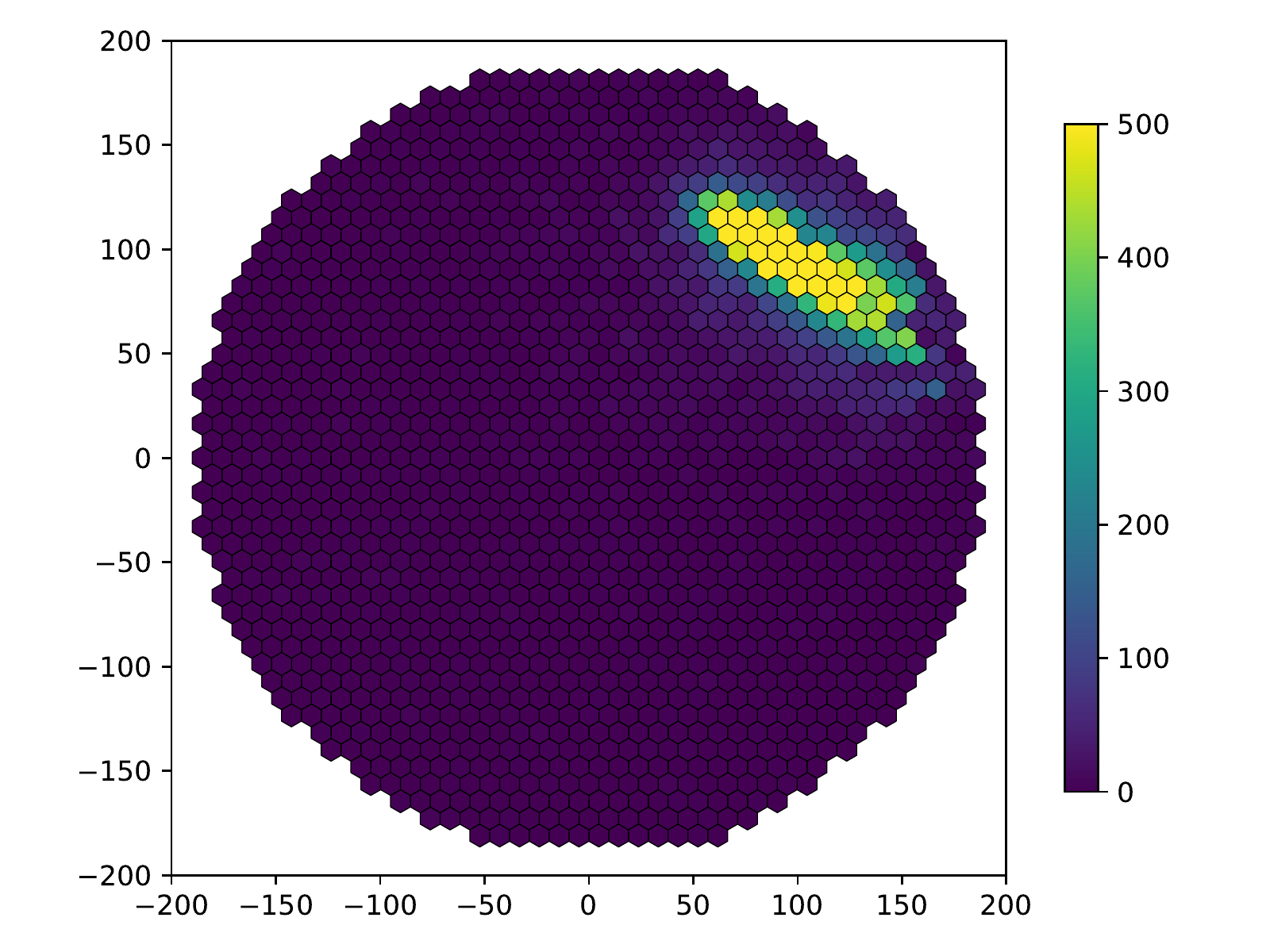}
    }
    \hspace{0mm}
    \subfloat[(6)]{
      \includegraphics[width=0.2\textwidth]{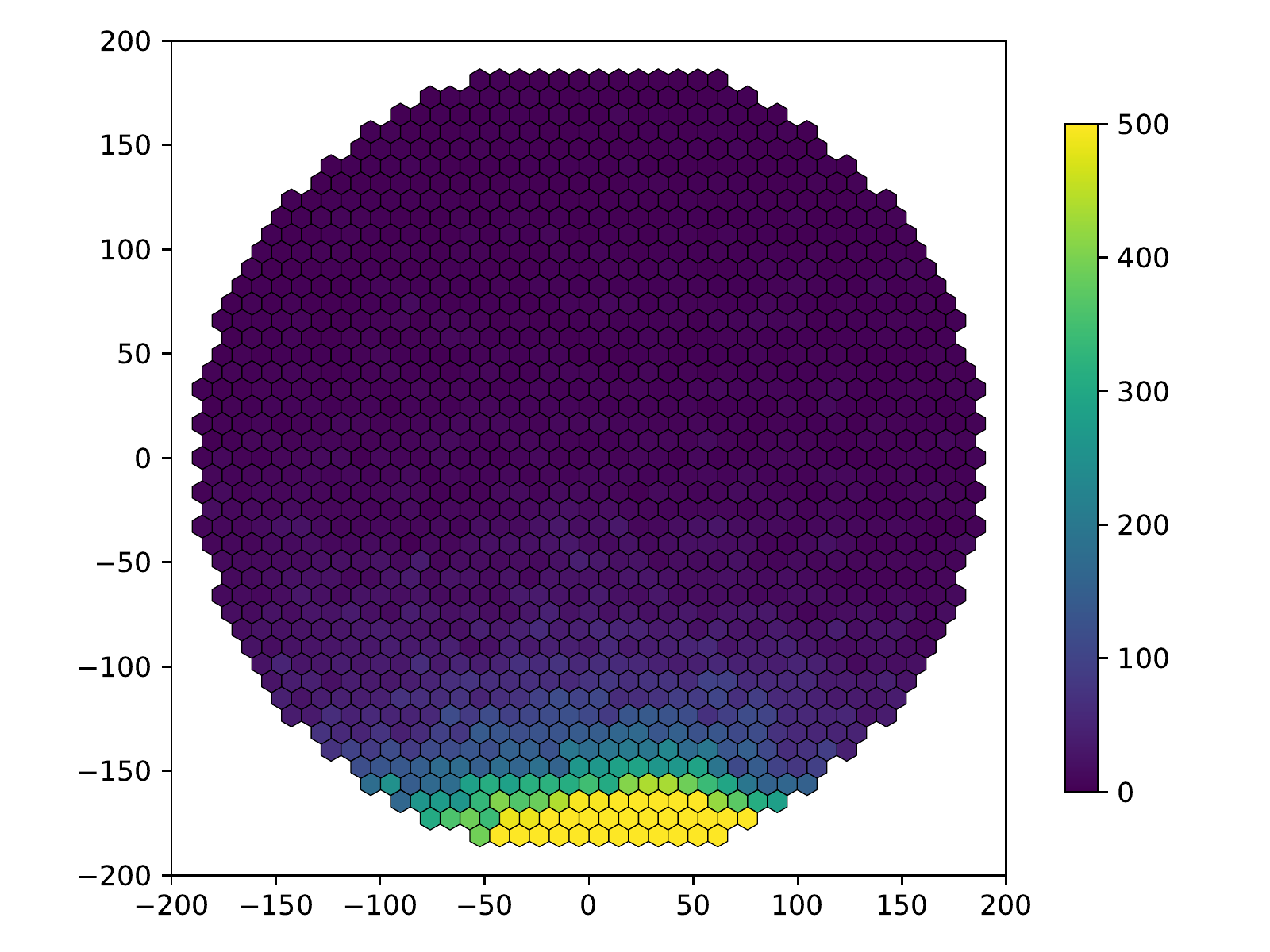}
    }
    \subfloat[(7)]{
      \includegraphics[width=0.2\textwidth]{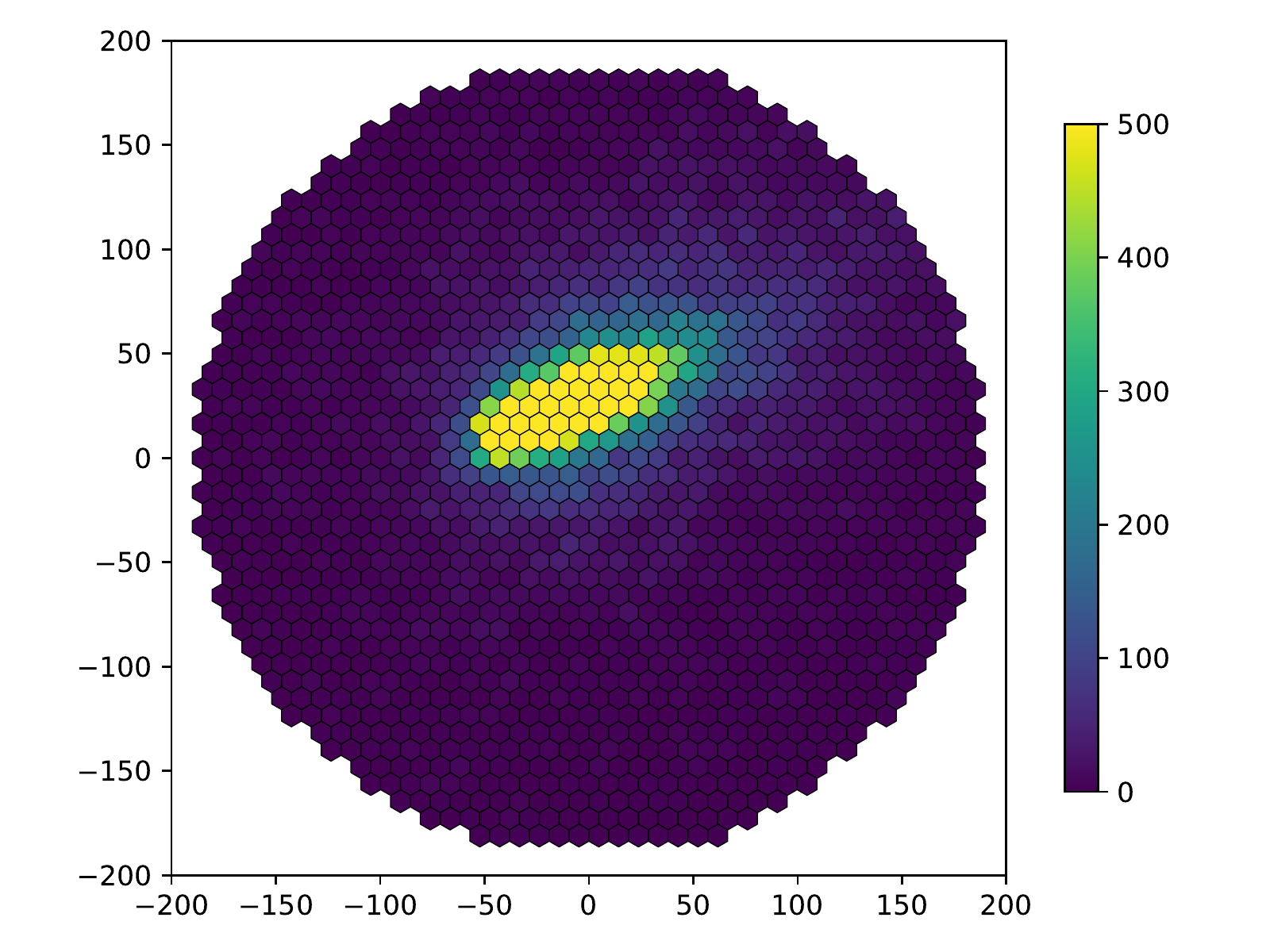}
    }
    \subfloat[(8)]{
      \includegraphics[width=0.2\textwidth]{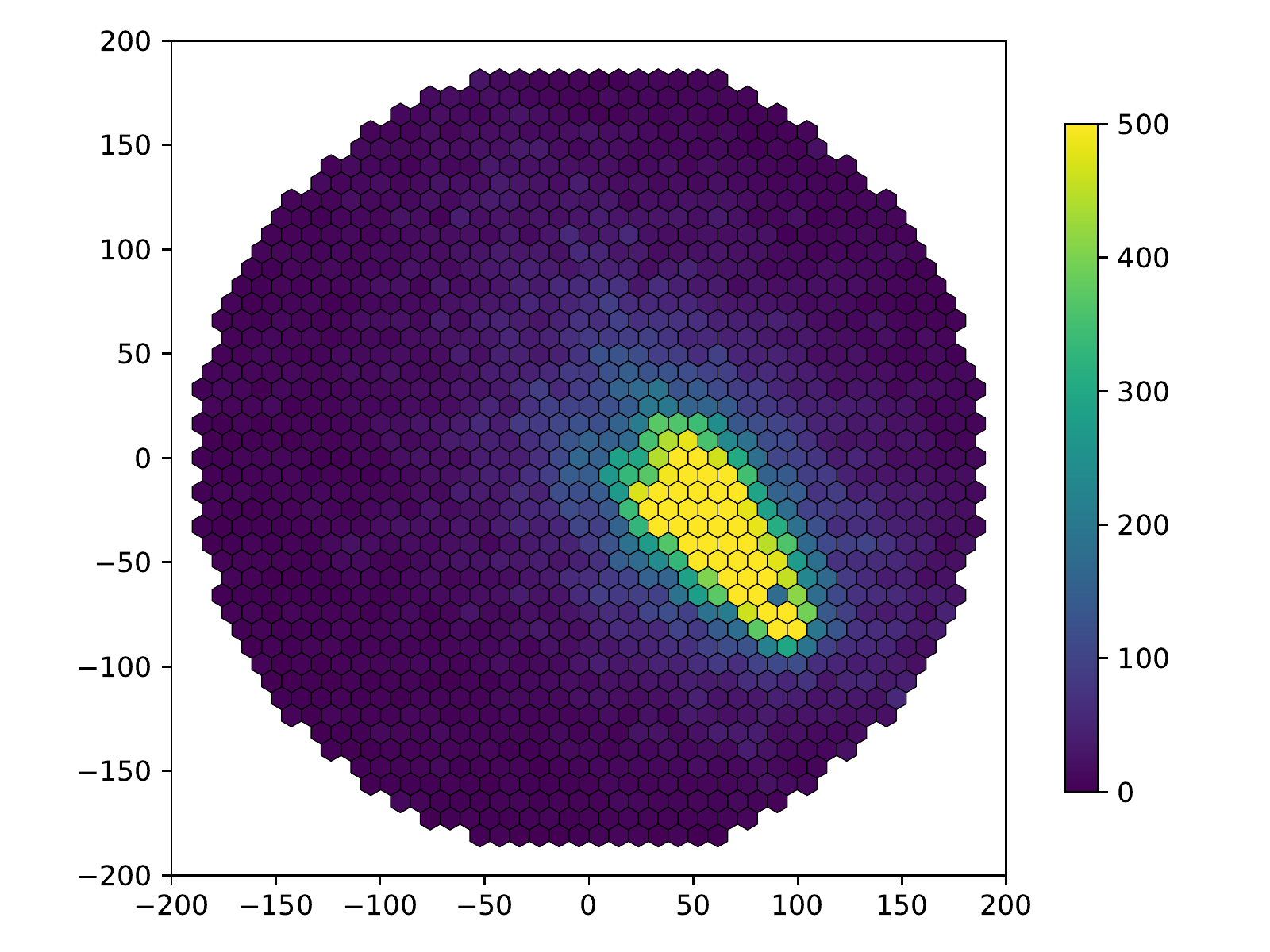}
    }
    \subfloat[(9)]{
      \includegraphics[width=0.2\textwidth]{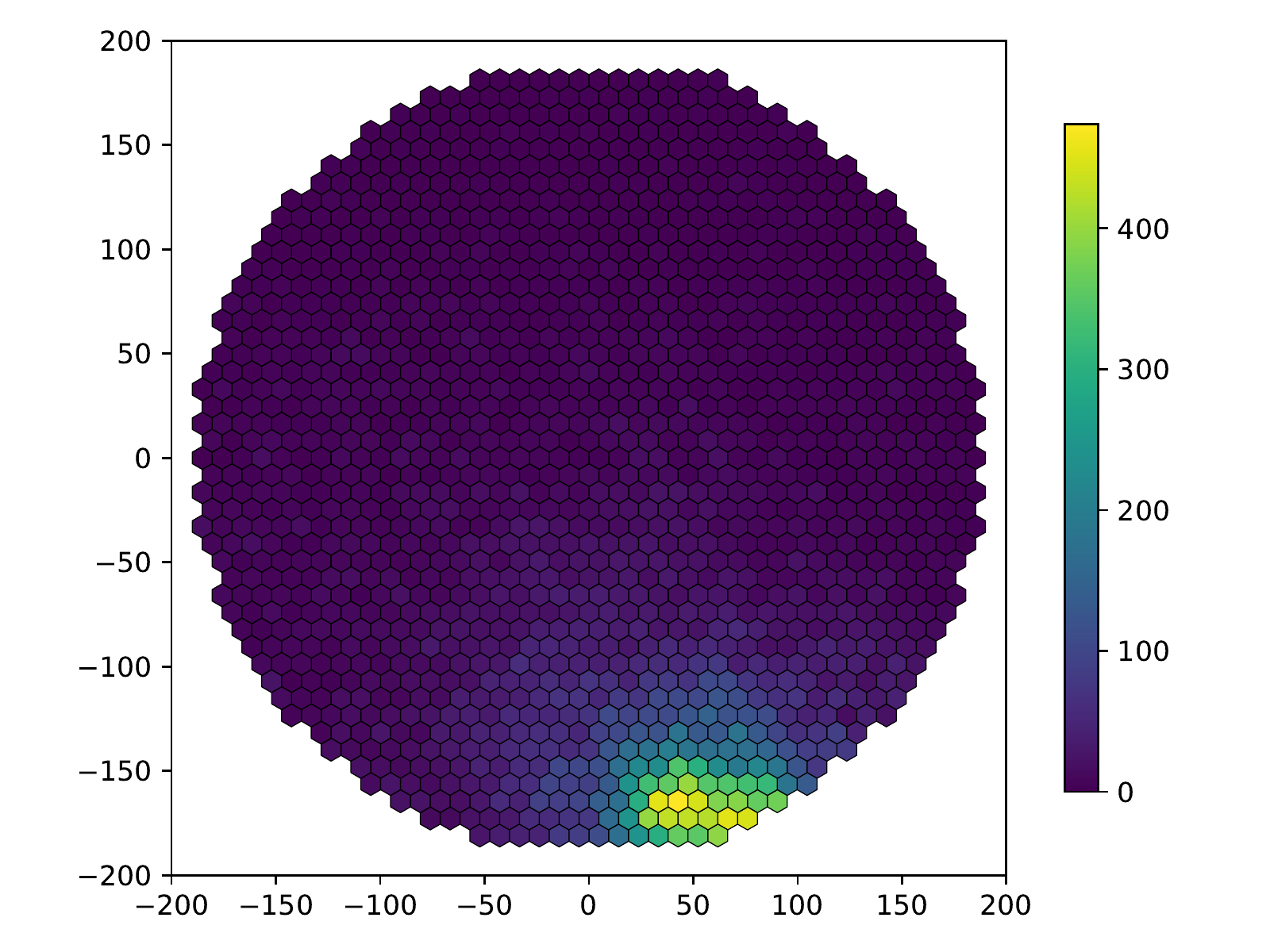}
    }
    \subfloat[(10)]{
      \includegraphics[width=0.2\textwidth]{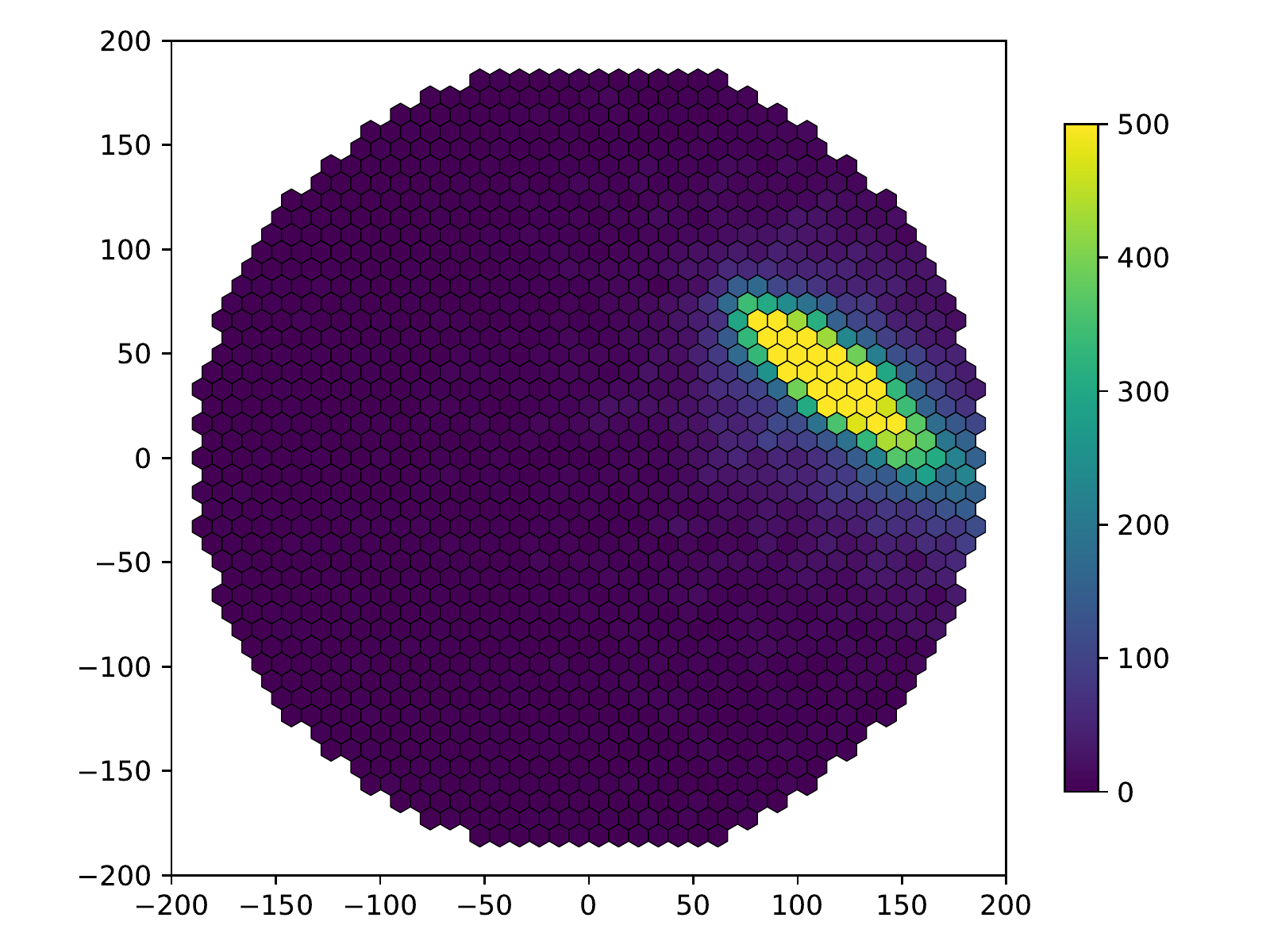}
    }
    \caption{Data summary of Crab Nebula data from 01-11-2013 computed with ThreeSieves using $\varepsilon = 0.005, T = 5000$.\label{fig:fact_summary}}
\end{figure*}

Figure \ref{fig:fact_summary} depicts the extracted summary. Each image shows the telescope's surface consisting of $1440$ sensors arranged in a hexagonal grid. Brighter colors highlight that more photons are hitting the respective sensors indicating a possible interesting event. 
A domain expert gave the following interpretation: Figure \ref{fig:fact_summary}-(2) depicts the night sky background where no actual event was measured. Figures \ref{fig:fact_summary}-1 and \ref{fig:fact_summary}-3 depict very small events with a dominant night sky background. In contrast, Figures \ref{fig:fact_summary}-5, \ref{fig:fact_summary}-8, and \ref{fig:fact_summary}-10 show a typical ellipsoid shape which indicates a high energy event. Here, both Figures \ref{fig:fact_summary}-1 and \ref{fig:fact_summary}-8 depict potential proton events due to the broader shape of the entire shower, whereas figures \ref{fig:fact_summary}-5 and \ref{fig:fact_summary}-10 clearly show gamma events.  Figures \ref{fig:fact_summary}-4, \ref{fig:fact_summary}-6, and \ref{fig:fact_summary}-9 depict so-called corner clippers, in which a shower was not completely recorded due to the position and orientation of the telescope. It is also interesting, that in figure \ref{fig:fact_summary}-8 seems to be a dead pixel (broken sensor) inside of the shower, as was indicated by the expert. 

\section{Conclusion}
\label{sec:conclusion-appendix}
No changes.

\end{document}